\documentclass{article} 
\usepackage{iclr2024_conference,times}

\usepackage{graphicx} 
\usepackage{amsmath,amssymb,amsfonts,bm}
\usepackage{amsthm}
\usepackage{hyperref}
\usepackage{url}
\usepackage{booktabs}
\usepackage{easyeqn}

\usepackage{todonotes}
\setlength{\marginparwidth}{3cm}

\newtheorem{assumption}{Assumption}

\newtheorem{lemma}{Lemma}
\newtheorem{proposition}{Proposition}
\newtheorem{theorem}{Theorem}

\DeclareMathOperator*{\argmax}{arg\,max}
\DeclareMathOperator*{\argmin}{arg\,min}

\title{Generalization error of spectral algorithms}


\author{Maksim Velikanov$^{1,2}$, Maxim Panov$^{3}$, Dmitry Yarotsky$^4$ \\
$^1$Technology Innovation Institute, $^2$Ecole Polytechnique, $^3$MBZUAI, $^4$Skoltech \\
\texttt{maksim.velikanov@tii.ae}, \; \texttt{maxim.panov@mbzuai.ac.ae}, \; \texttt{d.yarotsky@skoltech.ru}
}

\DeclareMathOperator*{\Tr}{Tr}

\newcommand{\RR}{\mathbb{R}}
\newcommand{\EE}{\mathbb{E}}
\newcommand{\EEarg}[1]{\mathbb{E}\left[#1\right]}

\newcommand{\eps}{\varepsilon}

\newcommand{\fv}{\mathbf{f}}

\newcommand{\kv}{\mathbf{k}}

\newcommand{\uv}{\mathbf{u}}
\newcommand{\wv}{\mathbf{w}}
\newcommand{\xv}{\mathbf{x}}
\newcommand{\yv}{\mathbf{y}}
\newcommand{\zv}{\mathbf{z}}

\newcommand{\thetav}{\boldsymbol{\theta}}
\newcommand{\epsilonv}{\boldsymbol{\eps}}
\newcommand{\phiv}{\boldsymbol{\phi}}
\newcommand{\psiv}{\boldsymbol{\psi}}

\newcommand{\DC}{\mathcal{D}}
\newcommand{\HC}{\mathcal{H}}
\newcommand{\NC}{\mathcal{N}}
\newcommand{\SC}{\mathcal{S}}
\newcommand{\UC}{\mathcal{U}}

\newcommand{\Av}{\mathbf{A}}

\newcommand{\Dv}{\mathbf{D}}
\newcommand{\Hv}{\mathbf{H}}
\newcommand{\Iv}{\mathbf{I}}
\newcommand{\Jv}{\mathbf{J}}
\newcommand{\Kv}{\mathbf{K}}
\newcommand{\Rv}{\mathbf{R}}

\newcommand{\Uv}{\mathbf{U}}

\newcommand{\Xv}{\mathbf{X}}
\newcommand{\Yv}{\mathbf{Y}}
\newcommand{\Lambdav}{\mathbf{\Lambda}}


\newcommand{\Phiv}{\boldsymbol{\Phi}}
\newcommand{\Psiv}{\boldsymbol{\Psi}}
\newcommand{\philp}[1]{\tfrac{\phiv_{#1}\phiv_{#1}^T}{N}}
\newcommand{\iu}{\bm{\mathsf{i}}}

\newcommand{\sexp}{\tfrac{\kappa}{\nu}}
\newcommand{\etaeff}{\eta_{\mathrm{eff}}}
\newcommand{\etaeffs}{\widetilde{\eta}_{\mathrm{eff}}}

\newcommand{\hyperg}{{}_{2}F_{1}}

\iclrfinalcopy 
\begin{document}

\maketitle

\begin{abstract}
The asymptotically precise estimation of the generalization of kernel methods has recently received attention due to the parallels between neural networks and their associated kernels. However, prior works derive such estimates for training by kernel ridge regression (KRR), whereas neural networks are typically trained with gradient descent (GD). In the present work, we consider the training of kernels with a family of \emph{spectral algorithms} specified by profile $h(\lambda)$, and including KRR and GD as special cases. Then, we derive the generalization error as a functional of learning profile $h(\lambda)$ for two data models: high-dimensional Gaussian and low-dimensional translation-invariant model. 
Under power-law assumptions on the spectrum of the kernel and target, we use our framework to (i) give full loss asymptotics for both noisy and noiseless observations (ii) show that the loss localizes on certain spectral scales, giving a new perspective on the KRR saturation phenomenon (iii) conjecture, and demonstrate for the considered data models, the universality of the loss w.r.t. non-spectral details of the problem, but only in case of noisy observation.

\end{abstract}

\section{Introduction}
\label{sec:intro}
Quantitative description of various aspects of neural networks, most notably, generalization performance after training, is an important but challenging question of deep learning theory. One of the central approaches to this question is built on the connection between neural networks and its neural tangent kernel, first established for infinitely wide networks~\citep{Jacot_2018,Lee_2020,chizat2019lazy}, and then further taken to the rich realm of finite practical networks~\citep{Fort_2020, maddox2021fast,Long2021PropertiesOT,Kopitkov_2020,Vyas_2022}.  

Consider a task of learning target function $f^*(\xv)$ from training dataset $\DC_N = \{\xv_i\}_{i = 1}^N$ and (possibly) noisy observation $y_i = f^*(\xv_i) + \sigma \eps_i, \; \eps_i \sim \NC(0, 1)$, using given kernel $K(\xv,\xv')$. Then, many authors~\citep{Bordelon2020,Jacot_KARE,Wei_2022} derive asymptotic $N\to\infty$ generalization error for kernel ridge regression (KRR) algorithm with regularization $\eta$:    
\begin{equation}\label{eq:omni_estimate}
    L_{\mathrm{KRR}}(\eta) = \frac{1}{2}\frac{\partial \etaeff}{\partial \eta} \sum_l \frac{(\etaeff c_l)^2 + \lambda_l^2\frac{\sigma^2}{N}}{(\etaeff + \lambda_l)^2} , \qquad 1=\frac{\eta}{\etaeff} + \frac{1}{N} \sum_l \frac{\lambda_l}{\lambda_l+\etaeff},
  \end{equation}
where $\lambda_l$ are population eigenvalues of $K(\xv,\xv')$ and $c_l$ are respective eigencoefficients of $f^*(\xv)$ (see definition in~\eqref{eq:population_spec_decomp}). The main motivation of our work is what happens with~\eqref{eq:omni_estimate} when, as required by association with neural networks, KRR is replaced with GD or an even more general learning algorithm.

Importantly, a result of the type~\eqref{eq:omni_estimate} may give precise insights for the family of power-law spectral distributions, closely related to \emph{capacity} and \emph{source} assumptions of non-parametric statistics:
\begin{equation}
\label{eq:basic_powerlaws}
\lambda_l \sim l^{-\nu} \; (\text{with }\nu>1), \quad c_l^2 \sim l^{-\kappa - 1} \; (\text{with }\kappa>0). 
\end{equation}

Power-law conditions~\eqref{eq:basic_powerlaws} exhibit a rich picture of convergence rates. For the case of noisy observations $\sigma^2 > 0$, \citep{caponnetto2007optimal} gives minimax rate $O(N^{-\frac{\kappa}{\kappa + 1}})$. For noiseless observations $\sigma^2 = 0$, the optimal estimation rate significantly improves~\citep{Bordelon2020,cui2021generalization} becoming $O(N^{-\kappa})$. However, the optimal rates are not always achievable for some classical algorithms. For example, in the case when $\tfrac{\kappa}{\nu} > 2$ in the noisy case the rate of the KRR becomes $O(N^{-\frac{2 \nu}{2 \nu + 1}})$, i.e. KRR doesn't attain the minimax lower bound~\citep{li2023on}. Such an effect is usually called \textit{saturation} and is well-known in various non-parametric problems~\citep{mathe2004saturation,bauer2007regularization}. However, the saturation effect can be removed with algorithms other than KRR, for example spectral cut-off~\citep{bauer2007regularization} or gradient descent~\citep{pillaud2018statistical}. In noiseless case, \citep{Bordelon2020,cui2021generalization} show saturation at the same point $\frac{\kappa}{\nu} > 2$, with the rate changing to $O(N^{-2 \nu})$. Whether noiseless saturation can be removed by algorithms other than KRR, to the best of our knowledge, was not studied in the literature. \nocite{jin2022learning}

\emph{Our contribution}. In this work, we augment the above picture in several directions, as summarized in Table~\ref{tab:asymptotic_convergence_overview3} and the following three blocks 

\textbf{Loss functional.} In Section~\ref{sec:loss_functionals}, we introduce a new kernel learning framework by specifying the learning algorithm with a spectral profile $h(\lambda)$ and expressing the generalization error as a quadratic functional of this profile. While specific choices of $h(\lambda)$ can give KRR, Gradient Flow (GF), or any iterative first-order algorithm, we go beyond these classical examples and consider the \emph{optimal} learning algorithm as the minimizer of the loss functional for a given problem. 

\textbf{The models.} As the loss functional is problem-specific, we consider two different models: a Wishart-type model with Gaussian features and a translational-invariant model on a circle. In Section~\ref{sec:models}, we derive loss functionals for these models. In addition, we introduce a simple \emph{Naive Model for Noisy Observations} (NMNO). In the presence of observations noise, for reasonable learning algorithms, and at least for power-law spectrum, NMNO model gives asymptotically exact loss values for both Wishart and Circle models despite their differences. This suggests a possible universality property for a larger family of problems, including our Wishart and Circle models as special cases.

\textbf{Results for power-law spectrum.} In Section~\ref{sec:power_law}, we reach specific conclusions by restricting both kernel eigenvalues and the target function eigencoefficients to be described by power-laws~\eqref{eq:basic_powerlaws}. 
    \begin{itemize}
        \item For both noisy and noiseless observations, we derive full loss asymptotics. While the resulting rates are indeed consistent with the prior works, precise characterization of leading order asymptotic term gives access to finer details, such as the shape $h^*(\lambda)$ of the optimal learning algorithm. 
        \item We introduce the notion of \emph{spectral localization} - the scale of kernel eigenvalues over which the loss is accumulated - and quantify it for algorithms under consideration. In particular, this perspective provides a simple explanation of KRR \emph{saturation} phenomenon as the inability to sufficiently fit the main features of the target function.
        \item By characterizing the shape of the optimal algorithm, we point to the cases where it is optimal to \emph{overlearn} the training data, akin to KRR with negative regularization. Moreover, we specify the values of power-law exponents at which overlearning is beneficial.
    \end{itemize}

  \begin{table}
    \centering
    \begin{tabular}{lccc}
    \toprule
    & \multicolumn{2}{c}{Noisy observations $\sigma^2>0$} & Noiseless observations \\
    \cmidrule(lr){2-3} 
    & KRR & GF \& Optimal & KRR \& GF \& Optimal \\
    \midrule
    Exponent in $L=O(N^{-\#})$ & $\tfrac{\kappa}{\kappa+1}~\Big|~\tfrac{2\nu}{2\nu+1}$ & $\tfrac{\kappa}{\kappa+1}$ & $\kappa~\big|~2\nu$  \\
    \midrule
    Spectral localization scale $s$ & $\tfrac{\nu}{\kappa+1}~\Big|~\bigl\{0,\tfrac{\nu}{2\nu+1}\bigr\}$ & $\tfrac{\nu}{\kappa+1}$ & $\nu~\big|~0$  \\
    \midrule
    Universality & \textcolor{green}{yes} & \textcolor{green}{yes} & \textcolor{red}{no} \\
    \bottomrule
    \end{tabular}
    \caption{Our results for power-law spectral distributions~\eqref{eq:basic_powerlaws} and three algorithms: optimally regularized KRR, optimally stopped Gradient Flow (GF), and the \emph{optimal} algorithm (see Section~\ref{sec:loss_functionals}). For quantities exhibiting saturation at $\sexp=2$, the vertical line $\cdot\big|\cdot$ separates saturated and non-saturated values. The spectral localization at scale $s$ means that the error is accumulated at eigenvalues $\lambda$ of the order $N^{-s}$ (see Section~\ref{sec:spectral_scales}). In the last line, by universality, we mean the asymptotic equality of the errors for different problems with the same population spectrum $\lambda_l,c_l$. While we show the universality only for our two data models, we would expect it to hold for a broader class of data models.
    }
    \label{tab:asymptotic_convergence_overview3}
  \end{table}

\section{Setting}
\label{sec:setting}
\paragraph{Generalization error.} We evaluate the estimator $\widehat{f}(\mathbf{x})$ with its squared prediction error $|\widehat{f}(\xv)-f^*(\xv)|^2$, averaged over inputs $\xv$ drawn from a population density $p(\xv)$, and then all the randomness in training dataset $\DC_N$: 
  \begin{equation}\label{eq:expected_pop_loss}
    L_{\widehat{f}} = \frac{1}{2} \mathbb{E}_{\DC_N, \epsilonv} \bigl\|\widehat{f} - f^*\bigr\|^2 = \frac{1}{2} \int \mathbb{E}_{\DC_N, \epsilonv} \Bigl[\bigl(\widehat{f}(\xv) - f^*(\xv)\bigr)^2\Bigr] p(\xv) d\xv,
  \end{equation}
  where $\epsilonv = (\eps_1, \dots, \eps_N) \sim \NC(\mathbf{0}, I)$, $\|f\|^2 \equiv \langle f, f \rangle$, and angle brackets denote the scalar product $\langle f, g \rangle \equiv \int f(\xv) g(\xv) p(\xv) d\xv$.
\paragraph{Population spectrum.} Central to our approach is the connection between generalization error $L_{\widehat{f}}$ and spectral distributions $\lambda_l, c_l$ of the problem, defined by Mercer theorem as 
  \begin{equation}
    K(\xv,\xv') = \sum\nolimits \lambda_l \phi_l(\xv) \phi_l(\xv'), \quad f^*(\xv) = \sum\nolimits c_l \phi_l(\xv), \quad \langle\phi_l,\phi_{l'}\rangle = \delta_{ll'}.
    \label{eq:population_spec_decomp}
  \end{equation}
Here, $\lambda_l$ are the kernel eigenvalues, $c_l$ are the target function coefficients, and $\phi_l(\xv)$ are the eigenfeatures of the kernel. In the most interesting scenarios, the number $P$ of features $\phi_l$ is infinite, and the respective eigenvalues $\lambda_l\to0$ as $l\to\infty$. 

In this work, we aim at two levels of results w.r.t. the population spectrum $\lambda_l,c_l$. First, we want to obtain a characterization of $L_{\widehat{f}}$ for a general $\lambda_l,c_l$, similar to what is done in the classic result~\eqref{eq:omni_estimate}. Second, we will assume the power-laws~\eqref{eq:basic_powerlaws} to obtain a more detailed description of the generalization error $L_{\widehat{f}}$.  

An important object is the empirical kernel matrix $\Kv \in \RR^{N\times N}$ composed of evaluation of the kernel on training points $(\Kv)_{ij}=K(\xv_i,\xv_j)$. Let us additionally denote by $\yv = \fv^* + \epsilonv \in \RR^N$ the observation vector with components $(\yv)_i = f^{*}(\xv_i) + \eps_i$; by $\Lambdav\in\RR^{P\times P}$ the diagonal matrix with $(\Lambdav)_{ll}=\lambda_l$; and by $\Phiv\in\RR^{P\times N}$  the matrix of kernel features evaluated at training points, $\big(\Phiv\big)_{li}=\phi_l(\xv_i)$. Then, spectral decomposition~\eqref{eq:population_spec_decomp} allows to write empirical kernel matrix as
\begin{equation}\label{eq:empirical_kernel_matrix}
    \Kv = \Phiv^T \Lambdav \Phiv.
\end{equation}

\paragraph{Data models.} 
A standard approach to analyzing the generalization error consists in considering general families of kernels $K(\xv,\xv')$ and targets $f^{*}(\xv)$, typically defined by regularity assumptions, and deriving upper and lower generalization bounds (e.g., see~\citep{caponnetto2007optimal}). We adopt a different approach that allows us to go beyond just the bounds and describe generalization error $L_{\widehat{f}}$ with more quantitative detail. To this end, we consider two particular models, \emph{Circle} and \emph{Wishart}, that represent extreme low- and high-dimensional cases of the kernel learning setting.   

\emph{Wishart model.} This model is a common choice for our setting: it was explicitly assumed in~\citep{cui2021generalization,pmlr-v119-jacot20a,Simon_2022} and is closely related to settings of~\citep{Jacot_KARE,Bordelon2020,Canatar_2021,Wei_2022}. Specifically, assume that the kernel features $\phi_l(\xv)$ and data distribution $p(\xv)$ are such that feature matrix $\Phiv$ has i.i.d. Gaussian entries: $\phi_l(\xv_i)\sim \NC(0,1)$. Then, the resulting empirical kernel matrix in~\eqref{eq:empirical_kernel_matrix} is called Wishart matrix in Random Matrix Theory (RMT) context. Intuitively, we can think about Wishart model as a model of some high-dimensional data with all the structural information about data distribution and kernel being wiped out by high-dimensional fluctuations. 

\emph{Circle model.} To describe the generalization of a kernel estimator trained by completely fitting (i.e. interpolating) training data, ~\cite{Spigler_2020} and ~\cite{Beaglehole2023} used a model with translations-invariant kernel and training inputs forming a regular lattice in a hypercube. In this work, we consider, for transparency, a one-dimensional version of this model. Yet, the one-dimensional version will display all the important phenomena we plan to discuss. Specifically, let inputs come from the circle $x\in\SC=\mathbb R\mod 2\pi$, and the training set $\DC_N=\{u+\tfrac{2\pi i}{N}\}_{i=0}^{N-1}$. Here, $u$ is an overall shift of the training data lattice, which we sample uniformly $u\sim \UC([0,2\pi])$ to introduce some randomness in otherwise deterministic training set $\DC_N$. Then, the kernel and the target are defined in the basis of Fourier harmonics as
\begin{equation}\label{eq:circle_population_eigendecomposition}
    K(x, x')=\sum_{l=-\infty}^\infty \lambda_l e^{\iu l(x-x')},\quad f^*(x)=\sum_{l=-\infty}^\infty c_l e^{\iu l x}.
\end{equation}
We assume $\lambda_l=\lambda_{-l}\in\RR$ and $c_{-l}=\overline{c_l}$ to ensure that both the kernel and the target are real-valued.

\section{Spectral algorithms and their generalization error}
\label{sec:loss_functionals}
Several authors, e.g. \cite{bauer2007regularization,rastogi2017optimal,LIN2020}, considered \emph{Spectral Algorithms} to generalize and extend the type of regularization performed by classical methods such as KRR and GD. Indeed, both KRR and GD fit observation vector $\yv$ to a different extent in different 
spectral subspaces of the empirical kernel matrix $\Kv$.
For spectral algorithms, this fitting degree is specified by a profile $h(\lambda)$ so that the estimator is given by
\begin{equation}\label{eq:kernel_method_prediction}
    \widehat{f}(\xv) = \kv(\xv)^T \Kv^{-1} h\big(\tfrac{1}{N}\Kv\big) \yv.
\end{equation}
Here $\kv(\xv)\in\RR^N$ has components $\big(\kv(\xv)\big)_i=K(\xv,\xv_i)$. Function $h(\lambda)$ is applied to the kernel matrix $\tfrac{1}{N}\Kv$ in the operator sense: $h(\cdot)$ acts element-wise on diagonal matrices, and for an arbitrary positive semi-definite matrix with eigendecomposition $\Av = \Uv^T \Dv \Uv$ we have $h(\Av)=\Uv^T h(\Dv)\Uv$. 

Let us show how classical algorithms can be written in the form~\eqref{eq:kernel_method_prediction} with a specific choice of $h(\lambda)$:
\begin{enumerate}
    \item Kernel Ridge Regression with regularization $\eta$ is obtained with $h_\eta(\lambda)=\tfrac{\lambda}{\lambda+\eta}$. Then, \eqref{eq:kernel_method_prediction} transforms into the classical formula for KRR predictor: $\widehat{f}(\xv)=\kv(\xv)^T \big(\Kv+N\eta\Iv\big)^{-1}\yv$. 
    \item Gradient Flow by time $t$ is obtained with $h_t(\lambda)=1-e^{-t\lambda}$. (For this and the next example, we provide the respective derivations in Section~\ref{sec:kernel_form_of_predictors}.)
    \item For an arbitrary general first-order iterative algorithm at iteration $t$, $h_t(\lambda)$ is given by the associated degree-$t$ polynomial with $h_t(\lambda=0)=0$ (see Section~\ref{sec:kernel_form_of_predictors}). For example, GD with a learning rate $\alpha$ is given by $h_t(\lambda)=1-(1-\alpha\lambda)^t$.
\end{enumerate}

Now, we make a simple observation that is crucial to the current work. Note that generalization error~\eqref{eq:expected_pop_loss} is quadratic in the estimator $\widehat{f}$, while $\widehat{f}$ is linear in $h$ according to~\eqref{eq:kernel_method_prediction}. Thus, for any problem, the generalization error is quadratic in the profile $h$. This observation is formalized (see the proof in Section~\ref{sec:spectral_perspectives}) in
\begin{proposition}\label{prop:loss_quadratic_func}
    There exist signed  measures $\rho^{(2)}(d\lambda_1, d\lambda_2)$, $\rho^{(1)}(d\lambda)$ and $\rho^{(\eps)}(d\lambda)$ (given in equations~\eqref{eq:learning_measure_first_mom}-\eqref{eq:noise_variance_learning_measure}) such that the map $h\mapsto\widehat{f}\mapsto L_{\widehat{f}}$ given by~\eqref{eq:kernel_method_prediction}, \eqref{eq:expected_pop_loss} is expressed as the quadratic functional 
    \begin{equation}\label{eq:loss_functional}
        \begin{split}
            L[h] =& \frac{1}{2} \left[\int\int h(\lambda_1)h(\lambda_2) \rho^{(2)}(d\lambda_1, d\lambda_2) - 2\int h(\lambda) \rho^{(1)}(d\lambda) + \|f^*(\xv)\|^2\right]\\
            &+ \frac{1}{2}\frac{\sigma^2}{N}\int h^2(\lambda) \rho^{(\eps)}(d\lambda).
        \end{split}
    \end{equation}
\end{proposition}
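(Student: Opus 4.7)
The crucial observation is that the map $h \mapsto \widehat{f}$ defined by~\eqref{eq:kernel_method_prediction} is \emph{linear in $h$}: since $h(\tfrac{1}{N}\Kv)$ is defined via spectral calculus, we have $h(\tfrac{1}{N}\Kv) = \sum_i h(\widehat{\lambda}_i)\uv_i\uv_i^T$ whenever $\tfrac{1}{N}\Kv = \sum_i \widehat{\lambda}_i \uv_i \uv_i^T$, so for any fixed realization of $\DC_N$ and $\epsilonv$ the predictor $\widehat{f}(\xv)$ is an $\RR$-linear combination of the values $h(\widehat{\lambda}_i)$ on the empirical spectrum. Hence $\|\widehat{f} - f^*\|^2$ is a quadratic form in $h$, and since $\EE$ preserves quadratic dependence, so is $L_{\widehat{f}}$. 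The whole proof therefore reduces to bookkeeping: one must identify the three measures that realize this quadratic form as in~\eqref{eq:loss_functional}.

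The plan is to first use linearity in $\yv$ to split $\widehat{f} = \widehat{f}_{\fv^*} + \widehat{f}_\epsilonv$ corresponding to $\yv = \fv^* + \epsilonv$. Since $\epsilonv$ is centred and independent of $\DC_N$, the expectation $\EE_\epsilonv$ kills the signal-noise cross-term, giving $\EE_\epsilonv \|\widehat{f} - f^*\|^2 = \|\widehat{f}_{\fv^*} - f^*\|^2 + \EE_\epsilonv\|\widehat{f}_\epsilonv\|^2$; these two pieces produce the first (signal) and second ($\sigma^2/N$-weighted noise) line of~\eqref{eq:loss_functional}, respectively. Next, using the identity $\Kv^{-1} h(\tfrac{1}{N}\Kv) = \sum_i \tfrac{h(\widehat{\lambda}_i)}{N\widehat{\lambda}_i}\uv_i\uv_i^T$, I would rewrite $\widehat{f}_{\fv^*}(\xv) = \sum_i h(\widehat{\lambda}_i)\, a_i(\xv)$ and $\widehat{f}_\epsilonv(\xv) = \sum_i h(\widehat{\lambda}_i)\, b_i(\xv)$ with explicit coefficients $a_i(\xv) = \tfrac{(\uv_i^T \fv^*)(\uv_i^T \kv(\xv))}{N\widehat{\lambda}_i}$ and $b_i(\xv) = \tfrac{(\uv_i^T \epsilonv)(\uv_i^T \kv(\xv))}{N\widehat{\lambda}_i}$. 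Expanding the $L^2(p)$ norms and inner products gives sums of $h(\widehat{\lambda}_i)h(\widehat{\lambda}_j)\langle a_i, a_j\rangle$ for $\|\widehat{f}_{\fv^*}\|^2$, of $h(\widehat{\lambda}_i)\langle a_i, f^*\rangle$ for $\langle \widehat{f}_{\fv^*}, f^*\rangle$, and — using the noise isotropy $\EE_\epsilonv[\epsilon_i \epsilon_j] = \delta_{ij}$ — a \emph{diagonal} sum of $h(\widehat{\lambda}_i)^2$ times a data-dependent weight for $\EE_\epsilonv\|\widehat{f}_\epsilonv\|^2$. A constant-in-$h$ contribution $\|f^*\|^2$ survives from expanding $\|\widehat{f}_{\fv^*} - f^*\|^2$, matching the last term on the first line of~\eqref{eq:loss_functional}.

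Taking the outer expectation $\EE_{\DC_N}$ then turns these finite sums into integrals against signed measures on $[0,\infty)$ (or $[0,\infty)^2$), obtained by pushing forward the random weights along the empirical eigenvalues — for example $\rho^{(2)}(B_1 \times B_2) = \EE_{\DC_N}\sum_{i,j}\langle a_i, a_j\rangle \mathbf{1}_{\widehat{\lambda}_i \in B_1}\mathbf{1}_{\widehat{\lambda}_j \in B_2}$, with analogous expressions for $\rho^{(1)}$ and $\rho^{(\epsilon)}$ that should agree with the explicit formulas announced in~\eqref{eq:learning_measure_first_mom}-\eqref{eq:noise_variance_learning_measure}. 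Since $\Kv$ is a finite $N\times N$ matrix, all sums are finite and the measures are automatically well-defined. The measures are only \emph{signed} rather than positive because off-diagonal cross-products such as $\langle a_i, a_j\rangle$ or $\langle a_i, f^*\rangle$ can be of either sign; that is the only conceptual wrinkle. There is no real analytical obstacle — the statement is ultimately an algebraic identity — and the main thing to execute carefully is the spectral-calculus manipulation of $\Kv$ together with the noise isotropy that isolates the lone $h^2$ term.
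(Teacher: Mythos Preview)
Your proposal is correct and essentially identical to the paper's own argument in Section~\ref{sec:emp_spectral_perspective}: both split $\widehat f$ into signal and noise parts via the spectral decomposition $\tfrac{1}{N}\Kv=\sum_k\widehat\lambda_k\uv_k\uv_k^T$, kill the cross term by $\EE_\epsilonv[\epsilonv]=0$, diagonalize the noise contribution via $\EE_\epsilonv[(\uv_i^T\epsilonv)(\uv_j^T\epsilonv)]=\delta_{ij}$, and then read off the three measures as dataset-expectations of the resulting atomic weights at the empirical eigenvalues. The only cosmetic slip is that $\yv=\fv^*+\sigma\epsilonv$ rather than $\fv^*+\epsilonv$, but you recover the correct $\sigma^2/N$ prefactor anyway.
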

 
We will refer to the measures $\rho^{(1)},\rho^{(2)}$ and $\rho^{(\eps)}$ as the \emph{learning measures}. Proposition~\ref{prop:loss_quadratic_func} shows that the loss functional is completely specified by these measures. The first line in~\eqref{eq:loss_functional} describes the estimation of the target function from the signal part $\fv^*$ of the observation vector $\yv$, which is hindered by insufficiency of $N$ observations to capture fine details of $f^{*}(\xv)$. Similarly, the second line in~\eqref{eq:loss_functional} describes the effect of (unwanted) learning of the noise part $\epsilonv$ of observations $\yv$.    

The functional~\eqref{eq:loss_functional} makes the relation between the learning algorithm $h(\lambda)$ and the generalization error maximally explicit. However, properties of the underlying kernel and data are reflected in the learning measures $\rho^{(2)}(d\lambda_1, d\lambda_2)$, $\rho^{(1)}(d\lambda)$ and $\rho^{(\eps)}(d\lambda)$ in a fairly complicated way. In Section~\ref{sec:spectral_perspectives}, we show some general connections between the kernel eigenvalues $\lambda_l$, the features $\phi_l(\xv)$, and the learning measures. Yet, the explicit characterization of learning measures is challenging even for our two data models, and constitutes the main technical step of our work.  

\paragraph{Optimal algorithm.} Consider a regression problem and its associated loss functional~\eqref{eq:loss_functional}. Since the loss functional is positive semi-definite, under suitable regularity assumptions it has a (possibly non-unique) minimizer 
\begin{equation}\label{eq:general_opt_alg}
    h^*(\lambda) = \argmin L[h]
\end{equation}
that achieves the minimal possible generalization error in a given problem.  We refer to the spectral algorithm with profile $h^*(\lambda)$ as \emph{optimal}. In the context of models with power-law spectra, we will also speak of optimal algorithms in a broader sense, as those providing the optimal error scaling with $N$.  We will analyze the conditions of optimality in the Circle and Wishart models and show that in the noisy setting they have the same structure, easily understood using a simplified loss model.

\section{Explicit forms of the loss functional}\label{sec:models}

\paragraph{Circle model.}\label{sec:circle} The main advantage of this model is that it admits an exact and explicit solution. Below, we describe its main properties, with derivations and proofs given in Section~\ref{sec:circle_model_app}. 

Due to the fact that training inputs $\xv_i$ form a regular lattice, the eigenvalues $\widehat{\lambda}_k$ of empirical kernel matrix $\Kv$ become deterministic: $\widehat{\lambda}_k=\sum_{n=-\infty}^\infty \lambda_{l+Nn}$. Behind this relation is the learning picture based on aliasing. For a given $k\in \overline{0,N-1}$, the information about the target function contained in all Fourier harmonics with frequencies $l=k+Nn, \; n\in\mathbb{Z}$ is compressed into the single $k$-th harmonic, and then projected back to the original $l=k+Nn$ harmonics of the estimator $\widehat{f}$. This leads to a transformation of  population quantities $\lambda_l^a|c_l|^{2b}$ that we call \emph{$N$-deformation}:   
\begin{equation}\label{eq:circle_N_deformation}
    \big[\lambda^a_l|c_l|^{2b}\big]_N\equiv\sum_{n=-\infty}^{\infty} \lambda_{l+Nn}^a |c_{l+Nn}|^{2b}.
\end{equation}
It is periodic: $\big[\lambda^a_l|c_l|^{2b}\big]_N=\big[\lambda^a_{l+N}|c_{l+N}|^{2b}\big]_N$. Also, $\widehat{\lambda}_k=\big[\lambda_k\big]_N$. Then, we have
\begin{theorem}\label{th:circle_loss_functional}
Loss functional of the Circle model is given by
\begin{equation}\label{eq:circle_loss_functional}
    L[h] = \frac{1}{2}\sum_{k=0}^{N-1}\Bigg[\Big(\tfrac{\sigma^2}{N} + \big[|c_k|^2\big]_N\Big)\frac{\big[\lambda_k^2\big]_N}{\big[\lambda_k\big]_N^2}h^2(\widehat{\lambda}_{k}) - 2\frac{\big[\lambda_k|c_k|^2\big]_N}{\big[\lambda_k\big]_N}h(\widehat{\lambda}_{k})+\big[|c_k|^2\big]_N\Bigg].
\end{equation}
\end{theorem}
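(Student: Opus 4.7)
The plan is to diagonalize the empirical kernel matrix $\Kv$ by the discrete Fourier transform, exploiting the translation invariance $\Kv_{ij}=K(\xv_i,\xv_j)=\sum_l \lambda_l e^{\iu l\cdot 2\pi(i-j)/N}$ that follows from the regular lattice $\xv_i=u+\tfrac{2\pi i}{N}$ and makes $\Kv$ circulant (independently of the shift $u$). Applying $\Kv$ to the DFT vectors $(v_k)_j=e^{\iu k\cdot 2\pi j/N}/\sqrt{N}$ and using $\sum_j e^{\iu (l-k) 2\pi j/N}=N\delta_{l\equiv k\,(\mathrm{mod}\,N)}$ yields the aliased eigenvalues $N\widehat{\lambda}_k$ with $\widehat{\lambda}_k=[\lambda_k]_N$ as claimed.

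Next I would expand the estimator $\widehat{f}(x) = \kv(x)^T \Kv^{-1} h(\tfrac{1}{N}\Kv)\yv$ in this DFT basis. A similar calculation using the Mercer decomposition of $\kv(x)$ gives $\kv(x)^T v_k = \sqrt{N}\sum_n \lambda_{k+Nn} e^{\iu(k+Nn)(x-u)}$, while $\Kv^{-1}h(\tfrac{1}{N}\Kv)$ contributes a factor $h(\widehat{\lambda}_k)/(N\widehat{\lambda}_k)$ on mode $k$. Reading off the population Fourier coefficient of $\widehat{f}$ at frequency $l=k+Nn$ gives
\begin{equation*}
    \widehat{c}_l \;=\; \frac{\lambda_l\, h(\widehat{\lambda}_k)}{\sqrt{N}\,\widehat{\lambda}_k}\, \widetilde{y}_k\, e^{-\iu l u},
\end{equation*}
where $\widetilde{y}_k$ is the $k$-th DFT component of $\yv$. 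The loss then follows by Parseval as $L=\tfrac12 \EE_{u,\epsilonv}\sum_l |\widehat{c}_l-c_l|^2$.

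Splitting $\yv=\fv^*+\epsilonv$ into signal and noise, the same aliasing identity gives $\widetilde{y}_k^{\mathrm{sig}}=\sqrt{N}\sum_m c_{k+Nm} e^{\iu(k+Nm)u}$, and averaging over $u\sim\UC([0,2\pi])$ annihilates the off-diagonal phases $e^{\iu N(m-m')u}$, leaving only the squared-modulus terms. Expanding $|\widehat{c}_l^{\mathrm{sig}}-c_l|^2$ and summing over $n$ at fixed $k\in\{0,\dots,N-1\}$ converts bare sums $\sum_n \lambda_{k+Nn}^a|c_{k+Nn}|^{2b}$ into the $N$-deformations $[\lambda_k^a|c_k|^{2b}]_N$ of~\eqref{eq:circle_N_deformation}, producing precisely the three signal terms in~\eqref{eq:circle_loss_functional}. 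For the noise contribution, independence of the Gaussian $\epsilonv$ together with orthogonality of the DFT gives $\EE_\epsilonv |\widetilde{y}_k^{\mathrm{noise}}|^2=\sigma^2$, and an analogous summation over $n$ yields the $\tfrac{\sigma^2}{N}\tfrac{[\lambda_k^2]_N}{[\lambda_k]_N^2}h^2(\widehat{\lambda}_k)$ term; summing over $k$ assembles~\eqref{eq:circle_loss_functional}.

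The main obstacle is the bookkeeping of the alias structure: the Fourier coefficients of $\widehat{f}$ live on the full integer lattice, yet the $N$ training observations only resolve $N$ equivalence classes modulo $N$, so one must carefully track the $e^{-\iu l u}$ phases in $\widehat{c}_l$ so that the $u$-average extracts exactly the $N$-deformations on the right-hand side of~\eqref{eq:circle_loss_functional}. Once this accounting is done, everything else is diagonal in the DFT basis and the formula drops out.
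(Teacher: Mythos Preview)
Your proposal is correct and follows essentially the same approach as the paper: diagonalize the circulant $\Kv$ by the DFT, compute the population Fourier coefficients $\widehat{c}_l$ of the estimator via aliasing, apply Parseval, and average over $u$ and $\epsilonv$ so that the phase cancellations produce the $N$-deformations. The only cosmetic difference is that the paper organizes the signal computation via an explicit bias/variance split before recombining into $N$-deformations, whereas you expand $|\widehat{c}_l^{\mathrm{sig}}-c_l|^2$ directly; both routes are the same calculation.
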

The special feature of the loss functional~\eqref{eq:circle_loss_functional} compared to the general form~\eqref{eq:loss_functional} in that there are no off-diagonal $\lambda_1 \ne\lambda_2$ contributions to the loss. Then, the optimal algorithm is found by a simple point-wise minimization:
\begin{equation}\label{eq:circle_opt_alg}
    h^*(\widehat{\lambda}_{k}) = \frac{[\lambda_k|c_k|^2]_N[\lambda_k]_N}{\big(\tfrac{\sigma^2}{N} + \big[|c_k|^2\big]_N\big)[\lambda_k^2]_N}.
\end{equation}

\paragraph{Wishart model.} This model, although more common in the literature, does not enjoy an exact solution like the Circle model. However, using two approximations, in Section~\ref{sec:Wishart_model_app} we derive an explicit form of the measures $\rho^{(2)}, \rho^{(1)}, \rho^{(\eps)}$ describing the loss by equation~\eqref{eq:loss_functional}. We give now an outline of our derivation.

First, we point out that the learning measures from~\eqref{eq:loss_functional} can be reduced to the first and second moment of the imaginary part of the resolvent $\widehat{\Rv}(z)=(\tfrac{\Kv}{N}-z\Iv)^{-1}$ computed at the points $z=\lambda+\iu0_+$ near the real line. Then, we make standard RMT assumptions to describe the resolvent in terms of the Stieltjes transform $r(z)$ of spectral measure of $\Kv$, which satisfies the fixed-point equation 
\begin{equation}\label{eq:fixed_point_main}
    1=-zr(z)+\frac{1}{N}\sum_l \frac{r(z)\lambda_l}{r(z)\lambda_l+1}.
\end{equation}
The first resolvent moment is computed straightforwardly and the second moment at the same point $z_1=z_2$ can be computed with differentiation trick~\citep{Simon_2022}, but for the second moment at $z_1\ne z_2$ a new tool is required. For that, we employ Wick's theorem of computing averages over Gaussian fields, where we take into account leading order pairings and neglect subleading $O(N^{-1})$ terms. The above procedure expresses the moments of $\widehat{R}(z)$ in terms of three auxiliary functions  
\begin{equation}\label{eq:wishart_model_vuw_functions}
    v(z)=\sum_l\frac{c_l^2}{\lambda_l+r^{-1}(z)}, \quad u(z)=\sum_l\frac{\lambda_lc_l^2}{\lambda_l+r^{-1}(z)}, \quad w(z)=\sum_l\frac{\lambda_l^2}{\lambda_l+r^{-1}(z)}.
\end{equation}

Finally, we obtain loss functional~\eqref{eq:loss_functional} by specifying each of the learning measures. Using the notation $\Im \{z\}$ for imaginary part of $z$, we find first moment of signal measure and the noise measure to be
\begin{equation}\label{eq:Wishart_learning_measures_first_moment_and_noise}
\frac{\rho^{(1)}(d\lambda)}{d\lambda}=\frac{\Im u(\lambda)}{\pi\lambda}, \qquad  \frac{\rho^{(\eps)}(d\lambda)}{d\lambda}=\frac{\Im w(\lambda)}{\pi\lambda^2}. 
\end{equation}
The second moment of signal measure has diagonal $\lambda_1=\lambda_2$ and off-diagonal $\lambda_1\ne\lambda_2$ parts
\begin{equation}\label{eq:Wishart_learning_measure_second_moment}
    \begin{split}
        \frac{\rho^{(2)}(d\lambda_1,d\lambda_2)}{d\lambda_1d\lambda_2} &= \frac{|r^{-1}(\lambda_1)|^2}{\pi\lambda_1^2} \Im\{v(\lambda_1)\}\delta(\lambda_1-\lambda_2) \\
        &+ \frac{1}{\pi^2\lambda_1\lambda_2}\frac{\Im\big\{u(\lambda_2)\big\}\Im\big\{r^{-1}(\lambda_1)\big\}-\Im\big\{u(\lambda_1)\big\}\Im\big\{r^{-1}(\lambda_2)\big\}}{\lambda_1-\lambda_2}.
    \end{split}
\end{equation}

\begin{figure}
    \centering
    \includegraphics[scale=0.48, trim=33mm 62mm 35mm 7mm, clip]{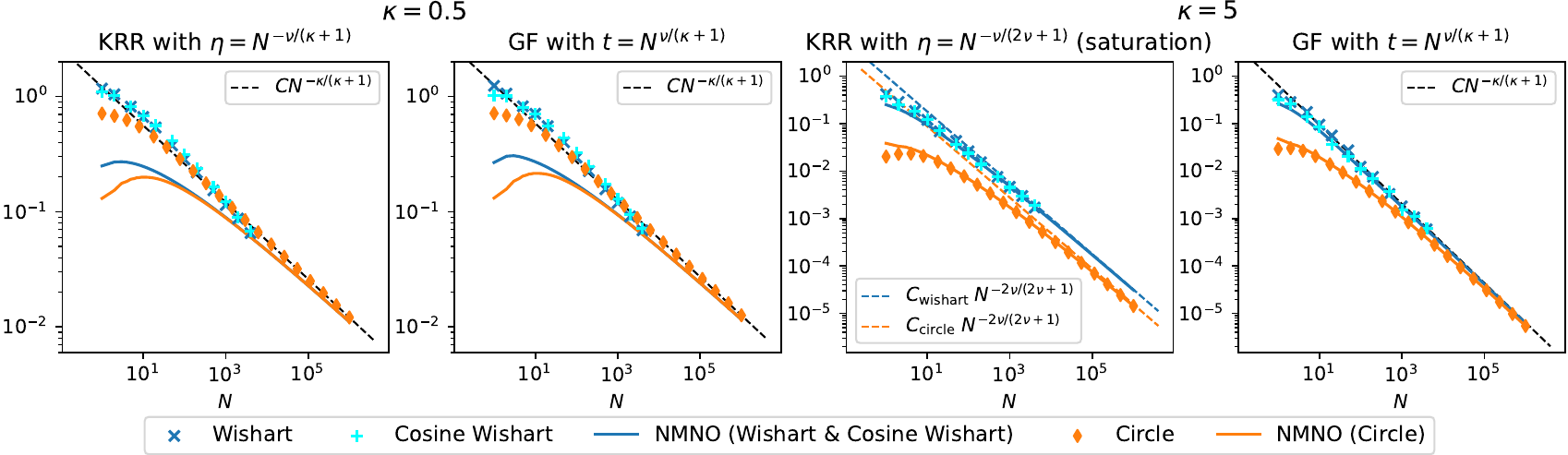}

    \vspace{-2mm}
    \caption{Generalization error of different data models in presence of observation noise converges to our NMNO model (\emph{solid}) as $N\to\infty$, which in turn converges to its $O(N^{-\#})$ asymptotic (\emph{dashed}). All plots have $\nu=1.5$. \emph{Cosine Wishart} is an additional data model not covered by our theory yet converging to NMNO. The difference between Circle and Wishart asymptotic on the plot 3 is due to localization of the error on scale $s=0$ at saturation. For details and extended discussion see Sec.~\ref{sec:experiments}.}
    \label{fig:model_equivalence}
\end{figure}

\paragraph{Naive Model of Noisy Observations.}  As we can see from our results for Circle and Wishart model above, the loss functional of a given model can be quite complicated. We will see, however, that in the presence of observation noise $\sigma^2>0$ the asymptotic ($N\to\infty$) generalization properties of both these models are well described by a simple artificial model (NMNO), introduced below. We conjecture this match to be a universal phenomenon, valid for a wide range of models. 

For a large dataset sizes $N$, we expect all the complex details of the problem to be concentrated at eigendirections $l$ with small $\lambda_l$, whose features $\phi_l(\xv)$ can not be well captured by the empirical kernel matrix $\Kv$ of size $N$. On the contrary, $\Kv$ should succeed in capturing $\phi_l(\xv)$ with moderate $\lambda_l$, and empirical and population eigendecompositions should be close to each other. 

Let us therefore assume that we can ignore the small eigenvalues and determine the generalization error only using components $l$ with moderate $\lambda_l$ (later, we will explain this by loss localization at moderate spectral scales). Then, the contribution of the $l$-th component to the generalization error can be approximated by $(1-h(\lambda_l))^2c_l^2$ for signal fitting part, and $\tfrac{\sigma^2}{N}h^2(\lambda_l)$ for the learned observation noise. This completely determines the associated loss functional. Let us describe the population spectral data $\lambda_l,c_l$ by the spectral eigenvalue measure $\mu_\lambda(d\lambda)=\sum_l \delta_{\lambda_l}(d\lambda)$ and the coefficient measure $\mu_c(d\lambda)=\sum_l c_l^2\delta_{\lambda_l}(d\lambda)$. Then, we define the NMNO model by the functional
\begin{equation}\label{eq:nmno}
    L^{(\mathrm{nmno})}[h] = \frac{\sigma^2}{2N}\int\limits_{\lambda_{\mathrm{min}}}^1 h^{2}(\lambda)\mu_\lambda(d\lambda)+\frac{1}{2}\int\limits_{\lambda_{\mathrm{min}}}^1\big(1-h(\lambda)\big)^2\mu_c(d\lambda).
\end{equation}
Here, $\lambda_{\mathrm{min}}$ is a reference minimal population eigenvalue defined by the condition $\mu_\lambda([\lambda_{\min},1])=N$ (i.e., such that the segment $[\lambda_{\min},1]$ contains exactly $N$ population eigenvalues).

\begin{figure}
    \centering
    \includegraphics[scale=0.38, trim=4mm 6mm 47mm 5mm, clip]{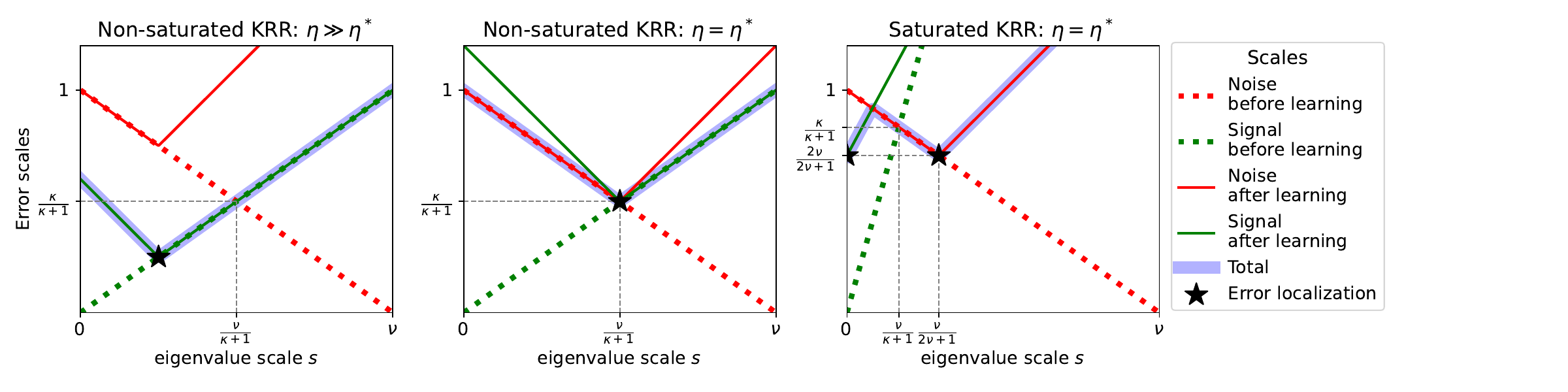}

    \vspace{-2mm}
    \caption{Scale diagrams of different KRR  regimes for noisy observations. All plots have $\nu=1.2$, while $\kappa=1.0$ in the non-saturated case ({left} and {center}) and $\kappa=5.0$ in the saturated case ({right}). The dotted lines represent the noise $1-\tfrac{s}{\nu}$ and signal $\tfrac{\kappa}{\nu}s$ terms in equation~\eqref{eq:nmnoscale}. The solid lines show the same terms with  added components $2S^{(h)}$ and $2S^{(1-h)}$. \textbf{Left:} the sub-optimal ($s_h>s_*$) non-saturated case. \textbf{Center:} the optimal ($s_h=s_*$) non-saturated case. \textbf{Right:} the saturated ($\kappa>2\nu$) case with the choice $s_\eta=\tfrac{\nu}{2\nu+1}$  optimal for KRR, but sub-optimal among general algorithms $h$.}
    \label{fig:NMNO_scalings}
\end{figure}

\section{Results under power-law spectral distributions}\label{sec:power_law}
In this section we perform a deeper study of the  Circle, Wishart and NMNO models of Section~\ref{sec:models} in the setting of power-law distributions~\eqref{eq:basic_powerlaws}. Prioritizing transparency over generality, we assume $\lambda_l,c_l$ to be exact power-laws in the form convenient for a given model. Specifically, for Circle model we take $\lambda_l = (|l|+1)^{-\nu}$ and $|c_l|^2=(|l|+1)^{-\kappa-1}$, while for Wishart model we assume continuous analogs of these spectral distributions, namely assume them to have smooth densities supported on $[0,1]$: $\mu_\lambda(d\lambda)=\tfrac{1}{\nu}\lambda^{-1-\frac{1}{\nu}}d\lambda$ and $\mu_c(d\lambda)=\tfrac{1}{\nu}\lambda^{\frac{\kappa}{\nu}-1}d\lambda$.    

\subsection{Scaling analysis and its application to the NMNO model}\label{sec:spectral_scales}
Under power-law spectral assumptions, it is key to observe that various important quantities scale as suitable powers of the training set size $N$. In general, given a sequence $a_N$, we will say that it \emph{has scale $s$} if for any $\epsilon>0$ we have $|a_N|=o(N^{-s+\epsilon})$ and $|a_N|=\omega(N^{-s-\epsilon})$. If only the first or the second condition holds, we say that the scale of $a_N$ is not less or not greater than $s$ (respectively). 

Suppose that $g_N(\lambda)$ is a sequence of functions on the spectral interval $(0,1]$. We say that this sequence has a  \emph{scaling profile} $S^{(g)}(s), s\ge 0,$ if for any sequence $\lambda_N\in (0,1]$ that has scale $s$ the sequence $|g_N(\lambda_N)|$ has scale $S^{(g)}(s)$. It is easy to check that the scaling profile, if exists, is a continuous function of $s$ (see Lemma~\ref{lemma:scalingcont}). The basic example of a sequence of functions having a scaling profile is the sequence $g_N(\lambda)=N^a\lambda^b$ with constant $a,b$; in this case $S^{(g)}(s)=bs-a.$ 

Integrals of functions with a scaling profile also have a specific scaling:
\begin{proposition}[see proof in Section~\ref{sec:scaling}]\label{prop:scaling1} Let $g_N(\lambda)$ be a sequence of functions with a scaling profile $S^{(g)}(s)$, and let $a_N>0$ be a sequence of scale $a>0$. Then, the sequence of integrals $\int_{a_N}^1|g_N(\lambda)|d\lambda$ has scale $s_*=\min_{0\le s\le a}(S^{(g)}(s)+s)$.
\end{proposition}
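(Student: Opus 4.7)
The plan is to reduce the integral to a finite sum via a dyadic partition of $[a_N,1]$ on a logarithmic-in-$N$ scale and then identify the dominant piece as the one minimising $s+S^{(g)}(s)$, in the spirit of Laplace's method but using only the polynomial-scale estimates supplied by the scaling profile. Fix a small $\delta>0$ and partition $[a_N,1]$ into at most $K+1=\lceil(a+1)/\delta\rceil+1$ sub-intervals
\[
I_k \;=\; \bigl[N^{-(k+1)\delta},\, N^{-k\delta}\bigr] \cap [a_N,1],\qquad k=0,1,\ldots,K,
\]
on each of which every point has scale in $[k\delta,(k+1)\delta]$ and $|I_k|\le N^{-k\delta}$. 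Let $\tau(\delta)$ denote the modulus of continuity on $[0,a+1]$ of $S^{(g)}$, which is continuous by Lemma~\ref{lemma:scalingcont}; thus $\tau(\delta)\to 0$ as $\delta\to 0$.

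For the upper bound I would show that $\sup_{I_k}|g_N|=o(N^{-S^{(g)}(k\delta)+\tau(\delta)+\eta})$ for every $\eta>0$. This follows by contradiction: a violating near-maximising subsequence $\lambda_{N_j}\in I_k$ can, by compactness, be thinned so that $\lambda_{N_j}$ has a well-defined limiting scale $s'\in[k\delta,(k+1)\delta]$; the scaling-profile hypothesis then forces $|g_{N_j}(\lambda_{N_j})|$ to have scale $S^{(g)}(s')\ge S^{(g)}(k\delta)-\tau(\delta)$, contradicting the assumed lower bound. Multiplying by $|I_k|\le N^{-k\delta}$ and summing over $k$ gives
\[
\int_{a_N}^1 |g_N(\lambda)|\,d\lambda \;=\; o\!\bigl(N^{-s_*+2\tau(\delta)+\eta}\bigr),
\]
since the discrete minimum $\min_k(k\delta+S^{(g)}(k\delta))$ differs from $s_*=\min_{[0,a]}(s+S^{(g)}(s))$ by at most $\tau(\delta)$. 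Sending $\delta,\eta\to 0$ yields scale at least $s_*$.

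For the lower bound, let $s^*\in[0,a]$ attain $s_*=s^*+S^{(g)}(s^*)$. Choose $J\subset[a_N,1]$ of length $\asymp N^{-s^*}$ all of whose points have scale exactly $s^*$, for example $J=[N^{-s^*},2N^{-s^*}]$ when $0<s^*<a$, $J=[a_N,2a_N]$ when $s^*=a$, and $J=[\tfrac12,1]$ when $s^*=0$. For each $N$ select $\mu_N\in J$ with $|g_N(\mu_N)|\le m_N+N^{-M}$, where $m_N=\inf_J|g_N|$ and $M$ is any fixed constant exceeding $S^{(g)}(s^*)+1$. Because $\mu_N$ has scale $s^*$, the scaling-profile property gives $|g_N(\mu_N)|=\omega(N^{-S^{(g)}(s^*)-\eta})$ for every $\eta>0$; this dominates the error $N^{-M}$, so $m_N\ge\tfrac12|g_N(\mu_N)|=\omega(N^{-S^{(g)}(s^*)-\eta})$ for $N$ large. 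Hence $\int_J|g_N|\,d\lambda\ge|J|\,m_N=\omega(N^{-s_*-2\eta})$, giving scale at most $s_*$. Combined with the upper bound, this shows $\int_{a_N}^1|g_N|\,d\lambda$ has scale exactly $s_*$.

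The main obstacle is bridging the gap between the pointwise scaling-profile hypothesis — which controls $|g_N(\lambda_N)|$ only along prescribed sequences $\lambda_N$ — and the present need to bound supremum and infimum of $|g_N|$ over whole sub-intervals. The bridge is (i) continuity of $S^{(g)}$ from Lemma~\ref{lemma:scalingcont}, which lets the scale band $[k\delta,(k+1)\delta]$ be treated as though its value were $S^{(g)}(k\delta)$ up to $\tau(\delta)$, and (ii) a compactness/subsequence extraction converting the $o$- and $\omega$-definitions of scale into uniform estimates on each $I_k$. These error terms must be controlled uniformly in $k$ (harmless since the number of pieces depends only on $\delta$) and then collapsed as $\delta,\eta\to 0$.
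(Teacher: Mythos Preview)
Your argument is correct and follows essentially the same route as the paper: a logarithmic-in-$N$ partition of $[a_N,1]$ into pieces of fixed scale-width, with a compactness/subsequence extraction to convert the pointwise scaling-profile hypothesis into uniform $\sup$/$\inf$ bounds on each piece, and continuity of $S^{(g)}$ (Lemma~\ref{lemma:scalingcont}) to control the error. One small improvement over the paper is your lower-bound interval $J=[N^{-s^*},2N^{-s^*}]$: since \emph{every} sequence $\mu_N\in J$ already has scale exactly $s^*$, the scaling-profile hypothesis applies directly to the near-infimum sequence and you avoid the second compactness step the paper invokes with ``arguing as in Part~1.''
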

When prop.~\ref{prop:scaling1} can be applied to the functional~\eqref{eq:loss_functional}, we call the set $\mathcal{S}_{\mathrm{loc}}=\argmin_{0\le s\le a}(S^{(g)}(s)+s)$ of scales which dominate the loss integral as \emph{spectral localization scales} of the generalization error.
In the rest of the paper, we reserve the letter $s$ to denote the scale of eigenvalues $\lambda$.

\paragraph{Application to NMNO.} Now we apply the scaling arguments to the  NMNO model~\eqref{eq:nmno}, for which it will be easy to find explicit optimality conditions.  Suppose that the problem has either discrete or continuous power-law spectrum with exponents $\nu,\kappa$ as described above for the Circle and Wishart model. Then, $\lambda_{\min}$ in equation~\eqref{eq:nmno} has finite scale $\nu$.
Suppose that the functions $h$ and $1-h$ have scaling profiles $S^{(h)}$ and $S^{(1-h)}$. Then, applying Proposition~\ref{prop:scaling1} in the continuous case or analogous Proposition~\ref{prop:scaling2} in the discrete case, we obtain
\begin{proposition}
    The NMNO loss $L^{(\mathrm{nmno})}[h]$ has scaling
    \begin{equation}\label{eq:nmnoscale}
        S^{\mathrm{nmno}}[h]=\min_{0\le s\le \nu}\Big[\big(1-\tfrac{s}{\nu}+2S^{(h)}(s)\big)\wedge\big(\tfrac{\kappa}{\nu}s+2S^{(1-h)}(s)\big)\Big].
    \end{equation}
\end{proposition}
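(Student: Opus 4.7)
The plan is to apply Proposition~\ref{prop:scaling1} to the noise and signal integrals separately and then combine them using the fact that the scale of a sum of two nonnegative sequences is the minimum of the individual scales.

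First, split $L^{(\mathrm{nmno})}[h]=L_\eps[h]+L_{\mathrm{sig}}[h]$ with
\begin{equation*}
L_\eps[h]=\tfrac{\sigma^2}{2N}\int_{\lambda_{\min}}^1 h^2(\lambda)\mu_\lambda(d\lambda), \qquad L_{\mathrm{sig}}[h]=\tfrac{1}{2}\int_{\lambda_{\min}}^1 (1-h(\lambda))^2\mu_c(d\lambda),
\end{equation*}
and substitute the power-law densities $\mu_\lambda(d\lambda)=\tfrac{1}{\nu}\lambda^{-1-1/\nu}d\lambda$, $\mu_c(d\lambda)=\tfrac{1}{\nu}\lambda^{\kappa/\nu-1}d\lambda$. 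The defining relation $\mu_\lambda([\lambda_{\min},1])=N$ integrates to $\lambda_{\min}^{-1/\nu}-1=N$, so $\lambda_{\min}\asymp N^{-\nu}$ has scale $a=\nu$, which will be the upper endpoint of the minimization range coming from Proposition~\ref{prop:scaling1}.

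Next, I would read off the scaling profile of each integrand from the two elementary rules that $N^{a_0}\lambda^{b_0}$ has profile $b_0 s-a_0$ and that profiles add under pointwise multiplication of sequences. This gives profile $1-(1+\tfrac{1}{\nu})s+2S^{(h)}(s)$ for the noise integrand $\tfrac{\sigma^2}{2N\nu}h^2(\lambda)\lambda^{-1-1/\nu}$, and profile $(\tfrac{\kappa}{\nu}-1)s+2S^{(1-h)}(s)$ for the signal integrand $\tfrac{1}{2\nu}(1-h(\lambda))^2\lambda^{\kappa/\nu-1}$. Applying Proposition~\ref{prop:scaling1}, or its discrete analog Proposition~\ref{prop:scaling2} for the Circle-model spectrum, adds $s$ to the profile and minimizes over $s\in[0,\nu]$, yielding
\begin{equation*}
\text{scale of }L_\eps[h]=\min_{0\le s\le\nu}\bigl[1-\tfrac{s}{\nu}+2S^{(h)}(s)\bigr], \qquad \text{scale of }L_{\mathrm{sig}}[h]=\min_{0\le s\le\nu}\bigl[\tfrac{\kappa}{\nu}s+2S^{(1-h)}(s)\bigr].
\end{equation*}

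Since $L_\eps[h]\ge 0$ and $L_{\mathrm{sig}}[h]\ge 0$, the scale of their sum equals the minimum of the two scales above, and pulling the outer $\min$ inside the inner one collapses the expression to a single minimization of the pointwise $\wedge$, producing the stated formula. The only place that requires any real care is the multiplicativity of scales (so that profiles add under products) together with the hypothesis that both $h$ and $1-h$ admit scaling profiles; both should follow directly from the definition of scale and the continuity argument underlying Lemma~\ref{lemma:scalingcont}. Everything else is a mechanical application of the machinery of Section~\ref{sec:spectral_scales}.
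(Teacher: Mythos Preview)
Your proposal is correct and follows essentially the same route as the paper: compute the scaling profiles of the noise and signal integrands, apply Proposition~\ref{prop:scaling1} (or its discrete analog) with the lower limit of scale $\nu$, and combine the two nonnegative pieces by taking the minimum of their scales. The paper's argument is slightly terser but identical in substance, and your extra remarks on $\lambda_{\min}\asymp N^{-\nu}$ and on additivity of profiles under products are exactly the small verifications the paper leaves implicit.
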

Here, $\wedge=\min$; the first and second arguments in $\wedge$ come from the noise and signal terms in~\eqref{eq:nmno}, respectively. In the continuous case we use the fact that $N^{-1}\lambda^{-1-1/\nu}h(\lambda)^2$ has scaling profile $1-s(1+\tfrac{1}{\nu})+2S^{(h)}$ while  $\lambda^{\kappa/\nu-1}(1-h(\lambda))^2$ has scaling profile $s(\tfrac{\kappa}{\nu}-1)+2S^{(1-h)}$. 

Clearly, for any particular $s$ only one of the values $S^{(h)}(s)$ and $S^{(1-h)}(s)$ can be strictly positive. This implies a bound on feasible loss scalings: 
\begin{equation}
    S^{\mathrm{nmno}}[h] \le \min_{0\le s\le \nu}\Big(\big(1-\tfrac{s}{\nu}\big)\vee \tfrac{\kappa}{\nu}s\Big)=\tfrac{\kappa}{\kappa+1}.
\end{equation} 
The minimum here is attained at the scale $s_*=\tfrac{\nu}{\kappa+1}$. Moreover, an algorithm $h$ attains the optimal scale $\tfrac{\kappa}{\kappa+1}$ exactly when for each $s\in[0,\nu]$ we have $\big(1-\tfrac{s}{\nu}+2S^{(h)}(s)\big)\wedge \big(\tfrac{\kappa}{\nu}s+2S^{(1-h)}(s)\big)\ge \tfrac{\kappa}{\kappa+1}$:  
\begin{proposition}\label{prop:nmnooptcond}
    $S^{\mathrm{nmno}}[h]\le \tfrac{\kappa}{\kappa+1}$, and the equality occurs when 1) $S^{(h)}(s)\ge \tfrac{1}{2}\big(\tfrac{s}{\nu}-\tfrac{1}{\kappa+1}\big)$ for $s\ge s_*=\tfrac{\nu}{\kappa+1}$ and 2) $S^{(1-h)}(s)\ge \tfrac{1}{2}\big(\tfrac{\kappa}{\kappa+1}-\tfrac{\kappa}{\nu}s\big)$ for $s\le s_*$. 
\end{proposition}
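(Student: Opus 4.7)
The plan is to split the statement into the unconditional upper bound $S^{\mathrm{nmno}}[h]\le\tfrac{\kappa}{\kappa+1}$ and the matching lower bound under the hypotheses 1) and 2). Both directions work directly from formula~\eqref{eq:nmnoscale} without returning to the integral~\eqref{eq:nmno}.

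For the upper bound, the key elementary fact is that $h(\lambda)+(1-h(\lambda))=1$, so at every scale $s$ one cannot simultaneously have $S^{(h)}(s)>0$ and $S^{(1-h)}(s)>0$: otherwise, picking any sequence $\lambda_N$ of scale $s$ would force both $h(\lambda_N)$ and $1-h(\lambda_N)$ to tend to zero like negative powers of $N$, contradicting the fact that they sum to $1$. Hence at every $s$ at least one of $S^{(h)}(s), S^{(1-h)}(s)$ is $\le 0$. I then evaluate~\eqref{eq:nmnoscale} at the specific scale $s_*=\tfrac{\nu}{\kappa+1}$, chosen so that $1-s_*/\nu=\tfrac{\kappa}{\nu}s_*=\tfrac{\kappa}{\kappa+1}$. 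Whichever of the two profiles is non-positive at $s_*$, the corresponding argument of $\wedge$ is bounded by $\tfrac{\kappa}{\kappa+1}$, so the outer minimum over $s\in[0,\nu]$ is bounded by the same value.

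For the lower bound, I assume throughout that $h$ and $1-h$ are at most polynomially bounded in $N$, so their scaling profiles are non-negative on $[0,\nu]$; this holds for every algorithm of interest in the paper (KRR, GF, the optimal algorithm). Then I consider the two ranges $s\ge s_*$ and $s\le s_*$ separately. For $s\ge s_*$, condition 1) yields $1-s/\nu+2S^{(h)}(s)\ge 1-s/\nu+(s/\nu-s_*/\nu)=\tfrac{\kappa}{\kappa+1}$, while the second argument obeys $\tfrac{\kappa}{\nu}s+2S^{(1-h)}(s)\ge\tfrac{\kappa}{\nu}s\ge\tfrac{\kappa}{\nu}s_*=\tfrac{\kappa}{\kappa+1}$ by non-negativity of $S^{(1-h)}$. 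For $s\le s_*$ the roles swap: condition 2) takes care of the signal term by the symmetric algebra, and non-negativity of $S^{(h)}$ handles the noise term, using $1-s/\nu\ge 1-s_*/\nu=\tfrac{\kappa}{\kappa+1}$. Both arguments of $\wedge$ in~\eqref{eq:nmnoscale} therefore exceed $\tfrac{\kappa}{\kappa+1}$ uniformly in $s$, so the minimum does as well, and combined with the upper bound this gives equality.

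The conceptual content is essentially the single observation about $h+(1-h)=1$; once that is made, the rest reduces to balancing two affine functions of $s$ at their crossing point $s_*$. The only technical wrinkle is to delineate the admissible class of profiles — those of sub-polynomial amplitude, so that $S^{(h)}(s),S^{(1-h)}(s)\ge 0$ throughout $[0,\nu]$ — since without this mild regularity a pathological $h$ with $|h|$ blowing up like $N^a$ could in principle invalidate the non-negativity step used in the lower bound.
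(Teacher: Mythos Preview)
Your proof is correct and follows essentially the same approach as the paper: the key observation that $h+(1-h)=1$ forces $\min\{S^{(h)}(s),S^{(1-h)}(s)\}\le 0$ is exactly what the paper uses for the upper bound, and your treatment of the equality case via the split $s\gtrless s_*$ is the explicit unpacking of the paper's one-line remark that equality requires both arguments of $\wedge$ to exceed $\tfrac{\kappa}{\kappa+1}$ at every $s$. Your explicit mention of the non-negativity assumption $S^{(h)},S^{(1-h)}\ge 0$ (needed so that conditions 1) and 2) restricted to half-intervals suffice) is a point the paper leaves implicit, so if anything your write-up is slightly more careful.
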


These results provide a simple picture of spectral algorithms close to optimality for the NMNO model: one should choose the spectral function $h$ so that $|h(\lambda)|\lesssim N^{\tfrac{1}{2(\kappa+1)}}\lambda^{\tfrac{1}{2\nu}}$ for $\lambda\lesssim N^{-\tfrac{\nu}{\kappa+1}}$ and so that $|1-h(\lambda)|\le N^{-\tfrac{\kappa}{2(\kappa+1)}}\lambda^{-\tfrac{\kappa}{2\nu}}$ for  $\lambda\gtrsim N^{-\tfrac{\nu}{\kappa+1}}$. The value $s_*=\tfrac{\nu}{\kappa+1}$ can be referred to as the \emph{loss localization scale} for the optimal algorithm.

Let us apply the obtained conditions to KRR and GF. KRR with regularization $\eta$ has $h_\eta(\lambda)=\tfrac{\lambda}{\lambda+\eta}$. Suppose that $\eta$ has scale $s_\eta,$ then $h_\eta$ has the scaling profile $S^{(h)}(s)=(s-s_\eta)\vee 0,$ while $1-h_\eta$ equals $\tfrac{\eta}{\lambda+\eta}$ and has the scaling profile $S^{(1-h)}(s)=(s_\eta-s)\vee 0.$ Recalling that $\nu>1,$ we see that condition 1) in Proposition~\ref{prop:nmnooptcond} is satisfied iff $s_\eta\le s_*=\tfrac{\nu}{\kappa+1}$. Condition 2) is more subtle: if $\tfrac{\kappa}{2\nu}\le 1,$ then it is  satisfied iff $s_\eta\ge s_*$, but in the case $\tfrac{\kappa}{2\nu}>1$ it is rather satisfied iff $s_\eta\ge \tfrac{\kappa}{2(\kappa+1)}$. We see, in particular, that in the case $\tfrac{\kappa}{2\nu}\le 1$ conditions 1) and 2) are simultaneously satisfied iff $s_\eta=s_*,$ while in the case $\tfrac{\kappa}{2\nu}> 1$ they cannot be simultaneously satisfied (and so KRR cannot achieve the optimal scaling $\tfrac{\kappa}{\kappa+1}$) -- an effect called \emph{saturation}~\citep{mathe2004saturation,bauer2007regularization}. See Figure~\ref{fig:NMNO_scalings} for an illustration. In contrast, GF $h_t(\lambda)=1-e^{-t\lambda}$ has no saturation: choosing $t$ to be of scale $-s_*$ satisfies both conditions of Proposition~\ref{prop:nmnooptcond} for all $\nu>1$ and $\kappa>0$. 

\subsection{Noisy observations and model equivalence} For our two data models, Circle and Wishart, the intuition behind the NMNO ansatz can be rigorously justified by showing that this ansatz represents the leading contribution to the true loss. For instance, consider the circle model with loss functional[\eqref{eq:circle_loss_functional} specified by~\eqref{eq:circle_N_deformation}. The empirical eigenvalues $\widehat{\lambda}_k=[\lambda_k]_N=\lambda_k + O(N^{-\nu})$ for $|k|<N/2$, so the scale $\nu$ of the correction $|\widehat{\lambda}_k-\lambda_k|$ is higher than the scale $s$ of the eigenvalues $\lambda_k$ except for the eigenvalues of the highest scale $s=\nu$. This shows that the empirical and population quantities are significantly different only on the highest spectral scale $s=\nu$. Continuing this line of arguments, we get
\begin{theorem}\label{th:equiv}
    Assume that the learning algorithm $h(\lambda)$ is such that $h(\lambda)$ and $1-h(\lambda)$ have scaling profiles $S^{(h)}(s)$ and $S^{(1-h)}(s)$. Assume that the maps $\log\lambda\mapsto \log |h(\lambda)|$ and $\log\lambda\mapsto \log |1-h(\lambda)|$ are globally Lipschitz, uniformly in $N$. 
    Then, if $\nu=s$ is not a localization point of NMNO functional $L^{(\mathrm{nmno})}[h]$, Circle model specified by~\eqref{eq:circle_loss_functional} and Wishart model specified by~\eqref{eq:Wishart_learning_measure_second_moment},\eqref{eq:Wishart_learning_measures_first_moment_and_noise} are equivalent to the NMNO model in the limit $N\to\infty$:
    \begin{equation}\label{eq:circle_wishart_nmno_equivalence}
        L^{(\mathrm{nmno})}[h]=L^{(\mathrm{circle})}[h]\big(1+o(1)\big)=L^{(\mathrm{wishart})}[h]\big(1+o(1)\big).
    \end{equation}
\end{theorem}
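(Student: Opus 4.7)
The plan is to decompose each of the three loss functionals according to the spectral scale $s\in[0,\nu]$ of the contributing eigenvalues and to establish a scale-by-scale asymptotic matching of the Circle and Wishart integrands with the NMNO integrand on every scale $s\in[0,\nu)$. The hypothesis that $\nu$ is not an NMNO-localization point will then imply, via the discrete analog of Proposition~\ref{prop:scaling1}, that the boundary scale $s=\nu$ contributes only $o(L^{(\mathrm{nmno})}[h])$ and can be absorbed into the $1+o(1)$ factor of~\eqref{eq:circle_wishart_nmno_equivalence}. The global Lipschitz assumption on $\log\lambda\mapsto\log|h(\lambda)|$ and $\log\lambda\mapsto\log|1-h(\lambda)|$ will be used throughout to freely replace $h(\widehat{\lambda})$ by $h(\lambda)$ (and likewise $1-h$) whenever $\widehat{\lambda}/\lambda=1+o(1)$.

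For the Circle model, the core elementary estimate is that at frequencies $k$ of scale $s/\nu$ with $s<\nu$, the $N$-deformation~\eqref{eq:circle_N_deformation} satisfies
\begin{equation*}
    [\lambda_k^a|c_k|^{2b}]_N=\lambda_k^a|c_k|^{2b}\bigl(1+O(N^{-(\nu-s)})\bigr),
\end{equation*}
since the $n\ne 0$ tail is controlled by $\lambda_N\sim N^{-\nu}$. Applied with $(a,b)=(1,0),(2,0),(1,1),(0,1)$ this yields $\widehat\lambda_k/\lambda_k=1+o(1)$, $[\lambda_k^2]_N/[\lambda_k]_N^2=1+o(1)$, $[\lambda_k|c_k|^2]_N/[\lambda_k]_N=|c_k|^2(1+o(1))$, and $[|c_k|^2]_N=|c_k|^2(1+o(1))$. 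Substituting into~\eqref{eq:circle_loss_functional} reduces each $k$-summand of scale $<\nu$ to the NMNO integrand $\tfrac12\bigl[\tfrac{\sigma^2}{N}h^2(\lambda_k)+|c_k|^2(1-h(\lambda_k))^2\bigr]$ up to a $1+o(1)$ factor. The discrete scaling machinery then sums these contributions to $L^{(\mathrm{nmno})}[h](1+o(1))$, while the residual frequencies $k\sim N$, which carry the full scale-$\nu$ correction, contribute at most a bounded multiple of the NMNO scale-$\nu$ mass, which is $o(L^{(\mathrm{nmno})}[h])$ by hypothesis.

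For the Wishart model, I would first solve the fixed-point equation~\eqref{eq:fixed_point_main} asymptotically at $z=\lambda+\iu 0_+$ on each scale $s\in[0,\nu)$. A standard Marchenko--Pastur-type analysis in the bulk produces the scale-$s$ expansion of $r(z)$ and of $r^{-1}(z)$, from which the scale-$s$ asymptotics of $\Im u,\Im v,\Im w$ are extracted through~\eqref{eq:wishart_model_vuw_functions}. Plugging these into~\eqref{eq:Wishart_learning_measures_first_moment_and_noise} one reads off the NMNO densities
\begin{equation*}
    \tfrac{\rho^{(1)}(d\lambda)}{d\lambda}=\mu_c'(\lambda)(1+o(1)),\qquad \tfrac{\rho^{(\eps)}(d\lambda)}{d\lambda}=\mu_\lambda'(\lambda)(1+o(1)),
\end{equation*}
and the diagonal part of $\rho^{(2)}$ in~\eqref{eq:Wishart_learning_measure_second_moment} collapses to $\mu_c'(\lambda_1)\delta(\lambda_1-\lambda_2)$. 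Together with the same scale-$\nu$ absorption used in the Circle case, this reproduces the quadratic NMNO functional at leading order on every bulk scale.

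The main obstacle is the off-diagonal piece of $\rho^{(2)}$, which is not supported on the diagonal and carries a $(\lambda_1-\lambda_2)^{-1}$ singularity, so it is not a priori subleading. My plan for it is to antisymmetrize: the contribution to $L[h]$ reads
\begin{equation*}
    \frac{1}{2}\iint h(\lambda_1)h(\lambda_2)\,\frac{\Im u(\lambda_2)\,\Im r^{-1}(\lambda_1)-\Im u(\lambda_1)\,\Im r^{-1}(\lambda_2)}{(\lambda_1-\lambda_2)\,\pi^2\lambda_1\lambda_2}\,d\lambda_1d\lambda_2,
\end{equation*}
which I would treat as a principal-value integral against a kernel antisymmetric in $(\lambda_1,\lambda_2)$. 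Using that $\Im u(\lambda_j)$ and $\Im r^{-1}(\lambda_j)$ are each concentrated on the scale of $\lambda_j$, the only non-vanishing contributions come from $\lambda_1,\lambda_2$ of a common scale $s$; the Lipschitz-in-$\log\lambda$ regularity of $h$ then tames the singular denominator via the difference $h(\lambda_1)-h(\lambda_2)$ obtained after antisymmetrization, and a direct scale count shows the resulting double integral is strictly higher-scale than the diagonal $\rho^{(2)}$ contribution on every $s\in[0,\nu)$. Combining the scale-by-scale NMNO matching in the bulk with the hypothesized negligibility of the edge scale $s=\nu$ then yields~\eqref{eq:circle_wishart_nmno_equivalence}.
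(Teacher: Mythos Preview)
Your Circle-model argument is essentially the paper's: decompose the summands via the $N$-deformation estimate, show each correction is of strictly higher scale than the NMNO integrand away from $s=\nu$, and absorb the edge contribution using the non-localization hypothesis. The paper is more explicit in that it writes $L[h]=L^{(\mathrm{nmno})}[h]+\delta L_{\mathrm{alg}}[h]+\delta L_{\mathrm{coeff}}[h]$ and bounds the scale of each correction term individually (your generic exponent $\nu-s$ is not quite right for all $(a,b)$, but the $o(1)$ conclusion is), yet the strategy is the same.

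The Wishart off-diagonal argument, however, has a genuine gap. The kernel
\[
\frac{\Im u(\lambda_2)\,\Im r^{-1}(\lambda_1)-\Im u(\lambda_1)\,\Im r^{-1}(\lambda_2)}{\lambda_1-\lambda_2}
\]
is \emph{symmetric} in $(\lambda_1,\lambda_2)$ (both numerator and denominator change sign under the swap), so integrating it against the symmetric function $h(\lambda_1)h(\lambda_2)$ produces no cancellation and no factor $h(\lambda_1)-h(\lambda_2)$. Your antisymmetrization step therefore does not go through, and the Lipschitz regularity of $h$ is not what controls this term. What the paper actually does is note that the $(\lambda_1-\lambda_2)^{-1}$ singularity is removable (the numerator vanishes to first order on the diagonal), then use the explicit power-law asymptotics $\Im u(\lambda)\sim\pi\lambda\mu_c(\lambda)$ and $\Im r^{-1}(\lambda)\sim\tfrac{\pi}{\nu N}\lambda^{1-1/\nu}$ to obtain the two-scale bound $S[\rho^{(2)}_{\mathrm{off}}](s_1,s_2)\ge 1-s_1-s_2$. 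A two-dimensional version of Proposition~\ref{prop:scaling1} then gives $S[L_{\mathrm{off}}[h]]\ge 1>\tfrac{\kappa}{\kappa+1}$, i.e.\ the whole off-diagonal contribution is $O(N^{-1})$ and hence negligible against $L^{(\mathrm{nmno})}[h]$. This is a direct scale count on the integrand, not a cancellation argument.
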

We prove the theorem separately for Circle and Wishart models in Sections~\ref{sec:circle_noisy} and~\ref{sec:Wishart_noisy}. Note that the condition of equivalence is specified using only NMNO model. Thus, if satisfied, it allows analyzing the simple functional~\eqref{eq:nmno} instead of the more complicated ones~\eqref{eq:Wishart_learning_measure_second_moment}, \eqref{eq:Wishart_learning_measures_first_moment_and_noise} and~\eqref{eq:circle_loss_functional}. The requirement that $s=\nu$ is not a localization point is reasonable as the heuristic derivation of the NMNO model in Section~\ref{sec:models} included an assumption that the loss is localized at moderate scales.  

\subsection{Noiseless observations}\label{sec:noiseless_observations} 
We focus on Circle model to describe main observations, with derivation deferred to Section~\ref{sec:circle_model_app} and Wishart model results deferred to Section~\ref{sec:Wishart_model_app}. Let us write the loss functional perturbatively
\begin{equation}\label{eq:circle_model_noiseless_functional_scaling}
    L[h]=\sum\limits_{k=-\frac{N}{2}}^{\frac{N}{2}} \Big[\frac{|c_k|^2}{2}\big(1+o(\tau)\big)\Big(h(\widehat{\lambda}_k)-h^*(\widehat{\lambda}_k)\Big)^2+N^{-\kappa-1}\big(O(1)+O(\tau^{2\nu-\kappa-1})\big)\Big]
\end{equation}
with $\tau\equiv\tfrac{|k|+1}{N}$ as a small parameter. From this, we make several observations. First, take $h=h^*$. Then, if $\kappa<2\nu$, the loss localizes on $s=\nu$ (i.e. the sum is accumulated at $|k|\sim N$) and has the rate $O(N^{-\kappa})$. This rate is natural and reflects that we are able to learn target function everywhere except at inaccessible scales $\lambda\ll N^{-\nu}$. Moreover, by examining the first term in~\eqref{eq:circle_model_noiseless_functional_scaling}, we see that this rate is not destroyed by learning algorithms sufficiently close to the optimal: $|h(\widehat{\lambda}_k)-h^*(\widehat{\lambda}_k)|^2=o(\tau^{\kappa})$. 

The situation changes dramatically for $\kappa>2\nu$: the optimal loss $L[h^*]$ becomes dominated by $O(\tau^{2\nu-\kappa-1})$ term in~\eqref{eq:circle_model_noiseless_functional_scaling} and localizes at $s=0$ (i.e. the sum is accumulated at $|k|\sim 1$), changing the rate to $O(N^{-2\nu})$. 
We call this behavior \emph{saturation} since it has features similar to KRR saturation effect for noisy observations: transition at $\kappa=2\nu$; change of error rate; localization at $s=0$. However, noisy KRR saturation is algorithm-driven and can be removed by replacing KRR with GF, while saturation in~\eqref{eq:circle_model_noiseless_functional_scaling} persists even for optimal algorithm $h^*(\lambda)$. Interestingly, the optimal loss in the saturated phase can be achieved (asymptotically) by KRR with a negative regularization: $L[h^*]=L[h_{\eta^*}](1+o(1)), \; \eta^*<0$. Denoting Riemann zeta function as $\zeta(\alpha)$, we have 
\begin{equation}\label{eq:circle_loss_KRR_saturated}
    L[h_\eta] = \frac{1}{2}\Big((\eta N^\nu+2\zeta(\nu))^2+2\zeta(2\nu)\Big)\left[\sum_{l=-\infty}^{\infty}\frac{|c_l|^2}{\lambda_l^2}\right] N^{-2\nu}\bigl(1 + o(1)\bigr), 
\end{equation}
with minimum at $\eta^*=-2\zeta(\nu)N^{-\nu}$. The benefit of negative regularization was empirically observed in~\citep{kobak2020optimal} and theoretically approached in~\citep{tsigler2023benign}. Within our framework, KRR with negative regularization can be thought of as a special case of \emph{overlearning} the observations. 

In fact, overlearning also occurs in the $\kappa<2\nu$ phase. 
In Section~\ref{sec:circle_noiseless} we show that the loss functional can be asymptotically written (see~\eqref{eq:circle_loss_nonsaturated}) using $\tau=\tfrac{|k|+1}{N}$ for (continuous) eigenspace indexing, for example $\widehat{\lambda}_\tau\leftarrow\widehat{\lambda}_k$. Then, denoting Hurwitz zeta function as $\zeta(\alpha,x)$, the optimal algorithm becomes 
\begin{equation}\label{eq:circle_noiseless_optalg}
    h^*(\widehat{\lambda}_\tau)=\frac{\zeta^{(\nu+\kappa+1)}_\tau\zeta^{(\nu)}_\tau}{\zeta^{(\kappa+1)}_\tau\zeta^{(2\nu)}_\tau}, \qquad \zeta_x^{(\alpha)}\equiv\zeta(\alpha,x)+\zeta(\alpha,1-x).
\end{equation}
The optimal algorithm~\eqref{eq:circle_noiseless_optalg} has an intriguing property (see also Figure~\ref{fig:optimal_algorithms}): it interpolates the data $h^*(\lambda)=1$ when $\kappa=\nu-1$, and, otherwise, the sign of $1-h^*(\lambda)$ coincides with the sign of $\nu-1-\kappa$. In other words, for hard targets with $\kappa<\nu-1$, classical regularization by underfitting the observation is required for optimal performance. But, for easier targets with $\kappa>\nu-1$ it becomes optimal to overlearn the training data in contrast to conventional wisdom. The same holds for Wishart model (see Sec.~\ref{sec:Wishart_noiseless}). Thus, we identify the point $\kappa=\nu-1$ as \emph{overlearning transition}.

\begin{figure}
    \centering
    \includegraphics[scale=0.35, trim=4mm 59mm 4mm 4mm, clip]{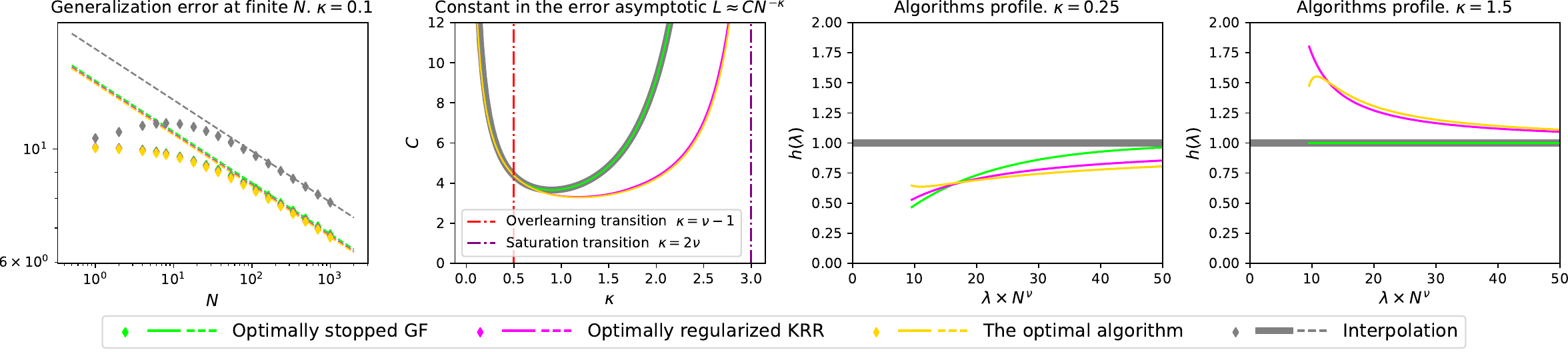}

    \vspace{-2mm}
    \caption{Generalization error (\textbf{left}) and profiles $h(\lambda)$ (\textbf{right}) of various algorithms applied to the Circle model with $\nu=1.5$ and noiseless observations with different $\kappa$. Before overlearning transition $\kappa=\nu-1$ optimal algorithms underlearn observations ($h(\lambda)<1$) while starting to overlearn them ($h(\lambda)>1$) after the transition. For details and extended discussion see Section~\ref{sec:experiments}.} 
    \label{fig:optimal_algorithms}
\end{figure}

\section{Discussion}
We have extended results of type \eqref{eq:omni_estimate} to general spectral algorithms, as given by our loss functional \eqref{eq:circle_loss_functional} for Circle model and \eqref{eq:Wishart_learning_measures_first_moment_and_noise},\eqref{eq:Wishart_learning_measure_second_moment} for Wishart model. It allows to address questions that require going beyond specific (KRR,GF) algorithms. For example, we show that the nature of saturation at $\kappa=2\nu$ is different for noisy and noiseless observations, with the latter being an intrinsic property of the given kernel and data model, and can not be removed by any choice of the learning algorithm.

Our formalism of spectral localization and scaling, while being compact, provides a simple and transparent picture of the variety of convergence rates under power-law spectra distributions. Also, the equivalence result between our two data models and naive model of noisy observations~\eqref{eq:nmno} relies on the straightforward estimation of the scale of perturbation of population quantities by finite size $N$ of the training dataset. Thus, an interesting direction for future research would be to check whether the equivalence holds for other data and kernel models.

Finally, let us mention the advantage of full loss asymptotic $L=CN^{-\#}(1+o(1))$ compared to the rates $L=O(N^{-\#})$. In this work, we used the full asymptotic to obtain the shape of optimal algorithm $h^*(\lambda)$. In the noiseless case, the knowledge of $h^*(\lambda)$ allowed us to characterize the overlearning transition at $\kappa=\nu-1$, which otherwise would be invisible on the level of the rate $O(N^{-\kappa})$. Investigating whether $\kappa=\nu-1$ remains the point of overlearning phase transition for more general data models is an interesting direction for future research.

\subsubsection*{Acknowledgments} We thank Eric Moulines for insightful discussions during the initial stage of the work.

\addcontentsline{toc}{section}{References}
\bibliography{main}
\bibliographystyle{iclr2024_conference}




\newpage

\appendix

\tableofcontents

\section{Spectral perspectives}\label{sec:spectral_perspectives}
In this section, we show general relations between generalization error~\eqref{eq:expected_pop_loss}, population spectra $\lambda_l,c_l$, and learning algorithms $h(\lambda)$. Recall that profile $h(\lambda)$ is applied to the eigenvalues of the kernel matrix $\Kv$ (i.e. values of the kernel function evaluated on the training dataset $\mathcal{D}_N$). Therefore, relating generalization error and $h(\lambda)$ involves the properties of the empirical spectrum: eigenvalues of $\Kv$ and the related eigendecomposition of the observation vector $\yv$. 

With these remarks in mind, we may say that we are dealing with \emph{population} and \emph{empirical} perspectives on generalization error~\eqref{eq:expected_pop_loss}. We start with population perspective in Section~\ref{sec:pop_spectral_perspective}, which is behind the classical result~\eqref{eq:omni_estimate}. Then, we proceed with the empirical perspective in Section~\ref{sec:emp_spectral_perspective}, which basically amounts to proving Proposition~\ref{prop:loss_quadratic_func} and introducing learning measures $\rho^{(2)},\rho^{(1)},\rho^{(\varepsilon)}$. Finally, In Section~\ref{sec:joint_pop_emp_spectral_perspective}, we combine population and empirical perspectives. While probably less conceptual than the first two perspectives, the joint population-empirical perspective is an essential step in our derivation of the loss functional for the Wishart model.   

\subsection{Population perspective: transfer matrix}\label{sec:pop_spectral_perspective}
   A central object for the population perspective is the \emph{transfer matrix} $\widehat{T}_{ll'}$ introduced explicitly, for example, in~\citep{Simon_2022} in the context of KRR. Specifically, let us decompose the prediction~\eqref{eq:kernel_method_prediction} over kernel eigenfunctions $\phi_l(\xv)$ as $\widehat{f}(\xv) = \sum_l \widehat{c}_{l} \phi_l(\xv)$. Then, the prediction coefficients $\widehat{c}_{l}$ can be written as
  \begin{equation}
    \widehat{c}_{l} = \sum_{l'} \widehat{T}_{ll'} c_{l'} + \sigma\widehat{\eps}_{l}, \quad \widehat{T}_{ll'} = \lambda_l \phiv_{l}^T \Kv^{-1} h\big(\tfrac{1}{N}\Kv\big) \phiv_{l'}, \quad \widehat{\eps}_{l} = \lambda_l \phiv_{l}^T \Kv^{-1} h\big(\tfrac{1}{N}\Kv\big)\epsilonv,
  \end{equation}
  where $\phiv_{l}$ is the vector of eigenfunctions computed at the dataset inputs $(\phiv_{l})_i=\phi_l(\xv_i)$, and $(\epsilonv)_i=\eps_i$ is the vector of observation noise. Note that the transfer matrix $\widehat{T}_{ll'}$ has a clear interpretation of the rate at which the information $c_{l'}$ contained in spectral component $l'$ is transferred to spectral component $l$. The population noise component $\widehat{\eps}_{l}$ describes how much of the of the observation noise $\epsilonv$ was learned in the $l$-th population spectral component. 

  The population loss~\eqref{eq:expected_pop_loss} (sometimes we use this term as a synonym to generalization error) is straightforwardly expressed through the first and second moments of the transfer matrix, and the variance of population noise components
  \begin{equation}\label{eq:expected_pop_loss_pop_persp}
    \!\!\!\!\!\! L_{\widehat{f}} = \frac{1}{2}\mathbb{E}_{\DC_N,\eps}\sum_l \big(\widehat{c}_{l} - c_l\big)^2 = \frac{1}{2}\sum_{l_1, l_2} c_{l_1} \big(\sum_{l'}T^{(2)}_{l_1 l'l' l_2} - 2T^{(1)}_{l_1 l_2} + \delta_{l_1 l_2}\big)c_{l_2} + \frac{\sigma^2}{2N} \sum_l \eps_{l}^{(2)},
  \end{equation}
  where
  \begin{align}
    \label{eq:transfer_matrix_first_moment}
    T^{(1)}_{ll'}& = \mathbb{E}_{\DC_N}\left[\widehat{T}_{ll'}\right], \\
    \label{eq:transfer_matrix_second_moment}
    T^{(2)}_{l_1 l_1'l_2' l_2}& = \mathbb{E}_{\DC_N}\left[\widehat{T}_{l_1' l_1} \widehat{T}_{l_2' l_2}\right],\\
    \eps_{l}^{(2)} &= N\mathbb{E}_{\DC_N, \eps}[(\widehat{\eps}_{l})^2]=\mathbb{E}_{\DC_N}\left[\lambda_l^2\frac{1}{N}\phiv_l^T \Big(\big(\tfrac{1}{N}\Kv\big)^{-1}h\big(\tfrac{1}{N}\Kv\big)\Big)^2\phiv_l\right].
  \end{align}
  The representation~\eqref{eq:expected_pop_loss_pop_persp} makes the most explicit dependence of the loss on population coefficients $c_l$, while the dependence on the learning algorithm $h(\lambda)$ and population eigenvalues $\lambda_l$ is hidden inside moments $T^{(1)}_{ll'}, \; T^{(2)}_{l_1 l_1'l_2' l_2}$ of the transfer matrix and noise variance $\eps_{l}^{(2)}$. Yet, for the case of KRR the results~\eqref{eq:omni_estimate} shows that the dependence on $\lambda_l$ can be made fairly explicit.

\subsection{Empirical perspective: learning measure}\label{sec:emp_spectral_perspective}
  As this perspective focuses on empirical spectrum of kernel matrix and observation vector, we start with writing eigendecomposition of $\Kv, \fv^* = \yv - \sigma \epsilonv$ and $\epsilonv$ as
  \begin{align}
    \label{eq:emp_kernel_spec_decomp}
    &\frac{1}{N}\Kv = \sum_{k = 1}^N \widehat{\lambda}_{k} \uv_{k} \uv_{k}^T, \quad \uv_{k}^T \uv_{k'} = \delta_{kk'}, \\
    \label{eq:emp_target_spec_decomp}
    &\frac{1}{\sqrt{N}} \fv^* = \sum_{k = 1}^N \widehat{c}_{k} \uv_{k}, \\
    &\epsilonv = \sum_{k = 1}^N \widehat{\eps}_k \uv_{k},
  \end{align}
  
    where in the last line $\widehat{\eps}_k$ are i.i.d. normal Gaussian because orthogonal transformation to empirical eigenbasis $\{\uv_{k}\}_{k = 1}^N$ leaves the distribution of isotropic Gaussian vectors $\epsilonv \sim \NC(0, \Iv)$ unchanged.

  Then, inserting spectral decomposition~\eqref{eq:emp_kernel_spec_decomp} of the empirical kernel matrix into the prediction~\eqref{eq:kernel_method_prediction} gives
  \begin{equation}\label{eq:prediction_through_learning_measures}
    \begin{split}
      \widehat{f}_h(\xv) &= \Big(\kv(\xv)\Big)^T \left[\frac{1}{N}\sum_{k = 1}^N \uv_{k}\uv_{k}^T \frac{h(\widehat{\lambda}_{k})}{\widehat{\lambda}_{k}}\right] \fv^* + \frac{\sigma}{N}\sum_{k = 1}^N \widehat{\eps}_k \frac{h(\widehat{\lambda}_{k})}{\widehat{\lambda}_{k}} \big(\kv(\xv)\big)^T\uv_{k} \\
      &= \int h(\lambda) \widehat{\rho}^{(f)}(\xv, d\lambda) + \frac{\sigma}{\sqrt{N}} \int h(\lambda) \widehat{\rho}^{(\eps)}(\xv, d\lambda),
    \end{split}
  \end{equation}
  where $\widehat{\rho}_{N}^{(f)}(\xv, d\lambda)$ and $\widehat{\rho}_{N}^{(\eps)}(\xv, d\lambda)$ are target and noise \emph{learning measures}:
  \begin{align}
    \label{eq:learning_measure_def}
    \widehat{\rho}^{(f)}(\xv, d\lambda) &= \big(\kv(\xv)\big)^T \left[\frac{1}{N}\sum_{k = 1}^N \frac{\uv_{k}\uv_{k}^T }{\lambda} \delta_{\widehat{\lambda}_{k}}\right] \fv^*, \\
    \widehat{\rho}^{(\eps)}(\xv, d\lambda) &= \frac{1}{\sqrt{N}} \sum_{k = 1}^N \widehat{\eps}_k \frac{\big(\kv(\xv)\big)^T\uv_{k}}{\lambda}\delta_{\widehat{\lambda}_{k}}.
  \end{align}
  The target learning measure defines what pattern is learned from the target function at the neighborhood $d\lambda$ of the empirical spectral position $\lambda$. Similarly, the noise learning measure defines the patterns of the noise learned in the neighborhood of $\lambda$. 

  As for the population perspective, we substitute the expression of prediction in terms of learning measures~\eqref{eq:prediction_through_learning_measures} into the population loss~\eqref{eq:expected_pop_loss}
  \begin{equation}\label{eq:loss_functional_derivation}
    \begin{split}
      L_{\widehat{f}} =& \frac{1}{2}\mathbb{E}_{\DC_N,\eps}\left[\left\|\int h(\lambda) \widehat{\rho}^{(f)}(\xv, d\lambda) + \frac{\sigma}{\sqrt{N}} \int h(\lambda) \widehat{\rho}^{(\eps)}(\xv, d\lambda)-f^*(\xv)\right\|^2\right] \\
      =& \frac{1}{2}\Bigg[\int\int h(\lambda_1)h(\lambda_2)\mathbb{E}_{\DC_N}\big\langle \widehat{\rho}^{(f)}(\xv, d\lambda_1), \widehat{\rho}^{(f)}(\xv, d\lambda_2)\big\rangle \\
      &\quad -2\int h(\lambda)\mathbb{E}_{\DC_N}\big\langle f^*(\xv), \widehat{\rho}^{(f)}(\xv, d\lambda) \big\rangle + \langle f^*(\xv),f^*(\xv)\rangle\\
      &\quad +2\frac{\sigma}{\sqrt{N}}\mathbb{E}_{\mathcal{D}_N}\left\langle f^*(\xv) - \int h(\lambda)\rho^{(f)}(\xv,d\lambda), \int h(\lambda)\mathbb{E}_\eps \rho^{(\eps)}(\xv,d\lambda)\right\rangle \\
      &\quad + \frac{\sigma^2}{N}\int\int h(\lambda_1)h(\lambda_2)\mathbb{E}_{\DC_N,\eps}\big\langle \widehat{\rho}^{(\eps)}(\xv, d\lambda_1), \widehat{\rho}^{(\eps)}(\xv, d\lambda_2)\big\rangle \Bigg].
    \end{split}
  \end{equation}
Now, observe that the term in the second-to-last line in~\eqref{eq:loss_functional_derivation} is linear in noise learning measure averaged over $\varepsilon$ which is zero: $\mathbb{E}_\eps \rho^{(\eps)}(\xv,d\lambda)=\frac{1}{\sqrt{N}} \sum_{k = 1}^N\frac{\big(\kv(\xv)\big)^T\uv_{k}}{\lambda}\delta_{\widehat{\lambda}_{k}}\mathbb{E}_\eps \widehat{\eps}_k=0$ since $\mathbb{E}_\eps \widehat{\eps}_k=0$. Similarly, taking the expectation over observation noise $\eps$ helps to simplify the last term
\begin{equation}
    \begin{split}
        \mathbb{E}_{\eps}\big\langle \widehat{\rho}^{(\eps)}(\xv, d\lambda_1), \widehat{\rho}^{(\eps)}(\xv, d\lambda_2)\big\rangle &= \sum_{k_1,k_2} \delta_{\widehat{\lambda}_{k_1}}(d\lambda_1)\delta_{\widehat{\lambda}_{k_2}}(d\lambda_2)\frac{\langle\uv_{k_1}^T\kv(\xv),\uv_{k_2}^T\kv(\xv)\rangle}{N\lambda_1\lambda_2}\mathbb{E}_\eps \widehat{\eps}_{k_1}\widehat{\eps}_{k_2}\\
        &=\delta_{\lambda_1}(d\lambda_2)\frac{1}{N}\sum_k \delta_{\widehat{\lambda}_{k}}(d\lambda_2) \frac{\|\uv_{k}^T\kv(\xv)\|^2}{\lambda^2},
    \end{split}
\end{equation}
where we have used $\mathbb{E}\widehat{\eps}_{k_1}\widehat{\eps}_{k_2}=\delta_{k_1k_2}$. Now, one can recognize the loss functional stated in Proposition~\ref{prop:loss_quadratic_func}: the first 3 terms and the last term of~\eqref{eq:loss_functional_derivation} correspond to the respective terms of~\eqref{eq:loss_functional}. In other words, the learning measures announced in Proposition~\ref{prop:loss_quadratic_func} are given by      
  \begin{align}
    \label{eq:learning_measure_first_mom}
    \rho^{(1)}(d\lambda) &= \mathbb{E}_{\DC_N}\left[\big\langle f^*(\xv), \widehat{\rho}^{(f)}(\xv, d\lambda) \big\rangle\right], \\
    \label{eq:learning_measure_second_mom}
    \rho^{(2)}(d\lambda_1, d\lambda_2) &= \mathbb{E}_{\DC_N}\left[\big\langle \widehat{\rho}^{(f)}(\xv, d\lambda_1), \widehat{\rho}^{(f)}(\xv, d\lambda_2)\big\rangle\right], \\
    \label{eq:noise_variance_learning_measure}
    \rho^{(\eps)}(d\lambda) &= \mathbb{E}_{\DC_N}\left[\frac{1}{N} \sum_{k = 1}^N \frac{\bigl\|\uv_{k}^T\kv(\xv)\bigr\|^2}{\lambda^2} \delta_{\widehat{\lambda}_{k}}\right],
  \end{align}
Again, the loss functional~\eqref{eq:loss_functional} represents the empirical perspective on the generalization error, making the dependence on the learning algorithm $h(\lambda)$ very explicit. But, the dependence on the problem's kernel structure and target function is hidden inside measures $\rho^{(1)}(d\lambda)$ and $\rho^{(2)}(d\lambda_1, d\lambda_2)$.

\subsection{Joint population-empirical perspective: transfer measure}\label{sec:joint_pop_emp_spectral_perspective}
  To combine to perspective described above, consider a $l$-th spectral component of learning measure $\widehat{\rho}_{l}^{(f)}(d\lambda)\equiv \langle\phi_l(\xv),\widehat{\rho}^{(f)}(\xv, d\lambda)\rangle$. Then, inserting decomposition~\eqref{eq:population_spec_decomp} of the target function into target learning measure~\eqref{eq:learning_measure_def} allows to write 
  \begin{equation}
  \label{eq:learning_measure_to_transfer_measure}
    \widehat{\rho}_{l}^{(f)}(d\lambda) \equiv \langle\phi_l(\xv),\widehat{\rho}^{(f)}(\xv, d\lambda)\rangle = \sum_{l'} c_{l'} \widehat{\rho}_{ll'}^{(f)}(d\lambda),
  \end{equation}
  where 
  \begin{equation}\label{eq:transfer_measure}
    \widehat{\rho}_{ll'}^{(f)}(d\lambda) = \frac{\lambda_l}{\lambda} \frac{1}{N}\sum_{k = 1}^N \big(\phiv_{l}^T\uv_{k}\big)\big(\uv_{k}^T\phiv_{l'}\big)\delta_{\widehat{\lambda}_{k}}
  \end{equation}
  can be naturally called a \emph{transfer measure}. Now, we insert decomposition~\eqref{eq:learning_measure_to_transfer_measure} into~\eqref{eq:learning_measure_first_mom} and~\eqref{eq:learning_measure_second_mom}, as well as population eigendecomposition~\eqref{eq:population_spec_decomp} into~\eqref{eq:noise_variance_learning_measure}. The scalar products in~\eqref{eq:learning_measure_first_mom} and~\eqref{eq:learning_measure_second_mom} become 
  \begin{equation}
  \begin{split}
      \big\langle f^*(\xv), \widehat{\rho}^{(f)}(\xv, d\lambda) \big\rangle&=\sum_l c_l \widehat{\rho}^{(f)}_l(d\lambda)=\sum_{l,l'}c_l c_{l'}\widehat{\rho}^{(f)}_{ll'}(d\lambda), \\
      \langle \widehat{\rho}^{(f)}(\xv, d\lambda_1), \widehat{\rho}^{(f)}(\xv, d\lambda_2)\big\rangle&=\sum_l \widehat{\rho}^{(f)}_{l}(d\lambda_1)\widehat{\rho}^{(f)}_{l}(d\lambda_2)=\sum_{l,l_1,l_2}c_{l_1}c_{l_2}\widehat{\rho}^{(f)}_{ll_1}(d\lambda_1)\widehat{\rho}^{(f)}_{ll_2}(d\lambda_2).
  \end{split}
  \end{equation}
    As for the norm $\bigl\|\uv_{k}^T\kv(\xv)\bigr\|^2$ in~\eqref{eq:noise_variance_learning_measure}, we can use
    \begin{equation}
        \left\langle \kv(\xv),\big(\kv(\xv)\big)^T\right\rangle = \sum_l \lambda_l^2 \phiv_l\phiv_l^T.
    \end{equation}
  Combining the expressions above and noting that $\|f^*(\xv)\|^2=\sum_l c_l^2$ gives yet another representation of the population loss in terms of the first and second moment of the transfer measure and population decomposition of noise variance measure.
  \begin{align}
    \label{eq:expected_pop_loss_joint_persp}
    L[h] ={}& \frac{1}{2}\sum_{l_1, l_2} c_{l_1} \Big(\int h(\lambda_1)h(\lambda_2)\sum_{l'}\rho_{l_1 l'l'l_2}^{(2)}(d\lambda_1, d\lambda_2) -2\int h(\lambda) \rho^{(1)}_{l_1 l_2}(d\lambda) + \delta_{l_1 l_2}\Big)c_{l_2} \\
    &+ \frac{\sigma^2}{2N} \sum_l \int h(\lambda)^2 \rho_{l}^{(\eps)}(d\lambda),
  \end{align}
  where
  \begin{align}
    \label{eq:transfer_measure_first_moment}
    \rho^{(1)}_{ll'}(d\lambda)& = \mathbb{E}_{\DC_N}\left[\widehat{\rho}_{ll'}^{(f)}(d\lambda)\right], \\
    \label{eq:transfer_measure_second_moment}
    \rho^{(2)}_{l_1 l_1'l_2' l_2}(d\lambda_1, d\lambda_2)& = \mathbb{E}_{\DC_N}\left[\widehat{\rho}_{l_1' l_1}^{(f)}(d\lambda_1) \widehat{\rho}_{l_2' l_2}^{(f)}(d\lambda_2)\right], \\
    \rho_{l}^{(\eps)}(d\lambda) &= \mathbb{E}_{\DC_N}\left[\frac{\lambda_l^2}{\lambda^2} \frac{1}{N}\sum_{k = 1}^N \big(\phiv_{l}^T\uv_{k}\big)^2\delta_{\widehat{\lambda}_{k}}\right].
  \end{align}
  If the moments of transfer measure  $\rho^{(1)}_{ll'}(d\lambda),\; \rho^{(2)}_{l_1 l_1'l_2' l_2}(d\lambda_1, d\lambda_2)$ and noise variance measure $\rho_{l}^{(\eps)}(d\lambda)$ are known, the representation~\eqref{eq:expected_pop_loss_joint_persp} connects spectral distribution $\lambda_l, c_l$ of the problem and learning algorithm $h(\lambda)$ with the population loss, thus justifying the name \emph{joint population-empirical spectral perspective}. 

\section{Gradient-based algorithms}\label{sec:Iterative_algorithms}
The purpose of this section is two-fold. First, in Section~\ref{sec:kernel_form_of_predictors}, we support our examples of $h(\lambda)$ provided in Section~\ref{sec:loss_functionals} with the respective derivations. This amounts to show that for linear models trained with a gradient-based algorithm, the predictor during optimization can be written in the form~\eqref{eq:kernel_method_prediction} with a specific choice of $h(\lambda)$. Second, in Section~\ref{sec:spec_alg_implementation} try to connect general spectral algorithms specified by some profile $h(\lambda)$ with gradient-based optimization, which was not included in the main paper due to the space constraints. For that, we provide a simple construction based on a pair of GF processes.  

\subsection{Kernel form of predictors}\label{sec:kernel_form_of_predictors}
To consider gradient-based optimization for the kernel method setting discussed in the main paper, we need to introduce a linear parametric model $\widehat{f}(\wv,\xv)$ whose parameters $\wv$ will be updated during the optimization process. Starting with a kernel $K(\xv,\xv')$ with population decomposition~\eqref{eq:population_spec_decomp}, let us define the model features $\psi_l(\xv)=\sqrt{\lambda_l}\phi_l(\xv)$. Then, combining the features in a vector $\psiv(\xv)\in\mathbb{R}^P, \; \big(\psiv(\xv)\big)_l=\psi_l(\xv)$, the linear model is defined as   
  \begin{equation}\label{eq:linear_model}
    \widehat{f}(\wv, \xv) = \langle\wv, \psiv(\xv)\rangle, \quad \wv\in\mathbb{R}^P.
  \end{equation}
For positive definite kernels $P=\infty$, and both model's features and parameters belong to RKHS $\HC_K$ of the kernel $K$: $\wv, \psiv(\xv)\in \HC_K$.

The (neural) tangent kernel~\citep{Jacot_2018} of the model~\eqref{eq:linear_model} is given by $\operatorname{NTK}_{\widehat{f}}(\xv,\xv')=\langle\psiv(\xv),\psiv(\xv')\rangle=\sum_l\lambda_l\phi_l(\xv)\phi_l(\xv')=K(\xv,\xv')$, thus reproducing our original kernel we have started with. Note that one can go in the opposite direction: start from the linear~\eqref{eq:linear_model} and then define a kernel method specified by the tangent kernel (NTK) of the linear model. An especially interesting example of the latter direction is a (non-linear) neural network $f(\thetav,\xv)$ linearized at $\thetav_0$ resulting in $f_\mathrm{lin}(\thetav,\xv)=f(\thetav_0,\xv)+\langle\thetav-\thetav_0,\nabla_{\thetav} f(\thetav_0, \xv)\rangle$. If constant prediction $f(\thetav_0,\xv)$ is ignored, the linearized neural network is also described by~\eqref{eq:linear_model} with gradients as the model features $\psiv(\xv) =\nabla_{\thetav} f(\thetav_0, \xv)$ and the displacement from $\thetav_0$ as model parameters $\wv = \thetav - \thetav_0$. 

To finalize the connection between the parameter-based setting and kernel-based setting from the main paper, linear model~\eqref{eq:linear_model} needs to be trained by minimizing quadratic loss on train dataset $\mathcal{D}_N$ 
  \begin{equation}\label{eq:emp_train_loss}
  \begin{split}
       L_{\DC_N}(\wv) &\equiv \frac{1}{2N} \sum_{i = 1}^N \bigl(\widehat{f}(\wv,\xv_i)-y_i\bigr)^2\\
       &= \frac{1}{2N} \sum_{i = 1}^N \bigl(\langle\wv, \psiv(\xv_i)\rangle - \langle\wv^*, \psiv(\xv_i)\rangle \bigr)^2 = \frac{1}{2} (\wv - \wv^*)^T \Hv (\wv - \wv^*),
  \end{split}
  \end{equation}
  where $\Hv = \tfrac{1}{N} \sum_{i = 1}^N \psiv(\xv_i)\otimes \psiv(\xv_i)$ is the Hessian of the train loss. In the following, it will be convenient to denote $\Psiv$ the matrix of features calculated on the training dataset $\big(\Psiv\big)_{li}=\psi_l(\xv_i)$, and use finite-dimensional notation for inner and outer product in the parameter space: i.e. the Hessian $\Hv=\tfrac{1}{N}\Psiv\Psiv^T$ and empirical kernel matrix $\Kv=\Psiv^T\Psiv$. In~\eqref{eq:emp_train_loss}, we have assumed that there exists a parameter value $\wv^*$ so that the model~\eqref{eq:linear_model} completely fits the observations $\Psiv^T\wv^*=\yv$. Considering the typical case $P>N$, this amounts to a feature matrix having full rank $\operatorname{rank}(\Psiv)=N$\footnote{A similar assumption was implicitly made in the main paper in the definition of the spectral algorithm~\eqref{eq:kernel_method_prediction}. Indeed, the existence of inverse $\Kv^{-1}$ also requires $\operatorname{rank}(\Psiv)=N$. This is a natural assumption for positive definite kernels $K$: empirical kernel matrix has full rank if evaluated on a set of distinct inputs $\xv_i$, which in turn happens almost surely for typical generation processes of $\mathcal{D}_N$ such as i.i.d. drawn $\xv_i$. In principle, one can also consider the case of non-full rank of $\Kv$, or alternatively non-existence of $\wv^*$ completely fitting the observations, and replace $\Kv^{-1}$ in~\eqref{eq:kernel_method_prediction} with pseudoinverse. For simplicity, we leave such cases to future work.}.

    Now, let us proceed with showing how gradient-based optimization fits into the family of spectral algorithms given by~\eqref{eq:kernel_method_prediction}. We start with the basic example of vanilla Gradient Descent with learning rate $\alpha$, having parameter update rule $\wv_{t+1}=\wv_{t}-\alpha \nabla_{\wv}L_{\DC_N}(\wv_t)$. For the quadratic loss~\eqref{eq:emp_train_loss}, this reduces to $\wv_{t+1}=\wv_{t}-\alpha \Hv(\wv_t-\wv^*)=\wv^*+(\Iv-\alpha\Hv)(\wv_t-\wv^*)$, or, equivalently,
    \begin{equation}
        \wv_{t+1}-\wv^*= (\Iv-\alpha\Hv)^t (\wv_0-\wv^*)=p_t(\Hv)(\wv_0-\wv^*).
    \end{equation}
    Here we introduced the polynomial $p_t(\lambda)=(1-\alpha\lambda)^t$ that will prove a useful notation in the following and is related to the profile $h_t(\lambda)$ as $p_t(\lambda)=1-h_t(\lambda)$. To obtain the representation~\eqref{eq:kernel_method_prediction} for the learned prediction $\widehat{f}_t(\xv)=\langle\wv_t,\psiv(\xv)\rangle$ we additionally need to set $\wv_0=0$. Then, 
    \begin{equation}
    \begin{split}
        \widehat{f}_t(\xv) = \langle\wv^*+p_t(\Hv)(\wv_0-\wv^*),\psiv(\xv)\rangle = \left\langle\big(\Iv-p_t(\Hv)\big)\wv^*,\psiv(\xv)\right\rangle=\langle h_t(\Hv)\wv^*,\psiv(\xv)\rangle
    \end{split}
    \end{equation}
    Next, note that polynomial $p_t(\lambda)$ is often called \emph{residual polynomial} due to its normalization at $\lambda=0$ as $p_t(0)=1$, or equivalently $h_t(0)=0$. The latter implies that we can write $h_t(\lambda)=\lambda q_t(\lambda)$ with some polynomial $q_t(\lambda)$ of degree $t-1$. Using an algebraic identity $\Jv^T\Jv q(\Jv^T\Jv)=\Jv^Tq(\Jv\Jv^T)\Jv$ for arbitrary matrix $\Jv$ and polynomial $q$ allows us to finally obtain~\eqref{eq:kernel_method_prediction}
    \begin{equation}
    \begin{split}
         \widehat{f}_t(\xv) &= \langle \Hv q_t(\Hv)\wv^*,\psiv(\xv)\rangle = \frac{1}{N} \langle \Psiv\Psiv^T q_t(\tfrac{1}{N}\Psiv\Psiv^T)\wv^*,\psiv(\xv)\rangle\\
         &=\frac{1}{N} \langle \Psiv q_t(\tfrac{1}{N}\Psiv^T\Psiv)\Psiv^T\wv^*,\psiv(\xv)\rangle \\
         &\stackrel{(1)}{=} \kv(\xv)^T  \frac{1}{N}q_t(\tfrac{\Kv}{N})\yv= \kv(\xv)^T \Kv^{-1} \tfrac{1}{N}\Kv q_t(\tfrac{1}{N}\Kv)\yv = \kv(\xv)^T \Kv^{-1}h_t(\tfrac{1}{N}\Kv)\yv,
    \end{split}
    \end{equation}
    where in (1) we have used that $\Psiv^T\wv^*=\yv$ and $\langle\Psiv,\psiv(\xv)\rangle=\kv(\xv)^T$. Thus, we have shown that for GD with learning rate $\alpha$ representation~\eqref{eq:kernel_method_prediction} holds with $h(\lambda)=1-(1-\alpha\lambda)^t$.

    The argument above can be easily extended to the case of Gradient Flow (GF). First note that under GF dynamics $\tfrac{d}{dt}\wv_t=-\nabla L_{\mathcal{D}_N}(\wv_t)$ the parameters are $\wv_t-\wv^*=e^{-\Hv t}(\wv_0-\wv^*)$, thus implying $p_t(\lambda)=e^{-\lambda t}$ and $h_t(\lambda)=1-e^{-\lambda t}$. Then, for $q_t(\lambda)=\tfrac{h_t(\lambda)}{\lambda}=\tfrac{1-e^{-\lambda t}}{\lambda}$ we also have $\Jv^T\Jv q(\Jv^T\Jv)=\Jv^Tq(\Jv\Jv^T)\Jv$, which can be seen, for example, by from the Taylor expansion $q_t(\lambda)=t\sum_{n=0}^\infty\frac{(-t\lambda)^n}{(n+1)!}$. The rest of the argument is unchanged.

    Gradient descent is also easily extended to arbitrary first-order iterative optimization algorithms. For all such algorithms, the parameter change $\wv_t-\wv_0$ on iteration $t$ belongs to an order $t$ Krylov subspace: $\wv_t-\wv_0 \in \operatorname{span}\{\Hv(\wv_0 - \wv^*), \Hv^2(\wv_0 - \wv^*), \ldots , \Hv^t(\wv_0 - \wv^*)\}$ (see, e.g. \citep{nesterov2003introductory}, page 42). This is equivalent to saying that $\wv_{t+1}-\wv^*=p_t(\Hv)(\wv_0-\wv^*)$ with $p_t(\lambda)$ being an arbitrary residual polynomial (i.e. normalized as $p_t(0)=1$). Since in our GD argument we did not use any property of its $p_t(\lambda)$ except for residual normalization, the argument continues to hold, making the representation~\eqref{eq:kernel_method_prediction} for all first-order iterative optimization algorithms.

\subsection{Implementing spectral algorithms with a pair of Gradient Flows}\label{sec:spec_alg_implementation}

  Recall that in the section above we get GD residual $p_t(\lambda)=1-h_t(\lambda)=e^{-\lambda t}$. An alternative way to get this would be to declare $\wv_t-\wv^*=p_t(\Hv)(\wv_0-\wv^*)$ and then rewrite original Gradient Flow ODE $\tfrac{d}{dt}\wv_t = -\nabla_\wv L_{\DC_N}(\wv_t)$ in terms of the residual $p_t(\lambda)$ as
  \begin{equation}\label{eq:basic_GF_ODE}
    \partial_t p_t(\lambda) = -\lambda p_t(\lambda), \quad p_0(\lambda) = 1,
  \end{equation}
  with an immediate solution $p_t(\lambda) = e^{-\lambda t}$. 

  Now, suppose we are given some target profile $\widetilde{h}(\lambda)$ with respective residual $\widetilde{p}(\lambda)=1-\widetilde{h}(\lambda)\ne0$ that needs to be implemented. Our strategy is to design such GF process with residual $q_t(\lambda)$ that converge to the desired profile at long training times $q_\infty(\lambda)\equiv\lim_{t\to\infty}q_t(\lambda)=\widetilde{p}(\lambda)$. The basic GF process given by~\eqref{eq:basic_GF_ODE} always converges to full interpolation of the training data: $\lim_{t\to\infty}p_t(\lambda)=\lim_{t\to\infty} e^{-\lambda t}=0$. We propose to overcome this interpolation property by using two optimization processes -- the first process is standard GF~\eqref{eq:basic_GF_ODE} converging to $0$, and the second GF process will use gradients of the first GF to converge to $\widetilde{p}(\lambda)\ne0$. These two process are defined by a pair of ODEs
  \begin{equation}\label{eq:opt_GF_ODE}
    \begin{cases}
      \frac{d}{dt}\wv_t &= -\nabla_{\wv} L_{\DC_N}(\wv_t), \\
      \frac{d}{dt}\uv_t &= -\bigl(1 + g(t)\bigr) \nabla_{\wv} L_{\DC_N}(\wv_t),
    \end{cases}
  \end{equation}
  where $\wv_t$ and $\uv_t$ are the parameters of the first and the second process respectively, and the initial conditions are assumed to be identical $\wv_0=\uv_0$. Associating residuals $p_t(\lambda),q_t(\lambda)$ to parameters $\wv,\uv$, ODE system~\eqref{eq:opt_GF_ODE} is rewritten as
  \begin{equation}\label{eq:opt_GF_residual_ODE}
    \begin{cases}
        \partial_t p_t(\lambda) &= -\lambda p_t(\lambda), \\
        \partial_t q_t(\lambda) &= -\big(1 + g(t)\big) \lambda p_t(\lambda), 
    \end{cases} \qquad p_0(\lambda) = q_0(\lambda) = 1.
  \end{equation}
  Here $g(t)$ is a function controlling the final solution $q_\infty(\lambda)$, and therefore needs to be chosen based on the desired solution $\widetilde{p}(\lambda)$.
  At a given control function $g(t)$ the final solution $q_\infty(\lambda)$ can be easily found by integrating the second equation as $\int_0^\infty \partial_t q_t(\lambda) = q_\infty(\lambda) - q_0(\lambda)$
  and substituting the solution of the basic GF $p_t(\lambda) = e^{-\lambda t}$ into $\partial_t q_t(\lambda)$. Then, setting the final solution to the desired value $q_\infty(\lambda)=\widetilde{p}(\lambda)$ leads to an integral equation on the control $g(t)$
  \begin{equation}
    \int_0^\infty \big(1+g(t)\big) \lambda e^{-\lambda t} dt = 1-\widetilde{p}(\lambda).
  \end{equation}
  Since $\int_0^\infty \lambda e^{-\lambda t} dt=1$, we cancel $1$ from both sides, arriving at Laplace transform of $g(t)$ on the left-hand side
  \begin{equation}\label{eq:opt_GF_control_equation}
    \int_0^\infty g(t) e^{-\lambda t} dt = -\frac{\widetilde{p}(\lambda)}{\lambda}.
  \end{equation}
  Thus, choosing $g(t)$ as an inverse Laplace transform of $-\frac{\widetilde{p}(\lambda)}{\lambda}$ implements the desired spectral algorithm $\widetilde{h}(\lambda)$.

\section{Scaling statements}\label{sec:scaling}
In the section, we give rigorous versions of the scaling statements outlined in Section~\ref{sec:spectral_scales}. In the end of the section we also provide Proposition~\ref{prop:scaling2} as a discrete analog of Proposition~\ref{prop:scaling1}, which will be required for the Circle model.

\paragraph{Intuitive derivation.} Before proceeding with rigorous proofs, let us give a simple intuition behind the scale of sums and integrals stated in Propositions~\ref{prop:scaling1} and~\ref{prop:scaling2}. 

We start with the integral case, following notations and assumptions from Proposition~\ref{prop:scaling1}. To find the scale of the integral $\int_{a_N}^1 |g_N(\lambda)|d\lambda$, let us divide the range of scales $s\in[0,a]$ into many small segments and look at the contribution from a single segment $[s_0,s_0+\varepsilon]$, corresponding to the interval of eigenvalues $\Lambda_{s_0}=[N^{-\varepsilon}\lambda_0,\lambda_0], \; \lambda_0=N^{-s_0}$. Due to the continuity of scaling profile $S^g(s)$ (see Lemma~\ref{lemma:scalingcont} below), we can neglect the change of the $g_N(\lambda)$ on $\Lambda_{s_0}$. Approximating the length of eigenvalues interval as $|\Lambda_{s_0}| \approx \lambda_0$, we can estimate the contribution to the integral from $[s_0,s_0+\varepsilon]$ as 
\begin{equation}\label{eq:scale_estimate}
    |g_N(\lambda)| \times |\Lambda_{s_0}| \sim |g_N(\lambda)| \times \lambda_0 \sim N^{-S^g(s_0)-s_0},
\end{equation}
which is exactly the expression under the minimum in Proposition~\ref{prop:scaling1}. To see that only the scales which minimize $G(s)=S^g(s)+s$ give a non-vanishing contribution to the final result, take two scales $s_1,s_2$ such that $G(s_2)-G(s_1)=\delta>0$. Then, according to~\eqref{eq:scale_estimate}, the contribution from scale segment $[s_2,s_2+\varepsilon]$ will be $N^{\delta}$ times smaller than from the scale segment $[s_1,s_1+\varepsilon]$, and therefore will vanish in the limit $N\to\infty$.

We can summarize the above with the following simple heuristic: replace $d\lambda$ with $\lambda$ under the integral and maximize the resulting expression to get an estimation of the integral scale. The case of a discrete sum goes along the same lines, leading to the heuristic of replacing the sum $\sum_k $with current index $k$: $\sum_k |g_N(\lambda_k^{(N)})| \to k\times |g_N(\lambda_k^{(N)})|$, and then maximizing the resulting expression.

\paragraph{Continuity of scaling profiles.}
\begin{lemma}\label{lemma:scalingcont}
    The scaling profile $S^{g}(s)$, if exists, is a continuous function of $s$.
\end{lemma}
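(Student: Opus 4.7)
I would argue by contradiction via a diagonal extraction. Suppose $S^{(g)}$ fails to be continuous at some $s_0\ge 0$. Then there exist $\delta>0$ and a sequence $s_n\to s_0$ with $|S^{(g)}(s_n)-S^{(g)}(s_0)|\ge \delta$ for every $n$. Passing to a subsequence, I may assume the sign of $S^{(g)}(s_n)-S^{(g)}(s_0)$ is constant; I will treat the case $S^{(g)}(s_n)\ge S^{(g)}(s_0)+\delta$ (the opposite sign is handled symmetrically, using the lower part of the $g$-bound instead of the upper).

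The plan is to manufacture a single sequence $\lambda_N\in(0,1]$ of scale exactly $s_0$ for which $|g_N(\lambda_N)|$ fails to have scale $S^{(g)}(s_0)$, contradicting the very defining property of the scaling profile at $s_0$. For each $n$, fix a reference sequence $\mu_N^{(n)}$ of scale $s_n$; by the profile hypothesis $|g_N(\mu_N^{(n)})|$ then has scale $S^{(g)}(s_n)$. Unfolding both scale definitions with $\epsilon=1/n$ yields a threshold $N_n$ such that, for all $N\ge N_n$,
\begin{equation*}
N^{-s_n-1/n}\le \mu_N^{(n)}\le N^{-s_n+1/n},\qquad |g_N(\mu_N^{(n)})|\le N^{-S^{(g)}(s_n)+1/n}.
\end{equation*}
I then choose a non-decreasing $n(N)\to\infty$ slowly enough that $N\ge N_{n(N)}$ for every $N$, and define the diagonal sequence $\lambda_N:=\mu_N^{(n(N))}$.

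Two verifications close the argument. First, since $s_{n(N)}\to s_0$ and $1/n(N)\to 0$, for every $\epsilon>0$ the bound $N^{-s_0-\epsilon}\le \lambda_N\le N^{-s_0+\epsilon}$ holds for all sufficiently large $N$, so $\lambda_N$ has scale exactly $s_0$. Second, combining the $g$-bound with $S^{(g)}(s_{n(N)})\ge S^{(g)}(s_0)+\delta$ yields $|g_N(\lambda_N)|\le N^{-S^{(g)}(s_0)-\delta+1/n(N)}\le N^{-S^{(g)}(s_0)-\delta/2}$ once $n(N)\ge 2/\delta$. But then $|g_N(\lambda_N)|=o\!\left(N^{-S^{(g)}(s_0)-\delta/4}\right)$, contradicting the $\omega$-half of the scale condition at $s_0$ applied with $\epsilon=\delta/4$. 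This contradiction forces $S^{(g)}$ to be continuous.

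The only delicate moment in the plan is tuning the diagonal rate $n(N)$ so that the various ``eventually'' clauses from the scale definitions all activate in unison; this is purely a matter of choosing $n(N)$ to grow slowly enough relative to the thresholds $N_n$, and I do not anticipate any substantive obstacle beyond this bookkeeping.
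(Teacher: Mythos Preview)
Your proof is correct and follows essentially the same diagonal-extraction argument as the paper: both assume a discontinuity, pick reference sequences $\mu_N^{(n)}$ of scale $s_n$, and splice them into a single sequence of scale $s_0$ whose $|g_N|$-values contradict the scale $S^{(g)}(s_0)$. The only cosmetic difference is that you write out the case $S^{(g)}(s_n)\ge S^{(g)}(s_0)+\delta$ (violating the $\omega$-half at $s_0$), while the paper writes out the mirror case $S^{(g)}(s_m)<S^{(g)}(s_*)-c$ (violating the $o$-half); both explicitly note the other sign is symmetric.
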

\begin{proof}
    Suppose that $S^{g}$ exists but is not continuous. Then there exists $s_*\ge 0$ and a sequence $s_m\to s_*$ such that $|S^{g}(s_m)-S^{g}(s_*)|>c$ for all $m$ and some positive constant $c$. Suppose w.l.o.g. that $S^{g}(s_m)<S^{g}(s_*)-c$ for all $m$. Consider some fixed $m$ and choose some sequence $\lambda_N^{(m)}$ of scale $s_m$. In particular, we then have 
    \begin{equation}
        \lambda_N^{(m)}< N^{-s_m+1/m}\text{ and }\lambda_N^{(m)}>N^{-s_m-1/m}
    \end{equation}
   for $N>N_m$ with some $N_m$. By definition of the scaling profile, we also have 
   \begin{equation}
       |g_N(\lambda_N^{(m)})|>N^{-S^{(g)}(s_m)-c/2}>N^{-S^{(g)}(s_*)+c/2}
   \end{equation} 
   for $N$ large enough; say for $N>N_m$ with the same $N_m$ as before. We can assume w.l.o.g. that $N_m$ is monotone increasing. Now define the sequence $\lambda_N$ by
   \begin{equation}
       \lambda_N=\lambda_N^{(m)},\quad N_m<N\le N_{m+1}.
   \end{equation}
   This sequence has scale $s_*$, but $|g_N(\lambda_N)|>N^{-S^{(g)}(s_*)+c/2}$ for all sufficiently large $N$, contradicting the fact that that $g_N(\lambda_N)$ must have scale $S^{(g)}(s_*)$.
\end{proof}

\paragraph{Proof of Proposition~\ref{prop:scaling1}.}

\emph{Part 1.} Let us first show the part of the statement that says that for any $\epsilon>0$  
\begin{equation}
    \int_{a_N}^1|g_N(\lambda)|d\lambda=o(N^{-s_*+\epsilon}).
\end{equation}
Suppose that this is not the case, and there is $\epsilon>0$ and a subsequence $N_m$ such that
\begin{equation}\label{eq:anonq}
    \int_{a_N}^1|g_N(\lambda)|d\lambda>N_m^{-s_*+\epsilon}.
\end{equation}

Divide the interval $[0,a]$ into finitely many subintervals $I_r=[b_r,b_{r+1}]$ of length less than $\epsilon/2$.
For each subinterval $I_r$, define 
\begin{equation}
    \lambda_{m,r} = \argmax_{\lambda: -\log_{N_m} \lambda\in I_r} |g_{N_m}(\lambda)|.
\end{equation}
Note that for each $r$ the sequence $m\mapsto-\log_{N_m} \lambda_{m,r}$ takes values in the compact interval $I_r,$ so it has a limit point $s_r^*\in I_r$. By going to a subsequence, we can assume w.l.o.g. that the limit point is unique, i.e. is the limit. Then we can define for each $r$ the sequence $\lambda^{(r)}_{N}$ by setting $\lambda^{(r)}_{N}=\lambda_{m,r}$ if $N=N_m$ and somehow complementing it for $N\ne N_m$ so that the sequence $\lambda^{(r)}_{N}$ has scale $s_r^*$. By scaling assumption, we then have 
\begin{equation}
    g_N(\lambda^{(r)}_{N}) = o(N^{-S^{(g)}(s_r^*)+\epsilon/2})
\end{equation}
and in particular
\begin{equation}
    g_{N_m}(\lambda_{m,r}) = o(N_m^{-S^{(g)}(s_r^*)+\epsilon/2}).
\end{equation}

By definition of $\lambda_{m,r}$,
\begin{equation}
    \int_{a_{N_m}}^1|g_{N_m}(\lambda)|d\lambda=\int^0_{-\log_{N_m}a_{N_m}}|g_{N_m}(N_m^{-q})|\frac{dN_m^{-q}}{dq}dq\le \sum_{r} g_{N_m}(\lambda_{m,r}) N_m^{-b_r}.
\end{equation}
It follows that
\begin{align}
    \int_{a_N}^1|g_{N_m}(\lambda)|d\lambda={}&o\Big(\sum_{r}N_m^{-S^{(g)}(s_r^*)+\epsilon/2}N_m^{-b_r}\Big)\\
    ={}&o\Big(\sum_{r}N_m^{-(S^{(g)}(s_r^*)+s_r^*)+\epsilon}\Big)\\
    ={}&o(N_m^{-s_*+\epsilon})
\end{align}
contradicting assumption~\eqref{eq:anonq}.

\emph{Part 2.} Now we prove the opposite inequality:
\begin{equation}
    \int_{a_N}^1|g_N(\lambda)|d\lambda=\omega (N^{-s_*-\epsilon}).
\end{equation}
Let $q_*=\argmin_{0\le s\le a}(S^{(g)}(s)+s)$.
By continuity of $S^{(g)}$, there exists an interval $I=[q_*-\delta, q_*+\delta]$ where $S^{(g)}(s)<S^{(g)}(q_*)+\epsilon/2$. Arguing as in Part 1, we then deduce from the scaling assumption on $S^{(g)}$ that 
\begin{equation}
    \min_{\lambda:  -\log_{N} \lambda\in I}|g_N(\lambda)|=\omega(N^{-S^{(g)}(q_*)-\epsilon}).
\end{equation}
It follows that 
\begin{align}
    \int_{a_N}^1|g_N(\lambda)|d\lambda\ge{}& (N^{-q_*+\delta}-N^{-q_*-\delta})\min_{\lambda:  -\log_{N_m} \lambda\in I}|g_N(\lambda)| \\
    ={}&\omega(N^{-S^{(g)}(q_*)-\epsilon} N^{-(q_*-\delta)})\\
    ={}&\omega(N^{-S^{(g)}(q_*)-q*-\epsilon})\\
    ={}&\omega(N^{-s_*-\epsilon})
\end{align}
as desired. This completes the proof of Proposition~\ref{prop:scaling1}.

\paragraph{Discrete spectrum.}
Suppose that $\{\lambda_{k}^{(N)}\in (0,1]\}_{k=1}^{N}$ is a $N$-dependent, size-$N$ multiset (with possibly repeated elements) such that the sequence of the respective distribution functions $F_N(\lambda)=|\{k\in \overline{1,N}: \lambda<\lambda_k\le 1\}|$ 
has a scaling profile $S^{(F)}(s)$. Observe, in particular, that in the Circle model with the population eigenvalues $\lambda_l = (|l|+1)^{-\nu}$ the distribution of the empirical eigenvalues $\widehat\lambda_k$ (as well as of the population eigenvalues $\lambda_k$ for $k$ restricted to the interval $(-N/2, N/2]$) has the scaling profile $S^{(F)}(s)=-\tfrac{s}{\nu}$.
\begin{proposition}\label{prop:scaling2}
    Assuming that $\min\{(\lambda_k^{(N)})_{k=1}^N\}=\omega(N^{-a})$ with some $a>0$ and $S^{(F)}(s)$ is strictly monotone decreasing, the sequence of sums $\sum_{k=1}^N |g_N(\lambda_k^{(N)})|$ has scale $s_*=\min_{0\le s\le a}(S^{(g)}(s)+S^{(F)}(s)).$ 
\end{proposition}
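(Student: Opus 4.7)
The plan is to adapt the proof of Proposition~\ref{prop:scaling1}, making the single substitution that the length $N^{-b_r}$ of a scale slab $[b_r, b_{r+1}]$ in $\lambda$-space is replaced by the number of empirical eigenvalues $\lambda_k^{(N)}$ lying in that slab. Since $F_N$ has scaling profile $S^{(F)}$, this count has scale $S^{(F)}(s)$ on a slab around $s$, so the heuristic contribution of scale $s$ to $\sum_k |g_N(\lambda_k^{(N)})|$ is $N^{-S^{(g)}(s)-S^{(F)}(s)}$, whose minimum over $s\in[0,a]$ is exactly $s_*$. The hypothesis $\min_k \lambda_k^{(N)} = \omega(N^{-a})$ ensures that every eigenvalue eventually falls into $[0,a]$ in log-scale, so the partition below captures everything.

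For the upper bound $\sum_k |g_N(\lambda_k^{(N)})| = o(N^{-s_*+\epsilon})$, we argue by contradiction: assume there exist $\epsilon>0$ and a subsequence $N_m$ along which the sum exceeds $N_m^{-s_*+\epsilon}$. Partition $[0,a]$ into finitely many subintervals $I_r=[b_r,b_{r+1}]$ of length less than a small $\delta>0$; on each $I_r$ let $\lambda_{m,r}$ be the $\lambda_k^{(N_m)}$ in the slab that maximizes $|g_{N_m}|$ (dropping empty slabs), and let $M_{m,r}$ be the number of eigenvalues in the slab, so that
\begin{equation}
    \sum_k |g_{N_m}(\lambda_k^{(N_m)})| \le \sum_r |g_{N_m}(\lambda_{m,r})|\, M_{m,r}.
\end{equation}
Passing to a further subsequence so that $-\log_{N_m}\lambda_{m,r}\to s_r^*\in I_r$ (exactly as in Part~1 of the proof of Proposition~\ref{prop:scaling1}), the scaling profile of $g_N$ gives $|g_{N_m}(\lambda_{m,r})| = o(N_m^{-S^{(g)}(s_r^*)+\delta'})$, while the monotonicity of the distribution function $F_N$ together with the scaling profile of $F_N$ yields $M_{m,r}\le F_{N_m}(N_m^{-b_{r+1}}) = o(N_m^{-S^{(F)}(b_{r+1})+\delta'})$. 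Continuity of $S^{(g)}$ and $S^{(F)}$ on the short interval $I_r$ together with the definition of $s_*$ then show each summand is $o(N_m^{-s_*+C\delta})$; summing over finitely many $r$ and choosing $\delta$ small enough contradicts the assumption.

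For the lower bound $\omega(N^{-s_*-\epsilon})$, let $q_*\in\argmin_{0\le s\le a}(S^{(g)}(s)+S^{(F)}(s))$. By continuity, choose $\delta>0$ so small that both $S^{(g)}$ and $S^{(F)}$ differ from their values at $q_*$ by at most $\epsilon/4$ on $I=[q_*-\delta,q_*+\delta]\cap[0,a]$. The number of empirical eigenvalues in the slab is $F_N(N^{-q_*-\delta}) - F_N(N^{-q_*+\delta})$, and this is where the strict monotonicity of $S^{(F)}$ enters decisively: it forces $S^{(F)}(q_*+\delta) < S^{(F)}(q_*-\delta)$, so the first term strictly dominates the second in scale, and the difference is $\omega(N^{-S^{(F)}(q_*+\delta)-\epsilon/4})$. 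Arguing as in Part~2 of the proof of Proposition~\ref{prop:scaling1}, the scaling profile of $g_N$ together with continuity of $S^{(g)}$ gives a uniform lower bound $\min_{\lambda:-\log_N\lambda\in I}|g_N(\lambda)| = \omega(N^{-S^{(g)}(q_*)-\epsilon/4})$. Multiplying the two bounds and recalling $S^{(g)}(q_*)+S^{(F)}(q_*) = s_*$ produces the required estimate.

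The main new technical point, and the only real obstacle, is the lower bound on the slab count in the second part: without strict monotonicity of $S^{(F)}$ the two endpoint values of $F_N$ could share the same scale and cancel to leading order, invalidating the heuristic ``number of eigenvalues at scale $s$ equals $N^{-S^{(F)}(s)}$''. The strict-monotonicity hypothesis stated in the proposition is precisely what rules this out, after which the discrete argument proceeds in lockstep with the continuous one of Proposition~\ref{prop:scaling1}.
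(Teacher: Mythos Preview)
Your proposal is correct and follows exactly the approach the paper indicates: the paper's own proof of Proposition~\ref{prop:scaling2} consists solely of the remark that it is analogous to that of Proposition~\ref{prop:scaling1}, and you have faithfully carried out that analogy, including the one genuinely new ingredient (using strict monotonicity of $S^{(F)}$ to lower-bound the slab count $F_N(N^{-q_*-\delta})-F_N(N^{-q_*+\delta})$ in Part~2). Nothing further is needed.
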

The proof of this proposition is analogous to the proof of Proposition~\ref{prop:scaling1}.

\section{Circle model}\label{sec:circle_model_app}
Here, we give all our derivations related to Circle model.

\subsection{Loss functional}
In this section we provide the proof of Theorem~\ref{th:circle_loss_functional}.

The main technical motivation behind our Circle model is to simplify the empirical kernel matrix $\Kv$. Indeed, $\Kv$ becomes a symmetric circulant matrix
\begin{equation}
    \left(\Kv\right)_{ij} = \sum_{l=-\infty}^\infty \lambda_{l} e^{\iu \tfrac{2\pi l(i-j)}{N}}.
\end{equation} 
To establish the relationship between the (complex) eigendecomposition of empirical kernel matrix~\eqref{eq:emp_kernel_spec_decomp} and observation vector $(\yv)_i=y_i$
on one side, and the population spectra $\lambda_l, c_{+,l},c_{-,l}$ on the other side, we write
\begin{equation}
    \left(\Kv\right)_{ij} = \sum_{k=0}^{N-1} \sum_{n=-\infty}^{\infty}\lambda_{k + Nn} e^{\iu \tfrac{2\pi (k + Nn)(i-j)}{N}} = \sum_{k=0}^{N-1} \widehat{\lambda}_{k} e^{\iu \tfrac{2\pi k(i-j)}{N}}. 
\end{equation}
This leads to empirical eigenvalues
\begin{equation}
    \widehat{\lambda}_{k} = \sum_{n=-\infty}^{\infty} \lambda_{k + Nn}, \qquad \left(\uv_{k}\right)_i = \frac{1}{\sqrt{N}}e^{\iu \tfrac{2\pi k i}{N}}.
\end{equation}
Note that empirical eigenvalues $\widehat{\lambda}_k$ turned out to be non-random, which is a consequence of the regularity of training dataset $\DC_N$. Observe that each empirical eigenvalue except $\widehat{\lambda}_{0}$ is twice degenerate: $\widehat{\lambda}_{k} = \widehat{\lambda}_{N-k}$ for $0<k<N/2$. This is the consequence of the fact that we took the kernel function $K(x-x')$ to be even.

Turning to the target function, we have
\begin{equation}
    y_i = \sigma\eps_i + \sum_{l} c_l e^{\iu l(u+\tfrac{2\pi i}{N})}.
\end{equation}
The respective empirical coefficients in the decomposition $\tfrac{1}{\sqrt{N}}\yv=\sum_k \widehat{y}_k \uv_k$ are
\begin{equation}\label{eq:trans_inv_emp_coeff}
    \widehat{y}_{k} = \frac{1}{N}\sum_{i=0}^{N-1} y_i e^{-\iu \tfrac{2\pi k i}{N}} = \frac{\sigma\eps_k}{\sqrt{N}} + \sum\limits_{n=-\infty}^\infty c_{k + Ns}e^{\iu (k + Nn) u}.
\end{equation}
Here $\eps_k=\frac{1}{\sqrt{N}}\sum_i \eps_i e^{-\iu \tfrac{2\pi k i}{N}} \in \mathbb{C}$ are complex Gaussian random variables. They are i.i.d. up to a few dependencies, for example, $\eps_k = \overline{\eps_{N-k}}$ (overline denotes complex conjugation here). Therefore, later we will use the definition of $\eps_k$ in terms of $\eps_i$ to avoid accurate formulation of its statistics.

Finally, let us use the obtained eigendecomposition of the empirical kernel and target to write an expression for the prediction components $\widehat{c}_l$ 
\begin{equation}\label{eq:trans_inv_prediction}
\begin{split}
    \widehat{c}_{l}&=\frac{1}{2\pi}\int_0^{2\pi} \widehat{f}(x)e^{-\iu l x}dx = \lambda_l \sum_{k=0}^{N-1}\frac{h(\widehat{\lambda}_{k})}{\widehat{\lambda}_{k}}\widehat{y}_{k} \left[\frac{1}{N}\sum_{i=0}^{N-1} e^{\iu k \frac{2\pi i}{N}}e^{-\iu l(u+ \frac{2\pi i}{N})}\right] \\
    &= \frac{\lambda_l}{\widehat{\lambda}_{k_l}}h(\widehat{\lambda}_{k_l}) e^{-\iu l u} \widehat{y}_{k_l}, 
\end{split}
\end{equation}
where $k_l=l \mod N$. Note that representations~\eqref{eq:trans_inv_prediction} and~\eqref{eq:trans_inv_emp_coeff} define transfer measure $\widehat{\rho}^{(f)}_{ll'}(d\lambda)$ introduced in Section~\ref{sec:joint_pop_emp_spectral_perspective}. Due to the regular structure of the training dataset in our setting, the transfer measure also has a specific regular structure.
\begin{enumerate}
    \item In the basis of Fourier harmonics $\{e^{\iu lx}\}_{l=-\infty}^\infty$, the information is transferred from $l'$ to $l$ only if $l-l'$ is divisible by $N$ (we can say that such $l,l'$ are compatible).
    \item If $l,l'$ are compatible, the information is transferred only through a single empirical eigenvalue $\lambda_{N, k_l}$.
\end{enumerate}
Now, we start to derive the specific form of~\eqref{eq:expected_pop_loss_joint_persp} for our translation-invariant setting. It will be convenient to divide the contribution to the final loss $L_N[h]$ into a bias term and two variance terms responsible for randomness w.r.t. to $u$ and $\eps_i$:
\begin{equation}
    L[h] = L^{\text{Bias}}[h] + L^{\text{Var},u}[h] + L^{\text{Var},\eps}[h].
\end{equation}

\paragraph{Bias term.} This term is the loss of the mean prediction with population coefficients
\begin{equation}\label{eq:trans_inv_mean_pred}
    \mathbb{E}_{u,\eps}[\widehat{c}_{l}] = \frac{\lambda_l}{\widehat{\lambda}_{k_l}}h(\widehat{\lambda}_{k_l}) 
    \int_0^{2\pi} e^{-\iu l u}\sum_{n=-\infty}^{\infty} c_{l + Nn}e^{\iu (l + Nn) u}\frac{du}{2\pi} = \frac{\lambda_l}{\widehat{\lambda}_{k_l}} h(\widehat{\lambda}_{k_l}) c_l,
\end{equation}
whose substitution into $\|f^*(x)-\mathbb{E}_{u,\eps}[\widehat{f}(x)]\|^2$ gives
\begin{equation}
    L_N^{\text{Bias}}[h] = \frac{1}{2}\sum_{l=-\infty}^\infty \left(1-\frac{\lambda_l}{\widehat{\lambda}_{k_l}} h(\widehat{\lambda}_{k_l})\right)^2|c_l|^2.
\end{equation}
Here we see that how well the component $l$ can be learned depends on the closeness between $\lambda_l$ and $\widehat{\lambda}_{k_l}$: for $l\ll N$ we have $k_l=l$ and typically $\lambda_l \approx \widehat{\lambda}_{l}$ (assuming that $\lambda_l$ decay fast with $l$). Thus, the target function can be learned well by setting $h(\lambda)=1$. 

\paragraph{Noise variance term.} Since the prediction is linear in the noise $\eps_i$, its contribution to the loss comes only from the terms which are quadratic in $\eps_i$. Then, we define the noise variance by the contribution of such terms to the loss. Denoting the contribution of the noise to the prediction as $\widehat{f}^{(\eps)}$, we calculate its second moment 
\begin{equation}
  \begin{split}
    \mathbb{E}_{u,\eps}\left[\widehat{f}^{(\eps)}\overline{\widehat{f}^{(\eps)}}\right] = \left(\frac{\lambda_{l}h(\widehat{\lambda}_{k_l})}{\widehat{\lambda}_{k_l}}\right)^2 \frac{1}{N^2}\sum_{i_1,i_2} e^{-\iu k_l\frac{2\pi (i_1-i_2)}{N}}\mathbb{E}_{\eps}[\eps_{i_1}\eps_{i_2}] = \left(\frac{\lambda_{l}h(\widehat{\lambda}_{k_l})}{\widehat{\lambda}_{k_l}}\right)^2 \frac{\sigma^2}{N},
  \end{split}
\end{equation}
where we used that $|\widehat{f}^{(\eps)}|^2$ is independent of $u$ making expectation $\mathbb{E}_u$ trivial. The respective contribution to the loss is
\begin{equation}
    L^{\text{Var},\eps}[h] = \frac{1}{2}\sum_{l=-\infty}^\infty \left(\frac{\lambda_l h(\widehat{\lambda}_{k_l})}{\widehat{\lambda}_{k_l}}\right)^2\frac{\sigma^2}{N}.
\end{equation}

\paragraph{Dataset variance term.} This part is simply the contribution of the rest of the variance prediction. The respective second moment is
\begin{equation}
  \begin{split}
    \mathbb{E}_{u,\eps}\left[(\widehat{c}_{l}-\widehat{c}_{l}^{(\eps)})\overline{(\widehat{c}_{l}-\widehat{c}_{l}^{(\eps)})}\right] &= \left(\frac{\lambda_{l}h(\widehat{\lambda}_{k_l})}{\widehat{\lambda}_{k_l}}\right)^2\int \sum_{n_1,n_2=-\infty}^\infty c_{l + Nn_1}\overline{c_{l + Nn_2}}e^{\iu N(n_1-n_2)u}\frac{du}{2\pi}\\
    &= \left(\frac{\lambda_{l}h(\widehat{\lambda}_{k_l})}{\widehat{\lambda}_{k_l}}\right)^2 \sum_{n=-\infty}^\infty |c_{l + Nn}|^2.
  \end{split}
\end{equation}
Subtracting the mean~\eqref{eq:trans_inv_mean_pred} from the second moment, we get the dataset variance loss term 
\begin{equation}
    L^{\text{Var},u}[h] = \frac{1}{2}\sum_{l=-\infty}^\infty \left(\frac{\lambda_l h(\widehat{\lambda}_{k_l})}{\widehat{\lambda}_{k_l}}\right)^2 \sum\limits_{n\ne0} |c_{l + Nn}|^2.
\end{equation}

\paragraph{Final expression.} Let us First combine bias and dataset variance terms. 
\begin{equation}
\begin{split}
     L^{\text{Var},u}[h]+L_N^{\text{Bias}}[h] &= \frac{1}{2}\sum_{l=-\infty}^\infty \bigg[|c_l|^2-2|c_l|^2\frac{h(\widehat{\lambda}_{k_l})}{\widehat{\lambda}_{k_l}}\lambda_l + \Big(\lambda_l\frac{h(\widehat{\lambda}_{k_l})}{\widehat{\lambda}_{k_l}}\Big)^2 \sum_{n=-\infty}^\infty |c_{l + Nn}|^2\bigg]\\
     &= \frac{1}{2}\sum_{l=-\infty}^\infty \bigg[(|c_l|^2-2|c_l|^2\frac{h(\widehat{\lambda}_{k_l})}{\widehat{\lambda}_{k_l}}\lambda_l  + |c_l|^2\Big(\frac{h(\widehat{\lambda}_{k_l})}{\widehat{\lambda}_{k_l}}\Big)^2 \sum_{n=-\infty}^\infty \lambda_{l + Nn}^2\bigg],
\end{split}
\end{equation}
where we have rearranged the sum over $l$ with fixed $k_l$ in the quadratic term.

Adding the noise variance term, we are now able to write the final expression for the generalization error
\begin{equation}
    L[h] = \frac{1}{2}\sum_{l=-\infty}^\infty \left[|c_l|^2\left(1-2\frac{h(\widehat{\lambda}_{k_l})}{\widehat{\lambda}_{k_l}}\lambda_l + \left(\frac{h(\widehat{\lambda}_{k_l})}{\widehat{\lambda}_{k_l}}\right)^2\big[\lambda_l^2\big]_N\right)+\frac{\sigma^2}{N}\left(\frac{\lambda_l h(\widehat{\lambda}_{k_l})}{\widehat{\lambda}_{k_l}}\right)^2\right].
\end{equation}
where we have used N-deformations~\eqref{eq:circle_N_deformation} notation for the sum $\sum_{n=-\infty}^\infty \lambda_{l + Nn}^2\big[\lambda_l^2\big]_N$. As a last step, we observe that the sum over indices $l$ the same fixed $k_l=k \mod N$ again leads to N-deformations~\eqref{eq:circle_N_deformation}, allowing to rewrite the full sum over $l\in\mathbb{Z}$ into the sum over $k_l\in{0,1,\ldots,N-1}$. Then, denoting $k_l$ simply as $k$, we have 
\begin{equation}\label{eq:trans_inv_pop_loss_emp_persp}
    L[h] = \frac{1}{2}\sum_{k=0}^{N-1}\Bigg[\Big(\tfrac{\sigma^2}{N} + \big[|c_k|^2\big]_N\Big)\frac{\big[\lambda_k^2\big]_N}{\big[\lambda_k\big]_N^2}h^2(\widehat{\lambda}_{k}) - 2\frac{\big[\lambda_k|c_k|^2\big]_N}{\big[\lambda_k\big]_N}h(\widehat{\lambda}_{k})+\big[|c_k|^2\big]_N\Bigg],
\end{equation}
which proves Theorem~\ref{th:circle_loss_functional}.

Let us now describe the optimal learning algorithm~\eqref{eq:general_opt_alg}. Note that the functional~\eqref{eq:trans_inv_pop_loss_emp_persp} is fully local, and therefore the optimal algorithm is well defined and is given by pointwise minimization at each $\widehat{\lambda}_k$. The resulting optimal algorithm $h^*(\widehat{\lambda}_k)$ is 
\begin{equation}
    h^*(\widehat{\lambda}_{k}) = \frac{[\lambda_k|c_k|^2]_N[\lambda_k]_N}{\big(\tfrac{\sigma^2}{N} + \big[|c_k|^2\big]_N\big)[\lambda_k^2]_N}.
\end{equation}
It may also be convenient to give a ``completed square'' form of the loss functional $L[h]=\delta L[h-h^*] + L[h^*], \; \delta L[h']\geq0$ that separates the minimal possible error $L[h^*]$ with an excess positive error if the algorithm is non-optimal $h-h^*\ne0$ 
\begin{equation}\label{eq:trans_inv_loss_func_optalg}
\begin{split}
     L[h] = \frac{1}{2}\sum_{k=0}^{N-1}\Bigg[&\Big(\tfrac{\sigma^2}{N} + \big[|c_k|^2\big]_N\Big)\frac{\big[\lambda_k^2\big]_N}{\big[\lambda_k\big]_N^2}\Big(h(\widehat{\lambda}_k) -h^*(\widehat{\lambda}_{k})\Big)^2 \\
     &+ [|c_k|^2]_N-\frac{[\lambda_k|c_k|^2]_N^2}{\big(\tfrac{\sigma^2}{N} + \big[|c_k|^2\big]_N\big)[\lambda_k^2]_N}\Bigg].
\end{split}
\end{equation}

Finally, we note that one can use translation symmetry $k\to k+N$ of N-deformations~\eqref{eq:circle_N_deformation} in order to shift the summation in~\eqref{eq:trans_inv_pop_loss_emp_persp} to values $-\tfrac{N}{2}\leq k\leq\tfrac{N}{2}$, for example, $k\in\{-\lfloor\tfrac{N}{2}\rfloor, \ldots, \lceil\tfrac{N}{2}\rceil-1\}$. Like in~\eqref{eq:circle_model_noiseless_functional_scaling}, we denote such summation range simply as $\sum_{k=-\frac{N}{2}}^{\frac{N}{2}}$ . The purpose of this shift is to put all high empirical spectral quantities $\big[\lambda_k^a|c_k|^{2b}\big]_N$ in the region $|k|\ll N$, allowing to write, for example, $\widehat{\lambda}_k=O\big((|k|+1)^{-\nu}\big)$. 

\subsection{Power-law ansatz: noisy observations}\label{sec:circle_noisy}
Now, we turn to a more detailed analysis of the Circle model. As in the main paper, we separately consider the noisy $\sigma^2>0$ case in this section and the noiseless $\sigma^2=0$ case in the next Section~\ref{sec:circle_noiseless}. Recall that we adapt basic power-law spectrum $\lambda_l=l^{-\nu}, \; c_l^2=l^{-\kappa-1}, \; l\geq1$ since for the circle model the population is naturally indexed by the whole integer set $\mathbb{Z}$, leading to
\begin{equation}\label{eq:trans_inv_populaiton_spectrum}
    \lambda_l = (|l|+1)^{-\nu}, \quad c_l^2 = (|l|+1)^{-\kappa-1}, \quad l\in\mathbb{Z}. 
\end{equation}
The purpose of the current section is to show the equivalence of circle model and NMNO models, thus proving the respective part of Theorem~\ref{th:equiv}. The intuition behind NMNO relies on the closeness of population and empirical spectral distributions in the eigenspaces distant from the spectrum edge $|l|\ll N$. Thus, we need to compare N-deformations $[\lambda_k^a |c_k|^{2b}]_N$ with their population counterparts $\lambda_k^a|c_k|^{2b}$, considering the values $|k|\leq \tfrac{N}{2}$ relevant for the loss functional~\eqref{eq:circle_loss_functional}. From the definition~\eqref{eq:circle_N_deformation} we get
\begin{equation}\label{eq:circle_pop_spec_perturbation}
\begin{split}
    \big[\lambda^a_k|c_k|^{2b}\big]_N &= \lambda_k^a|c_k|^{2b} + \sum_{n\ne0} \lambda_{k+Nn}^a|c_{k+Nn}|^{2b} \\
    &= \lambda_k^a|c_k|^{2b} + \sum_{n_1=1}^\infty \frac{N^{-a\nu-b(\kappa+1)}}{\big(n_1+\tfrac{1+k}{N}\big)^{a\nu+b(\kappa+1)}}+\sum_{n_2=1}^\infty \frac{N^{-a\nu-b(\kappa+1)}}{\big(n_2+\tfrac{1-k}{N}\big)^{a\nu+b(\kappa+1)}}\\
    &=\lambda_k^a|c_k|^{2b}+O(N^{-a\nu-b(\kappa+1)})=\lambda_k^a|c_k|^{2b}\Big(1+O(\tau^{a\nu+b(\kappa+1)})\Big),
\end{split}
\end{equation}
where, in the last line, we have assumed that $a\nu+b(\kappa+1)>1$ so that both $n_1$ and $n_2$ series are converging. Also, we have recalled the notation $\tau=\tfrac{|k|+1}{N}$ introduced in Section~\ref{sec:noiseless_observations}.

It turns out that the relation $\big[\lambda^a_k|c_k|^{2b}\big]_N=\lambda_k^a|c_k|^{2b}+O(N^{-a\nu-b(\kappa+1)})$ is sufficient to establish equivalence to NMNO model. For that, write the Circle model loss functional~\eqref{eq:circle_loss_functional} as NMNO functional $L^{(\mathrm{nmno})}[h]$, defined in~\eqref{eq:nmno}, plus two corrections terms
\begin{equation}\label{eq:circle_nmno_decomp}
    L[h] = L^{(\mathrm{nmno})}[h] + \delta L_{\mathrm{alg}}[h] + \delta L_{\mathrm{coeff}}[h],
\end{equation}
where the correction due to the displacements between the population and empirical eigenvalues in the argument of the learning algorithm is
\begin{equation}
    \delta L_{\mathrm{alg}}[h] = \frac{1}{2}\sum_{k=-\frac{N}{2}}^{\frac{N}{2}}\bigg[\tfrac{\sigma^2}{N}\big(h^2(\widehat\lambda_k)-h^2(\lambda_k)\big)+c_l^2 \big(\big(1-h(\widehat\lambda_k)\big)^2-\big(1-h(\lambda_k)\big)^2\big)\bigg],
\end{equation}
and the correction due to difference between population $\lambda_k^a|c_k|^{2b}$ and empirical $\big[\lambda^a_k|c_k|^{2b}\big]_N$ coefficients in the loss functional 
\begin{equation}
    \begin{split}
        \delta L_{\mathrm{coeff}}[h] = \frac{1}{2}\sum_{k=-\frac{N}{2}}^{\frac{N}{2}}\bigg[&\frac{\sigma^2}{N}\frac{\big[\lambda_k^2\big]_N-\big[\lambda_k\big]_N^2}{\big[\lambda_k\big]_N^2}h^2(\widehat{\lambda}_{k}) +\Big(\big[|c_k|^2\big]_N\frac{\big[\lambda_k^2\big]_N}{\big[\lambda_k\big]_N^2}-|c_k|^2\Big)h^2(\widehat{\lambda}_{k}) \\
        &- 2\Big(\frac{\big[\lambda_k|c_k|^2\big]_N}{\big[\lambda_k\big]_N}-|c_k|^2\Big)h(\widehat{\lambda}_{k})+\big[|c_k|^2\big]_N-|c_k|^2\bigg].
    \end{split}
\end{equation}

From this point, our strategy is to specify the scales of all the terms of NMNO functional $L^{(\mathrm{nmno})}[h]$, and of the two corrections $\delta L_{\mathrm{alg}}[h], \delta L_{\mathrm{coeff}}[h]$. Then, we can invoke the scaling argument of Proposition~\ref{prop:scaling1} (actually its discrete version in Proposition~\ref{prop:scaling2}) to show that all the correction terms give negligible contribution to the loss.

First, recall from Section~\ref{sec:spectral_scales} that the scaling of NMNO terms is given by
\begin{align}
    \label{eq:trans_inv_emp_to_pop_h_corr0}
    S\left[\tfrac{\sigma^2}{N}h^2(\lambda_k)\right] &= 1+2S^{(h)}(s), \\
    S\left[c_k^2 \big(1-h(\lambda_k)\big)^2\right] &= \tfrac{\kappa+1}{\nu}s+2S^{(1-h)}(s).\label{eq:nmnolead2}
\end{align}
Next, we proceed to the $\delta L_{\mathrm{alg}}[h]$ correction. To bound its terms one needs a certain smoothness assumption on $h(\lambda)$. Currently, in Theorem~\ref{th:equiv}, we require that the maps $\log\lambda\mapsto \log |h(\lambda)|$ and $\log\lambda\mapsto \log |1-h(\lambda)|$ are globally Lipschitz, but maybe a weaker smoothness condition is possible. To understand the application of this condition, take some function $g(x)$ such that a mapping $\log x \to \log g(x)$ is Lipschitz with constant $C$, i.e. $\big|\log \tfrac{g(x+\Delta x)}{g(x)}\big|\leq C\big|\log \tfrac{x+\Delta x}{x}\big|$. Then, taking any constant $C'>C$, there is  
$\delta>0$ such that for all $|\tfrac{\Delta x}{x}|<\delta$ we have $\big|g(x+\Delta x) -g(x)\big|<C' g(x) \big|\tfrac{\Delta x}{x}\big|$. Coming back to the difference between empirical and population eigenvalues and using~\eqref{eq:circle_pop_spec_perturbation} gives $\tfrac{\widehat{\lambda}_k-\lambda_k}{\lambda_k}=O(\tau^\nu)$, and therefore $|h^2(\widehat{\lambda}_k)-h^2(\lambda_k)|=O\big(h^2(\lambda_k)\tau^\nu\big)$ (and similar estimate for $(1-h(\lambda))^2$). Recalling the scale $S[\tau]=S\big[\tfrac{|k|+1}{N}\big]=1-\tfrac{s}{\nu}$, we get a bound on the scale of the respective correction terms
\begin{align}
    S\left[\frac{\sigma^2}{N}\Big(h^2(\widehat{\lambda}_{k})-h^2(\lambda_k)\Big)\right] &\ge 1 + \nu-s + 2S^{(h)}(s), \\
    \label{eq:trans_inv_emp_to_pop_1-h_corr}
    S\left[c_k^2\Big(\big(1-h(\widehat{\lambda}_{k})\big)^2-\big(1-h(\lambda_{k})\big)^2\Big)\right] &\ge\nu + \tfrac{\kappa+1-\nu}{\nu}s + 2S^{(1-h)}(s).
\end{align}
Finally, repetitive application of~\eqref{eq:circle_pop_spec_perturbation} to all the terms of $\delta L_{\mathrm{coeff}}[h]$ gives the remaining scales
\begin{align}
    \label{eq:trans_inv_NMNO_corr_1}
    S\left[\frac{\sigma^2}{N}\frac{\big[\lambda_k^2\big]_N-\big[\lambda_k\big]_N^2}{\big[\lambda_k\big]_N^2}h^2(\widehat{\lambda}_{k})\right] &\ge 1+(\nu-s) +2S^{(h)}(s), \\
    \label{eq:scale_bound_1}
    S\left[\Big(\big[|c_k|^2\big]_N\frac{\big[\lambda_k^2\big]_N}{\big[\lambda_k\big]_N^2}-|c_k|^2\Big)h^2(\widehat{\lambda}_{k})\right] &\ge (\kappa+1)\wedge\big(\nu+\tfrac{\kappa+1-\nu}{\nu}s\big) + 2S^{(h)}(s), \\
    S\left[\Big(\frac{\big[\lambda_k|c_k|^2\big]_N}{\big[\lambda_k\big]_N}-|c_k|^2\Big)h(\widehat{\lambda}_{k})\right] &\ge \nu+\tfrac{\kappa+1-\nu}{\nu}s +S^{(h)}(s), \\
    S\Big[\big[|c_k|^2\big]_N-|c_k|^2\Big]&\ge \kappa+1.\label{eq:trans2}
\end{align}
Here, all the estimations are relatively straightforward, except for, possibly, the bound~\eqref{eq:scale_bound_1} that involves two values depending on whether $\kappa+1>\nu$ or the opposite (a reminiscent of overlearning transition of Section~\ref{sec:noiseless_observations} here!). We demonstrate the bounding of this term in detail
\begin{equation}\label{eq:circle_model_quadratic_term_factor_est}
\begin{split}
    \big[|c_k|^2\big]_N\frac{\big[\lambda_k^2\big]_N}{\big[\lambda_k\big]_N^2} &= |c_k|^2\big(1+O(\tau^{\kappa+1})\big)\frac{\lambda_k^2\big(1+O(\tau^{2\nu})\big)}{\Big(\lambda_k\big(1+O(\tau^{\nu})\big)\Big)^2}\\
    &=|c_k|^{2}\big(1+O(\tau^{\kappa+1})+O(\tau^{\nu})\big)=|c_k|^{2}\Big(1+O\big(\tau^{(\kappa+1)\wedge\nu}\big)\Big)\\
    &=|c_k|^2 + O\Big(\frac{(|k|+1)^{0\wedge(\nu-\kappa-1)}}{N^{(\kappa+1)\wedge\nu}}\Big).
\end{split}
\end{equation}
This immediately implies the bound on the scale of $[|c_k|^2]_N\frac{[\lambda_k^2]_N}{[\lambda_k]_N^2}-|c_k|^2$ used in~\eqref{eq:scale_bound_1}. 

Now, having specified the scale of all the corrections, our task is to show that they contribute negligibly to the loss. In other words, the scale of the total contribution of all the corrections has to be strictly lower than that of NMNO loss. Using Proposition~\ref{prop:scaling2} this is written as
\begin{align}
    \min_{0\le s\le \nu}\Big[\big(1-\tfrac{s}{\nu}+2S^{(h)}(s)\big)&\wedge \big(\tfrac{\kappa}{\nu}s+2S^{(1-h)}(s)\big)\Big] \\
    < \min_{0\le s\le \nu}\Big[&\big(1+(\nu-s) +2S^{(h)}(s) \big)\\
    &\wedge \big((\kappa+1)\wedge\big(\nu+\tfrac{\kappa+1-\nu}{\nu}s\big) + 2S^{(h)}(s)\big)\\
    &\wedge \big(\nu+\tfrac{\kappa+1-\nu}{\nu}s +S^{(h)}(s)\big)\\
    &\wedge \big(\nu+\tfrac{\kappa+1-\nu}{\nu}s +2S^{(h)}(s)\big)\\
    &\wedge (\kappa+1)
    -\tfrac{s}{\nu}\Big].
\end{align}
It is easy to see that this strict inequality can only be violated (by becoming equality) when the mins are attained at $s=\nu$ so that the r.h.s. of~\eqref{eq:trans_inv_NMNO_corr_1} is equal to the r.h.s. of~\eqref{eq:trans_inv_emp_to_pop_h_corr0}). However, this possibility is excluded by hypothesis of the theorem. This completes the proof of Theorem~\ref{th:equiv} for the Circle model.

We remark that the Lipschitz condition in Theorem~\ref{th:equiv}, used to compare algorithm $h(\lambda)$ evaluated at the population and empirical eigenvalues, is required only for the discrete (Circle) problem. It is easy to check that this condition holds for the KRR algorithm with $h_\eta(\lambda)=\tfrac{\lambda}{\lambda+\eta}$. However, it is violated for GF with $h_t(\lambda)=1-e^{-\lambda t}$: for time $t\sim N^{s_t}$ the mapping $\log \lambda \to \log (1-h_t(\lambda))=-\lambda t$ is not Lipschitz on the scales $s<s_t$. Fortunately, $(1-h_t(\lambda))^2$ on such scales is exponentially small, and the contribution to the loss from corresponding terms, both NMNO and its corrections, can be ignored. Thus, the equivalence between Circle and NMNO models still holds for GF algorithm. We expect that the Lipschitz condition of Theorem~\ref{th:equiv} can be weakened to take into account GF algorithm but leave it for future work.

\subsection{Power-law ansatz: noiseless observations}\label{sec:circle_noiseless}
In this section, we derive the results presented in Section~\ref{sec:noiseless_observations}, while also giving full versions of the results that were discussed only partially in the main paper. For the convenience of exposition, the order in this section repeats that of Section~\ref{sec:noiseless_observations}. 

\subsubsection{Perturbative expansion of the loss functional}
In the main paper, we relied on equation~\eqref{eq:circle_model_noiseless_functional_scaling} as a starting point to quite easily conclude that the loss localizes on the smallest spectral scale $s=0$ in the saturated phase $\kappa>2\nu$ while localizing on the highest scale $s=\nu$ in the non-saturated phase $\kappa<2\nu$. Essentially, equation~\eqref{eq:circle_model_noiseless_functional_scaling} ignores the details of the problem at subspaces corresponding to small eigenvalues $\widehat{\lambda}_k\sim\widehat{\lambda}_{N/2}$ (and $|k|\sim N$), while providing a simple estimation of different loss functional terms for large eigenvalue subspaces with $\widehat{\lambda}_k \gg \widehat{\lambda}_{N/2}$ (and $|k|\ll N$). Alternatively, if one starts from exact loss functional~\eqref{eq:circle_loss_functional}, there seems to be no clear path to deducing the existence of saturated and non-saturated phases, and obtaining their convergence rates.

The above discussion illuminates the importance of perturbative expansion of the loss functional in the small parameter $\tau=\frac{|k|+1}{N}$, or, in other words, perturbative corrections $\widehat{\lambda}_k-\lambda_k$ and $\widehat{c}_k-c_k$ of population spectral distributions in the presence of finite dataset size $N$. For the circle model, the effect of finite dataset size $N$ is captured by deviation of N-deformations $[\lambda_k^a|c_k|^{2b}]_N$ from their population counterparts $\lambda_k^a|c_k|^{2b}$. The simplest form of this deviation was already obtained in equation~\eqref{eq:circle_pop_spec_perturbation}, which we will use below to derive equation~\eqref{eq:circle_model_noiseless_functional_scaling}.

Let us start by writing down the loss functional in ``completed square'' from~\eqref{eq:trans_inv_loss_func_optalg} and in the absence of observation noise
\begin{equation}\label{eq:trans_inv_loss_func_optalg_noiseless}
     L[h] = \frac{1}{2}\sum_{k=-\frac{N}{2}}^{\frac{N}{2}}\Bigg[\frac{\big[|c_k|^2\big]_N\big[\lambda_k^2\big]_N}{\big(\big[\lambda_k\big]_N\big)^2}\Big(h(\widehat{\lambda}_k) -h^*(\widehat{\lambda}_{k})\Big)^2 + [|c_k|^2]_N-\frac{\big([\lambda_k|c_k|^2]_N\big)^2}{\big[|c_k|^2\big]_N[\lambda_k^2]_N}\Bigg].
\end{equation}
Note that the first term here, i.e. the factor in front of $\big(h(\widehat{\lambda}_k)-h^*(\widehat{\lambda}_k)\big)^2$, was already estimated in the second line of~\eqref{eq:circle_model_quadratic_term_factor_est}: it agrees with the factor $\frac{|c_k|^2}{2}\big(1+o(\tau)\big)$ appearing in~\eqref{eq:circle_model_noiseless_functional_scaling} up to replacement $O(\tau^{(\kappa+1)\wedge\nu})=o(\tau)$ valid due to $(\kappa+1)\wedge\nu>1$. We use this simplification in~\eqref{eq:circle_model_noiseless_functional_scaling} because we do not need a more accurate estimation of the correction term at this moment. 

The second term of~\eqref{eq:circle_model_noiseless_functional_scaling} corresponds to the generalization error of the optimal algorithm, which allows to denote it as $L_k[h^*]$ since $L[h^*]=\sum_{k=-N/2}^{N/2} L_k[h^*]$. This term is estimated as follows
\begin{equation}\label{eq:circle_opt_loss_term_estimation}
    \begin{split}
        L_k[h^*] &= [|c_k|^2]_N-\frac{\big([\lambda_k|c_k|^2]_N\big)^2}{\big[|c_k|^2\big]_N[\lambda_k^2]_N}\\
        &= |c_k|^2\big(1+O(\tau^{\kappa+1})\big)-\frac{|c_k|^4\lambda_k^2\big(1+O(\tau^{\kappa+1+\nu})\big)}{|c_k|^2\lambda_k^2\big(1+O(\tau^{\kappa+1})+O(\tau^{2\nu})\big)} \\
        &=|c_k|^2\Big(O(\tau^{\kappa+1})+O(\tau^{\kappa+1+\nu})+O(\tau^{2\nu})\Big)\\
        &=(\tau N)^{-\kappa-1}\Big(O(\tau^{\kappa+1})+O(\tau^{2\nu})\Big)\\
        &=N^{-\kappa-1}\Big(O(1)+O(\tau^{2\nu-\kappa-1})\Big).
    \end{split}
\end{equation}

\subsubsection{Optimally scaled learning algorithms}\label{sec:optimally_scaled_algs} 
In Section~\ref{sec:noiseless_observations} we mentioned that the rate $O(N^{-\kappa})$ of the optimal algorithm $h^*(\lambda)$ in the non-saturated phase also holds for learning algorithms $h(\lambda)$ within a suitable neighbourhood of $h^*(\lambda)$, characterized by a condition $|h(\widehat{\lambda}_k)- h^*(\widehat{\lambda}_k)|^2=o(\tau^\kappa)$. In this section, we derive this result while also providing a more systematic discussion of learning algorithms that do not destroy the rate of the optimal algorithm. 

Let us call an algorithm $h(\lambda)$ \emph{optimally scaled} if the scale $S\big[L[h]\big]$ of associated generalization error $L[h]$ is the same as that of the optimal algorithm $h^*(\lambda)$  
\begin{equation}\label{eq:optimally_scaled_alg_def}
    S\big[L[h]\big]=S\big[L[h^*]\big].
\end{equation}

While one might try to find all algorithms satisfying~\eqref{eq:optimally_scaled_alg_def}, here we take a less ambitious approach by considering a simple family of conditions on $|h(\lambda)-h^*(\lambda)|$ and then choosing the weakest condition within the family. Specifically, for any two constants $a,b$, consider the following bound on the scale of deviation of an algorithm $h(\lambda)$ from the optimal one:
\begin{equation}\label{eq:optimally_scaled_cond_family}
    S\big[|h(\lambda)-h^*(\lambda)|^2\big] \geq as + b, \quad s\in[0,\nu],
\end{equation}
which is a slightly weaker (e.g. up to a $\log N$ factors) version of $|h(\lambda)-h^*(\lambda)|^2=O(\lambda^{-a}N^{-b})$. Here we limited the scale of $\lambda$ to $s\in[0,\nu]$ because this is precisely the interval of scales occupied by the set of empirical eigenvalues $\{\widehat{\lambda}_k\}_{k=0}^{N-1}$ passed as an input to algorithms $h,h^*$.

It is easy to check whether all algorithms satisfying condition~\eqref{eq:optimally_scaled_cond_family} are optimally scaled. First, we can equivalently rewrite~\eqref{eq:optimally_scaled_alg_def} as $S\big[L[h]-L[h^*]\big]\geq S\big[L[h^*]\big]$. Then, applying Proposition~\ref{prop:scaling2} to~\eqref{eq:circle_model_noiseless_functional_scaling} yields  

\begin{equation}\label{eq:suboptimility_scale_bound}
\begin{split}
     S\big[L[h]-L[h^*]\big] &= \min_{0\le s\le \nu} \left(S\big[|c_k|^2\big]+S\big[|h(\widehat{\lambda}_k)-h^*(\widehat{\lambda}_k)|^2\big]-\frac{s}{\nu}\right)\\
     &\geq \min_{0\le s\le \nu} \left((\kappa+1)\frac{s}{\nu}+as+b-\frac{s}{\nu}\right)=\begin{cases}
     b, \quad & a\geq-\tfrac{\kappa}{\nu}\\
     b+\kappa+a\nu, \quad & a<-\tfrac{\kappa}{\nu}.
     \end{cases}
\end{split}
\end{equation}

According to the above calculation, the set of all pairs $a,b$ that guarantee algorithms under condition~\eqref{eq:optimally_scaled_cond_family} to be optimally scaled is given by $A=\{(a,b) \in \mathbb{R}^2 \; | \; b + 0\wedge(\kappa+a\nu) \geq S\big[L[h^*]\big]\}$. Now, if we wish to pick a pair $(a,b)\in A$ such that condition~\eqref{eq:optimally_scaled_cond_family} is the weakest at a spectral scale $s$, we have to minimize $as+b$. Fortunately, there is a unique pair $(a^*,b^*)$ that provides the weakest condition across all relevant spectral scales
\begin{equation}\label{eq:weakest_optimally_scaled_condition}
    (a^*,b^*)=\Big(-\frac{\kappa}{\nu}, S\big[L[h^*]\big]\Big) = \underset{(a,b)\in A}{\argmin} \; as+b \quad \forall s\in[0,\nu].
\end{equation}

Applying this result to saturated and non-saturated phases, with respective convergence rates $O(N^{-2\nu})$ and $O(N^{-\kappa})$, gives the desired conditions for optimally scaled algorithms
\begin{align}
    \label{eq:optimally_scaled_alg_condition_non_saturated}
    S\big[|h(\lambda)-h^*(\lambda)|^2\big]&\geq - \frac{\kappa}{\nu}s + \kappa \quad \;\; \text{in the non-saturated phase} \; \kappa<2\nu, \\
    \label{eq:optimally_scaled_alg_condition_saturated}    
    S\big[|h(\lambda)-h^*(\lambda)|^2\big]&\geq - \frac{\kappa}{\nu}s + 2\nu \quad \text{in the saturated phase} \; \kappa>2\nu.
\end{align}

\paragraph{A stronger version of optimally scaled algorithm condition.} Recall that the loss of the optimal algorithm $h^*$ in saturated and non-saturated phases localize at $s=0$ and $s=\nu$ respectively. However, conditions~\eqref{eq:optimally_scaled_alg_condition_non_saturated}, \eqref{eq:optimally_scaled_alg_condition_saturated} do not provide the same localization property for the excess error $L[h]-L[h^*]$. To see this, take $|h(\lambda)-h^*(\lambda)|^2=O(\lambda^{-\frac{\kappa}{\nu}} N^{-\kappa\wedge2\nu})$, which corresponds to values $(a^*,b^*)$ given in~\eqref{eq:weakest_optimally_scaled_condition}. Substituting these values in excess error scale calculation~\eqref{eq:suboptimility_scale_bound} makes the function under the minimum $s$-independent, implying that the excess loss localizes on all spectral scales $s\in[0,\nu]$. Such spread of localization scales introduces a logarithmic factor in the error rate. Taking, for simplicity, $|h(\widehat{\lambda}_k)-h^*(\widehat{\lambda}_k)|^2=(|k|+1)^{-\kappa} N^{-\kappa\wedge2\nu}$, and using~\eqref{eq:circle_model_noiseless_functional_scaling} gives
\begin{equation}\label{eq:excess_loss_log_factor}
\begin{split}
        L[h]-L[h^*] &= \sum\limits_{k=-\frac{N}{2}}^{\frac{N}{2}} c_k^2  (1+o(\tau)) |h(\widehat{\lambda}_k)-h^*(\widehat{\lambda}_k)|^2 \\
        &=  N^{-\kappa\wedge2\nu} \sum\limits_{k=-\frac{N}{2}}^{\frac{N}{2}} \frac{1+o(\tau)}{|k|+1}=O(N^{-\kappa\wedge2\nu} \log N).
\end{split}
\end{equation}

To avoid these issues, let us introduce a slightly stronger version of conditions~\eqref{eq:optimally_scaled_alg_condition_non_saturated},\eqref{eq:optimally_scaled_alg_condition_saturated}, specified by picking a small parameter $\varepsilon>0$:
\begin{align}
    \label{eq:optimally_scaled_alg_condition_non_saturated_2}
    |h(\lambda)-h^*(\lambda)|^2&= O\big((\lambda^{\frac{1}{\nu}} N)^{-\kappa-\varepsilon}\big) \quad \text{in the non-saturated phase} \; \kappa<2\nu, \\
    \label{eq:optimally_scaled_alg_condition_saturated_2}    
    |h(\lambda)-h^*(\lambda)|^2&= O\big(\lambda^{-\frac{\kappa}{\nu}+\varepsilon} N^{-2\nu}\big) \quad \text{in the saturated phase} \; \kappa>2\nu.
\end{align}
For any $\varepsilon>0$, the above conditions guarantee localization of the excess error either at $s=0$ (saturated phase) or $s=\nu$ (non-saturated phase), as can be seen from~\eqref{eq:suboptimility_scale_bound}. Also, using these conditions in computation like~\eqref{eq:excess_loss_log_factor} produces the rate of the optimal algorithm without $\log N$ factor $L[h]-L[h^*]=O(N^{-\kappa\wedge2\nu})$. 

Finally, let us comment on $|h(\widehat{\lambda}_k)-h^*(\widehat{\lambda}_k)|^2=o(\tau^\kappa)$ condition for non-saturated phase, mentioned in the main paper. Since $\tau=(\lambda_k^{\frac{1}{\nu}}N)^{-1}$, in many cases one can replace $o(\tau^\kappa)$ with $O(\tau^{\kappa+\varepsilon})$ with some $\varepsilon$ and thus satisfy condition~\eqref{eq:optimally_scaled_alg_condition_non_saturated_2}. When no $\varepsilon>0$ can provide such replacement, it is possible to show that the rate of the excess error remains $L[h]-L[h^*]=O(N^{-\kappa})$, although with a more technically involved version of computation~\eqref{eq:excess_loss_log_factor}.  

\subsubsection{Saturated phase $\kappa>2\nu$}
When discussing the saturated phase in Section~\ref{sec:noiseless_observations}, we first deduced from $O(\tau^\#)$ terms in equation~\eqref{eq:circle_model_noiseless_functional_scaling} that the loss will localize on the scale $s=0$ (corresponding to $|k|\sim 1$), and then stated that the loss of the optimal algorithm can be achieved by optimally regularized KRR. In this section, we derive an asymptotic expression of the loss functional in the saturated phase, which both explains the statements made in the main paper and gives a more systematic picture of the loss-algorithm relations in the saturated phase.   

\paragraph{Generalization error of optimal algorithm $L[h^*]$.} The asymptotic of $L[h^*]$ originates from $O(\tau^{2\nu-\kappa-1})$ term in equation~\eqref{eq:circle_model_noiseless_functional_scaling}, which dominates the whole loss functional for $\kappa>2\nu$. To verify that this is indeed the case, we need a more accurate characterization of N-deformation correction terms than in~\eqref{eq:circle_pop_spec_perturbation}. For that, we recognize that the sums over $n_1,n_2$ in the second line of~\eqref{eq:circle_pop_spec_perturbation} are Hurwitz zeta functions $\zeta(\alpha,x)\equiv\sum_{n=0}^\infty (n+x)^{-\alpha}$ evaluated at $\alpha=a\nu+b(\kappa+1)$ and two specific values of $x$:
\begin{equation}\label{eq:circle_pop_spec_perturbation2}
    \big[\lambda^a_k|c_k|^{2b}\big]_N \overset{\alpha=a\nu+b(\kappa+1)}{=} \lambda^a_k|c_k|^{2b} + N^{-\alpha} \Big(\zeta\big(\alpha, 1+\tfrac{1+k}{N}\big)+\zeta\big(\alpha, 1+\tfrac{1-k}{N}\big)\Big).
\end{equation}
Substituting Taylor expansion $\zeta(\alpha, 1+\epsilon)=\zeta(\alpha)-\alpha\zeta(\alpha+1)\epsilon+O(\epsilon^2)$ of Hurwitz zeta function at $x=1$ into~\eqref{eq:circle_pop_spec_perturbation2} we get a more detailed version of~\eqref{eq:circle_pop_spec_perturbation}
\begin{equation}\label{eq:circle_pop_spec_perturbation3}
    \big[\lambda^a_k|c_k|^{2b}\big]_N = \lambda^a_k|c_k|^{2b}+2\zeta(\alpha)N^{-\alpha} + O(N^{-\alpha-1})+O(\tau^2N^{-\alpha}).
\end{equation}
Equipped with~\eqref{eq:circle_pop_spec_perturbation3}, we turn to derivation of the leading asymptotic of $O(\tau^{2\nu-\kappa-1})$ term dominating the loss. By looking at~\eqref{eq:circle_opt_loss_term_estimation}, we can see that the term of interest comes from finite $N$ corrections of $\big[\lambda^2_k\big]_N$. Then, mimicking the derivation~\eqref{eq:circle_opt_loss_term_estimation} and using~\eqref{eq:circle_pop_spec_perturbation3} for $\big[\lambda^2_k\big]_N$, we have
\begin{equation}\label{eq:circle_saturated_optimal_alg_loss_k}
    \begin{split}
        L_k[h^*]&= |c_k|^2\big(1+O(\tau^{\kappa+1})\big)-\frac{|c_k|^2\lambda_k^2\big(1+O(\tau^{\kappa+1})\big)}{\lambda_k^2+2\zeta(2\nu)N^{-2\nu}\big(1+O(N^{-1})+O(\tau^2)\big)} \\
        &=2\zeta(2\nu)\frac{|c_k|^2}{\lambda_k^2}N^{-2\nu}\big(1+O(N^{-1})+O(\tau^2)\big) + O(N^{-\kappa-1}).
    \end{split}
\end{equation}
Substituting the above in the loss functional sum results gives
\begin{equation}\label{eq:circle_opt_alg_loss_saturated}
\begin{split}
    L[h^*]&=\frac{1}{2}\sum_{k=-\frac{N}{2}}^{\frac{N}{2}} \Big[2\zeta(2\nu)\frac{|c_k|^2}{\lambda_k^2}N^{-2\nu}\big(1+O(N^{-1})+O(\tau^2)\big) + O(N^{-\kappa-1})\Big]\\
    &=\frac{1}{2} 2\zeta(2\nu) N^{-2\nu}\big(1+o(1)\big)\sum_{k=-\frac{N}{2}}^{\frac{N}{2}} \frac{|c_l|^2}{\lambda_l^2}\\
    &=\frac{1}{2} 2\zeta(2\nu) N^{-2\nu}\big(1+o(1)\big)\sum_{l=-\infty}^\infty \frac{|c_l|^2}{\lambda_l^2}.
\end{split}
\end{equation}
Here in the last equality we extended the summation to all population eigenspaces due to convergence of the series $\sum_{l=-\infty}^\infty \frac{|c_l|^2}{\lambda_l^2}$ at $\kappa>2\nu$, which  reflects localization of the error on scale $s=0$, corresponding to $|l|\sim 1$.

Note that in~\eqref{eq:circle_opt_alg_loss_saturated} we could have evaluated the sum over the population spectra as 
\begin{equation}
\sum_{l=-\infty}^\infty \frac{|c_l|^2}{\lambda_l^2}=\sum_{l=-\infty}^\infty (|l|+1)^{2\nu-\kappa-1}=2\zeta(\kappa+1-2\nu)-1.    
\end{equation}
However, we view the version with the sum as being more general. Intuitively, it stays valid in the case of population spectra having power-law form only asymptotically: $\lambda_l=(|l|+1)^{-\nu}(1+o(1))$ and $|c_l|^2=(|l|+1)^{-\kappa-1}(1+o(1))$. In that case, the value of the sum is not fixed by the asymptotics of $\lambda_l,|c_l|^2$ and may vary substantially. In contrast, we can see from computation~\eqref{eq:circle_saturated_optimal_alg_loss_k} and corrections to N-deformations~\eqref{eq:circle_pop_spec_perturbation} and~\eqref{eq:circle_pop_spec_perturbation3} that the factor $2\zeta(2\nu)$ is determined by population spectrum $\lambda_l$ with indices $l$ near the values $l = N, -N, 2N, -2N, \ldots$. Therefore, the factor $2\zeta(2\nu)$ in the loss is determined purely by asymptotic behavior of $\lambda_l$. Overall, this creates an interesting situation where the loss~\eqref{eq:circle_opt_alg_loss_saturated} is localized on the smallest scale $s=0$, but the asymptotic shape of the population spectrum almost fully specifies the generalization error asymptotic $L[h^*]=CN^{-2\nu}(1+o(1))$, with only the constant factor $\sum_l\tfrac{|c_l|^2}{\lambda_l^2}$ determined by the details living on the localization scale $s=0$. 

\paragraph{Full generalization error $L[h]$.} The full generalization error is obtained by combining $L[h^*]$ with the part of~\eqref{eq:trans_inv_loss_func_optalg_noiseless} associated with deviation from the optimal algorithm $h(\lambda)-h^*(\lambda)$. We will consider only the algorithms $h(\lambda)$ with mild deviation from the optimal, as given by condition~\eqref{eq:optimally_scaled_alg_condition_non_saturated_2}. As demonstrated in Section~\ref{sec:optimally_scaled_algs}, this condition guarantees the rate not worse than the rate $O(N^{-2\nu})$ of the optimal algorithm, and also localization of the excess error on the same scale $s=0$.    

Having ensured localization of the excess error on the scale $s=0$, let us first look at the perturbative expression of the optimal algorithm $h^*(\lambda)$ at this scale. Substituting~\eqref{eq:circle_pop_spec_perturbation3} into~\eqref{eq:circle_opt_alg}, we get
\begin{equation}\label{eq:circle_opt_alg_perturbative}
    h^*(\widehat{\lambda}_k)=1+2\zeta(\nu)\frac{N^{-\nu}}{\lambda_k}\big(1+O(N^{-1})+O(\tau)\big).
\end{equation}
Using the above expression of the optimal algorithm, we compute the full loss functional as follows
\begin{equation}\label{eq:circle_saturated_full_error}
\begin{split}
        L[h]={}&\frac{1}{2}\sum_{k=-\frac{N}{2}}^{\frac{N}{2}}|c_k|^2\big(1+o(\tau)\big)\Big(h(\widehat{\lambda}_k)-1-2\zeta(\nu)\frac{N^{-\nu}}{\widehat{\lambda}_k}\big(1+O(\frac{1}{N})+O(\tau)\big)\Big)^2\\
        &+\frac{1}{2}\sum_{k=-\frac{N}{2}}^{\frac{N}{2}}\frac{|c_k|^2}{\lambda_k^2}2\zeta(2\nu)N^{-2\nu}(1+o(1))\\
        ={}&\frac{1}{2}N^{-2\nu}(1+o(1))\sum_{k=-\frac{N}{2}}^{\frac{N}{2}}\frac{|c_k|^2}{\lambda_k^2}\left[\Big(\lambda_k N^\nu\big(h(\lambda_k)-1\big)-2\zeta(\nu)\Big)^2 + 2\zeta(2\nu)\right]
\end{split}
\end{equation}
While the above expression for $L[h]$ is already a valid asymptotic for the loss in the saturated phase, we make a few further refinements.

First, note that~\eqref{eq:circle_saturated_full_error} relies on condition~\eqref{eq:optimally_scaled_alg_condition_non_saturated_2} for the algorithm $h(\lambda)$. This condition might be quite difficult to verify in practice, as it requires the knowledge of the optimal algorithm $h(\lambda)$. However, perturbative expansion~\eqref{eq:circle_opt_alg_perturbative} shows that optimal algorithm satisfies $|h^*(\lambda)-1|^2=O(\lambda^{-2}N^{-2\nu})$. Since in saturated phase $\frac{\kappa}{\nu}>2$, we can always make $\lambda^{-2}N^{-2\nu}$ smaller than $\lambda^{-\frac{\kappa}{\nu}+\varepsilon}N^{-2\nu}$ by choosing $\varepsilon<\tfrac{\kappa}{\nu}-2$. Thus, employing triangular inequality shows that the original condition~\eqref{eq:optimally_scaled_alg_condition_non_saturated_2} is equivalent to
\begin{equation}\label{eq:optimally_scaled_alg_condition_saturated_3}
    |h(\lambda)-1|^2 = O(\lambda^{-\frac{\kappa}{\nu}+\varepsilon}N^{-2\nu}), \quad 0<\varepsilon<\frac{\kappa}{\nu}-2.
\end{equation}

The main advantage of condition~\eqref{eq:optimally_scaled_alg_condition_saturated_3} compared to~\eqref{eq:optimally_scaled_alg_condition_saturated_2} is that it is easily verifiable. For instance, KRR has $1-h_\eta(\lambda)=\tfrac{\eta}{\lambda+\eta}$ which satisfies~\eqref{eq:optimally_scaled_alg_condition_saturated_3} for $|\eta|=O(N^{-\nu})$. Similarly, for gradient flow we have $1-h_t(\lambda)=e^{-t\lambda}$ which satisfies~\eqref{eq:optimally_scaled_alg_condition_saturated_3} with $t=\Omega(N^{\frac{2\nu^2}{\kappa}}+\varepsilon')$ and some $\varepsilon'>0$ depending on $\varepsilon$. 

Note that KRR and GF examples of $h(\lambda)$ considered above satisfy~\eqref{eq:optimally_scaled_alg_condition_saturated_3} not only on the scales $s\in[0,\nu]$ but for all $s\geq0$, or equivalently $\lambda\leq 1$. Based on this observation, let us impose condition~\eqref{eq:optimally_scaled_alg_condition_saturated_3} on all $\lambda \leq 1$. Then, the summation indices in~\eqref{eq:circle_saturated_full_error} to all population values $l\in\mathbb{Z}$, similarly to how it was done for optimal algorithm loss in~\eqref{eq:circle_opt_alg_loss_saturated}. 

To summarize, through the derivations and discussions above we have proved 
\begin{theorem}
    Consider the Circle model in saturated phase $\kappa>2\nu$. Then, if an algorithm $h(\lambda)$ satisfies~\eqref{eq:optimally_scaled_alg_condition_saturated_3} for all $\lambda\leq1$, the loss functional is given by
\begin{equation}\label{eq:circle_saturated_full_error_final}
        L[h]=\frac{1}{2}N^{-2\nu}(1+o(1))\sum_{l=-\infty}^{\infty}\frac{|c_l|^2}{\lambda_l^2}\left[\Big(\lambda_l N^\nu\big(h(\lambda_l)-1\big)-2\zeta(\nu)\Big)^2 + 2\zeta(2\nu)\right].
\end{equation}
\end{theorem}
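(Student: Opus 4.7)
My plan is to start from the ``completed square'' form of the noiseless loss functional~\eqref{eq:trans_inv_loss_func_optalg_noiseless}, which already separates the loss as
\begin{equation}
    L[h]=\underbrace{\tfrac{1}{2}\sum_{k=-N/2}^{N/2}\tfrac{[|c_k|^2]_N[\lambda_k^2]_N}{[\lambda_k]_N^2}\big(h(\widehat{\lambda}_k)-h^*(\widehat{\lambda}_k)\big)^2}_{\text{deviation term}} + L[h^*].
\end{equation}
The first of the two bracketed terms in the target expression~\eqref{eq:circle_saturated_full_error_final} must come from the deviation term and the second from $L[h^*]$, so my aim is to prove each of these pieces separately and then combine them.

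For the $L[h^*]$ piece I would invoke the refined N-deformation expansion~\eqref{eq:circle_pop_spec_perturbation3}, which has already been plugged into the pointwise computation giving the expression for $L_k[h^*]$ at the end of Section~\ref{sec:circle_noiseless}. Summing over $k$ and using that the series $\sum_l |c_l|^2/\lambda_l^2$ converges in the saturated regime $\kappa>2\nu$, the sum localizes on scale $s=0$, allowing the truncated sum over $|k|\le N/2$ to be replaced by the full sum over $l\in\mathbb{Z}$ at the cost of an $o(1)$ error. This yields exactly the $2\zeta(2\nu)$ term of~\eqref{eq:circle_saturated_full_error_final}.

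For the deviation term, I would substitute the perturbative form~\eqref{eq:circle_opt_alg_perturbative} of the optimal algorithm to rewrite $h(\widehat{\lambda}_k)-h^*(\widehat{\lambda}_k)$ as $\big(h(\widehat{\lambda}_k)-1\big)-2\zeta(\nu)N^{-\nu}/\lambda_k\cdot(1+O(N^{-1})+O(\tau))$. After factoring out $\lambda_k^2 N^{2\nu}$, this produces precisely $\big(\lambda_k N^\nu (h(\lambda_k)-1)-2\zeta(\nu)\big)^2$ to leading order, with prefactor $\tfrac{|c_k|^2}{\lambda_k^2}N^{-2\nu}$ after simplification of the empirical coefficients using~\eqref{eq:circle_pop_spec_perturbation3}. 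Here I would use condition~\eqref{eq:optimally_scaled_alg_condition_saturated_3} in two ways: (i) to guarantee that the sum is localized at scale $s=0$, so that it may be extended from $|k|\le N/2$ to all $l\in\mathbb{Z}$; and (ii) together with a Lipschitz-type smoothness of $h$ on the log-log scale, to replace $h(\widehat{\lambda}_k)-1$ by $h(\lambda_k)-1$ at negligible asymptotic cost.

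The main obstacle is the careful control of this last step: verifying that the accumulated $O(\tau)$ and $O(N^{-1})$ perturbations, when squared and multiplied by $|c_k|^2/\lambda_k^2$ (which grows like $(|k|+1)^{-\kappa-1+2\nu}$ and thus \emph{diverges} with $|k|$ when $\kappa<2\nu-1$), still produce only $o(N^{-2\nu})$ after summation, and that the replacement $\widehat{\lambda}_k\to\lambda_k$ inside $h$ is consistent with condition~\eqref{eq:optimally_scaled_alg_condition_saturated_3} uniformly in $k$. Once these uniform estimates are established, combining the two pieces and extending the summation from $|k|\le N/2$ to $l\in\mathbb{Z}$ gives~\eqref{eq:circle_saturated_full_error_final}.
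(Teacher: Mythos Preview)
Your approach is essentially the same as the paper's: start from the completed-square form~\eqref{eq:trans_inv_loss_func_optalg_noiseless}, handle $L[h^*]$ via the refined expansion~\eqref{eq:circle_pop_spec_perturbation3} leading to~\eqref{eq:circle_opt_alg_loss_saturated}, handle the deviation term via the perturbative optimal algorithm~\eqref{eq:circle_opt_alg_perturbative}, and extend the sum to $l\in\mathbb{Z}$ using convergence at scale $s=0$.

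One correction to your stated obstacle: your worry that $|c_k|^2/\lambda_k^2\sim(|k|+1)^{2\nu-\kappa-1}$ ``diverges with $|k|$ when $\kappa<2\nu-1$'' is moot here, since by hypothesis we are in the saturated phase $\kappa>2\nu$, so the exponent $2\nu-\kappa-1<-1$ and the series $\sum_l|c_l|^2/\lambda_l^2$ converges absolutely. This is precisely what makes the localization at $s=0$ and the extension to $l\in\mathbb{Z}$ work. Regarding the replacement $h(\widehat{\lambda}_k)\to h(\lambda_k)$: the paper does not invoke an explicit Lipschitz hypothesis at this step but simply absorbs it into the global $(1+o(1))$ factor, relying on $\widehat{\lambda}_k=\lambda_k(1+O(\tau^\nu))$ which at the localization scale $s=0$ gives $\widehat{\lambda}_k=\lambda_k(1+O(N^{-\nu}))$; combined with condition~\eqref{eq:optimally_scaled_alg_condition_saturated_3} bounding $|h(\lambda)-1|$, this makes the discrepancy subleading.
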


Finally, let us comment on the KRR result~\eqref{eq:circle_loss_KRR_saturated} presented in the main paper. Observe that the combination $\lambda(h(\lambda)-1)$ entering the loss functional~\eqref{eq:circle_saturated_full_error_final} is given, in case of KRR, by 
\begin{equation}
    \lambda(h_\eta(\lambda)-1)=-\frac{\eta}{1+\frac{\eta}{\lambda}} = -\eta\Big(1+O(\frac{\eta}{\lambda})\Big).
\end{equation}
Substitution of the above into loss functional~\eqref{eq:circle_saturated_full_error_final} leads to the KRR expression~\eqref{eq:circle_loss_KRR_saturated}. According to condition~\eqref{eq:optimally_scaled_alg_condition_saturated_3}, this expression is applicable only for small enough regularization strengths $|\eta| = O(N^{-\nu})$.

\subsubsection{Non-saturated phase $\kappa<2\nu$}
In this section, we obtain a limiting form of loss functional in the non-saturated phase. While this form was not discussed explicitly in the main text, it was used to produce the first two plots in Figure~\ref{fig:optimal_algorithms}.

An important feature of the non-saturated phase is the localization of the loss on the highest spectral scale $s=\nu$, which is ensured for algorithms $h(\lambda)$ satisfying~\eqref{eq:optimally_scaled_alg_condition_non_saturated_2}. Such localization corresponds to values $\tau\sim1$, so we can not rely on perturbative expansion in small parameter $\tau\to0$, as we did in the sections above. Instead, for all terms of the loss functional~\eqref{eq:trans_inv_loss_func_optalg_noiseless} we need to take into account their limiting $N\to\infty$ form at fixed $\tau$. For N-deformations, such limit leads to symmetrized Hurwitz zeta function $\zeta_\tau^{(\alpha)}=\zeta(\alpha,\tau)+\zeta(\alpha, 1-\tau)$ introduced in equation~\eqref{eq:circle_noiseless_optalg} of the main text. Indeed, slightly rearranging~\eqref{eq:circle_pop_spec_perturbation2}, we get
\begin{equation}\label{eq:circle_pop_spec_perturbation4}
     \big[\lambda^a_k|c_k|^{2b}\big]_N = N^{-\alpha} \Big(\zeta\big(\alpha, \tau\big)+\zeta\big(\alpha, 1-\tau+\frac{2}{N}\big)\Big) \xrightarrow{\substack{N\to\infty,\\\tau=const}} N^{-\alpha} \zeta_\tau^{(\alpha)}.
\end{equation}

Next, observe that on the scale $s=\nu$ the density of eigenvalues $\lambda_k$ is very high, in a sense that $\tfrac{|\lambda_{k+1}-\lambda_{k}|}{\lambda_{k}} = O(\tau^{-1}N^{-1})\xrightarrow{\substack{N\to\infty,\\\tau=const}}0$. Thus, in the limit $N\to\infty$ the summation over spectral index $k$ in the loss functional~\eqref{eq:trans_inv_loss_func_optalg_noiseless} translates into an integral: $\sum_{k=-\frac{N}{2}}^{\frac{N}{2}}\to\int_{-\frac{N}{2}}^{\frac{N}{2}}dk\to 2N^{-1}\int_0^{\frac{1}{2}}d\tau$, where for the last transition recall that $\tau=\frac{|k|+1}{N}$ and N-deformations $[\lambda_k^a|c_k|^{2b}]_N$ are even functions of $k$. This leads to the following continuous form of the loss functional~\eqref{eq:trans_inv_loss_func_optalg_noiseless}
\begin{equation}\label{eq:circle_loss_nonsaturated}
\begin{split}
    L^{(\mathrm{cont})}[h]&=N^{-\kappa}\int\limits_0^{\frac{1}{2}} \left[\frac{\zeta_\tau^{(\kappa+1)}\zeta_\tau^{(2\nu)}}{\big(\zeta_\tau^{(\nu)}\big)^2}\Big(h(\widehat{\lambda}_\tau) -h^*(\widehat{\lambda}_\tau)\Big)^2 + \zeta_\tau^{(\kappa+1)}-\frac{\big(\zeta_\tau^{(\nu+\kappa+1)}\big)^2}{\zeta_\tau^{(\kappa+1)}\zeta_\tau^{(2\nu)}}\right]d\tau\\
    &=N^{-\kappa}\int\limits_0^{\frac{1}{2}} \left[\frac{\zeta_\tau^{(\kappa+1)}\zeta_\tau^{(2\nu)}}{\big(\zeta_\tau^{(\nu)}\big)^2}h^2(\widehat{\lambda}_\tau) -2\frac{\zeta_\tau^{(\nu+\kappa+1)}}{\zeta_\tau^{(\nu)}}h(\widehat{\lambda}_\tau) + \zeta_\tau^{(\kappa+1)}\right]d\tau,
\end{split}
\end{equation}
where $\widehat{\lambda}_\tau=N^{-\nu}\zeta^{(\nu)}_\tau$ and the optimal algorithm $h^*(\widehat{\lambda}_\tau)$ is given by~\eqref{eq:circle_noiseless_optalg}. 

\subsubsection{The overlearning transition point}
Observe that if $\kappa=\nu-1$, then in the case of zero noise $\sigma^2=0$ the optimal algorithm~\eqref{eq:circle_opt_alg} becomes $h^*(\widehat \lambda_k)\equiv 1,$ and the same holds for the limiting $(N\to\infty)$ version~\eqref{eq:circle_noiseless_optalg}. We prove now that for larger (smaller) $\kappa$ the optimal algorithm becomes overlearning (underlearning). We give the proof for the limiting $(N\to\infty)$ version, but it is easy to see that the proof extends without change to the original discrete version as well. 
\begin{lemma}\label{lemma:overlearning_circle}
    Consider the $N\to\infty$ limit of the optimal algorithm given by~\eqref{eq:circle_noiseless_optalg}: 
    \begin{equation}
    h^*(\widehat{\lambda}_\tau)=\frac{\zeta^{(\nu+\kappa+1)}_\tau\zeta^{(\nu)}_\tau}{\zeta^{(\kappa+1)}_\tau\zeta^{(2\nu)}_\tau}, \qquad \zeta_x^{(\alpha)}\equiv\zeta(\alpha,x)+\zeta(\alpha,1-x).
\end{equation}
Then for any $\tau\in(0,1)$
\begin{equation}
    h^*(\widehat{\lambda}_\tau)\begin{cases}
        <1,& \kappa+1<\nu,\\
        =1,&\kappa+1=\nu,\\
        >1,&\kappa+1>\nu.
    \end{cases}
\end{equation}
\end{lemma}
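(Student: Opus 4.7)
The plan is to reduce the lemma to a single sign identity that can be read off at a glance. Setting $\beta := \kappa+1$ and writing $\zeta^{(\alpha)}_\tau = \sum_{b\in B_\tau} b^{-\alpha}$ with $B_\tau = \{n+\tau : n\ge 0\}\cup\{m-\tau : m\ge 1\}$ (a countable set of positive reals with infinitely many distinct values for every $\tau\in(0,1)$), the claim $h^*(\widehat{\lambda}_\tau)\gtreqless 1$ becomes the assertion that $\zeta^{(\nu+\beta)}_\tau\zeta^{(\nu)}_\tau - \zeta^{(\beta)}_\tau\zeta^{(2\nu)}_\tau$ has the same sign as $\beta-\nu=\kappa+1-\nu$.

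The key step I would prove is the symmetrization identity
\begin{equation*}
    2\bigl[\zeta^{(\nu+\beta)}_\tau\zeta^{(\nu)}_\tau - \zeta^{(\beta)}_\tau\zeta^{(2\nu)}_\tau\bigr] = \sum_{b,b'\in B_\tau} b^{-\nu}(b')^{-\nu}\bigl(b^{-\nu}-(b')^{-\nu}\bigr)\bigl(b^{-(\beta-\nu)}-(b')^{-(\beta-\nu)}\bigr),
\end{equation*}
which is a direct Chebyshev-type algebraic verification: expanding the right side gives four terms $b^{-\nu-\beta}(b')^{-\nu}$, $-b^{-2\nu}(b')^{-\beta}$, $-b^{-\beta}(b')^{-2\nu}$, $b^{-\nu}(b')^{-\nu-\beta}$ that sum to $2\zeta^{(\nu+\beta)}\zeta^{(\nu)} - 2\zeta^{(\beta)}\zeta^{(2\nu)}$ after using the $b\leftrightarrow b'$ symmetry. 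The standing assumptions $\nu>1$ and $\kappa>0$ make every exponent in sight exceed $1$, so all series converge absolutely and Fubini justifies the manipulations.

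Once the identity is in hand, the sign analysis is immediate. The map $b\mapsto b^{-\nu}$ is strictly decreasing on $(0,\infty)$, so the two differences $(b^{-\nu}-(b')^{-\nu})$ and $(b^{-(\beta-\nu)}-(b')^{-(\beta-\nu)})$ are co-monotone in $(b,b')$ when $\beta>\nu$, giving a non-negative summand strictly positive for $b\ne b'$; they are anti-monotone when $\beta<\nu$, giving a non-positive summand strictly negative for $b\ne b'$; and the second difference vanishes identically when $\beta=\nu$. Because $B_\tau$ has infinitely many distinct values, pairs with $b\ne b'$ always contribute, delivering the strict trichotomy stated in the lemma.

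The only step I expect to require actual thought, rather than mechanical verification, is spotting the correct symmetrization identity; the rest is bookkeeping. As a back-up, should the algebraic identity prove elusive, the same conclusion follows from strict convexity of $\alpha\mapsto\phi(\alpha) := \log\zeta^{(\alpha)}_\tau$ (a log-sum-exp in $\alpha$, with $\phi''(\alpha)$ equal to a nondegenerate variance): then $\alpha\mapsto\phi(\alpha+\nu)-\phi(\alpha)$ is strictly increasing, and comparing this at $\alpha=\nu$ versus $\alpha=\beta$ yields exactly the same trichotomy for $\phi(\nu+\beta)+\phi(\nu)-\phi(\beta)-\phi(2\nu)$.
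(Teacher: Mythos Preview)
Your main argument is correct and takes a genuinely different route from the paper. The paper proves the lemma by showing that $\alpha\mapsto\zeta_\tau^{(\alpha)}$ is strictly log-convex (via the Cauchy--Schwarz inequality applied to the sequences $((n+\tau)^{-\alpha/2},(n+1-\tau)^{-\alpha/2})_n$ and $(\ln(n+\tau)\,(n+\tau)^{-\alpha/2},\ln(n+1-\tau)\,(n+1-\tau)^{-\alpha/2})_n$), and then writing
\[
\ln h^*(\widehat\lambda_\tau)=\int_\nu^{\kappa+1}\!d\alpha\int_\alpha^{\alpha+\nu}\!d\beta\;\tfrac{d^2}{d\beta^2}\ln\zeta_\tau^{(\beta)},
\]
so that the sign is fixed by the orientation of the outer integral. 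Your Chebyshev-type symmetrization avoids both logarithms and calculus, reducing the question to one explicit algebraic identity whose summands have a transparent sign. What your approach buys is elementarity and a bit more generality (the identity works verbatim for any sum $\sum_b w_b\,b^{-\alpha}$ with positive weights, not only for the symmetrized Hurwitz series); what the paper's approach buys is conceptual economy, since log-convexity is the single structural fact from which the trichotomy drops out without identity-hunting. Your stated back-up---strict convexity of $\phi(\alpha)=\log\zeta_\tau^{(\alpha)}$, hence strict monotonicity of $\alpha\mapsto\phi(\alpha+\nu)-\phi(\alpha)$---is exactly the paper's method, only phrased without the integral representation. One small imprecision worth fixing: it is not true that ``every exponent in sight exceeds $1$'' (the exponent $\beta-\nu$ can be negative), but this is harmless, since that factor only appears multiplied by $b^{-\nu}(b')^{-\nu}$ and the four expanded terms all carry exponent pairs from $\{\nu,2\nu,\beta,\nu+\beta\}$, each larger than $1$.
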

\begin{proof}
We have 
\begin{equation}
    \ln h^*(\widehat{\lambda}_\tau)=(\ln \zeta^{(\nu+\kappa+1)}_\tau-\ln \zeta^{(\kappa+1)}_\tau)-(\ln \zeta^{(2\nu)}_\tau-\ln \zeta^{(\nu)}_\tau)=\int^{\kappa+1}_\nu d\alpha \int_{\alpha}^{\alpha+\nu}d\beta\;\tfrac{d^2}{d\beta^2}\ln \zeta^{(\beta)}_\tau.
\end{equation} To prove the lemma, it suffices to show that the function $f(\alpha)=\zeta_\tau^{(\alpha)}$ is strictly log-convex for $\alpha>1$, i.e. $ff''>(f')^2.$ Note that 
\begin{equation}
    \tfrac{d^m}{d\alpha^m} f(\alpha)=\sum_{n=0}^\infty (-\ln (n+\tau))^m (n+\tau)^{-\alpha}+\sum_{n=0}^\infty (\ln (n+1-\tau))^m (n+1-\tau)^{-\alpha}.
\end{equation}
Then, the inequality $ff''>(f')^2$ is just the Cauchy inequality for the vectors 
\begin{equation}
    (\ldots,(n+\tau)^{-\alpha/2},\ldots, (n+1-\tau)^{-\alpha/2},\dots)
\end{equation} and
\begin{equation}
    (\ldots,-\ln (n+\tau)(n+\tau)^{-\alpha/2},\ldots, \ln (n+1-\tau)(n+1-\tau)^{-\alpha/2},\dots).
\end{equation}
The inequality is strict because the vectors are not collinear.
\end{proof}
The over/under-learning property $h^*(\lambda_\tau)\gtrless 1$ is clearly visible in the right two subfigures of Figure~\ref{fig:optimal_algorithms}. At $\kappa>\nu-1$, the optimal regularized KRR is also overlearning (with a negative regularization). The optimally stopped GF in this regime is GF continued to infinity, i.e. $h(\lambda)=\lim_{t\to\infty}(1-e^{-t\lambda})=1.$

\section{Wishart model}\label{sec:Wishart_model_app}
As for the circle model, in this section we collect all our derivations for the Wishart model.

Our strategy in computing loss functional~\eqref{eq:loss_functional} relies on the resolvent $\widehat{\Rv}(z)$ of empirical kernel matrix $\Kv=\Phiv^T\Lambdav\Phiv$: 
\begin{equation}\label{eq:resolvent2}
    \widehat{\Rv}(z) = \left(\frac{1}{N}\Phiv^T \Lambdav \Phiv-z\Iv\right)^{-1} = \left(\frac{1}{N}\sum_{l}\lambda_l \phiv_l\phiv_l^T-z\Iv\right)^{-1}, \qquad z\in\mathbb{C},
\end{equation}
where $\phiv_l$ are the columns of feature matrix $\Phiv$.

We note that the projections on empirical eigenvalues and eigenvectors, appearing in empirical transfer measure~\eqref{eq:transfer_measure} are related to the resolvent via the limit in the complex plane
\begin{equation}\label{eq:projections_through_resolvent}
\begin{split}
    \sum_{k=1}^N \delta_{\widehat{\lambda}_k}(d\lambda) \uv_k\uv_k^T &= \frac{1}{\pi}\lim_{y\to0_+} \Im\bigg(\sum_k \widehat{\lambda}_k \uv_k \uv_k^T-(\lambda+\iu y)\Iv\bigg)^{-1}d\lambda\\
    &=\frac{1}{\pi}\lim_{y\to0_+} \Im\widehat{\Rv}(\lambda+\iu y)d\lambda,
\end{split}
\end{equation}
where $\Im z$ denotes the imaginary part of $z\in\mathbb{C}$.

Now, denote the resolvent projected on features $\phiv_l$ as 
\begin{equation}
    \widehat{R}_{ll'}(z)=\frac{1}{N} \phiv_l^T \widehat{\Rv}(z) \phiv_{l'},
\end{equation}
and the first two moments of the latter
\begin{align}
    \label{eq:resolvent_first_moment2}
    R_{ll'}(z) &= \EE_{\Phiv} \left[\widehat{R}_{ll'}(z)\right], \\
    \label{eq:resolvent_second_moment2}
    R_{l_1l_1'l_2'l_2}(z_1,z_2) &= \EE_{\Phiv} \left[\widehat{R}_{l_1l_1'}(z_1)\widehat{R}_{l_2'l_2}(z_2)\right].
\end{align}

Then, substituting~\eqref{eq:projections_through_resolvent} into the transfer measure~\eqref{eq:transfer_measure} immediately connects it to the projected resolvent

\begin{equation}\label{eq:transfer_measure_to_resolvent}
    \widehat{\rho}_{ll'}^{(f)}(d\lambda) = \frac{\lambda_l}{\pi\lambda} \underset{y\to 0_+}{\lim} \Im\big\{\widehat{R}_{ll'}(\lambda+\iu y)\big\}d\lambda.
\end{equation}
Carrying the relation~\eqref{eq:transfer_measure_to_resolvent} over to the first and second moments of the transfer measure gives
\begin{align}
    \rho^{(1)}_{ll'}(d\lambda)& = \frac{\lambda_l}{\pi\lambda} \underset{y\to 0_+}{\lim} R_{ll'}^{(\mathrm{im})}(\lambda+\iu y) d\lambda, \\
    \label{eq:transfer_measure_second_moment_through_resolvent}
    \rho^{(2)}_{l_1 l_1'l_2' l_2}(d\lambda_1, d\lambda_2)& = \frac{\lambda_{l_1'}\lambda_{l_2'}}{\pi^2\lambda_1\lambda_2} \; \underset{\substack{y_1\to 0_+\\ y_2\to 0_+}}{\lim} R_{l_1 l_1'l_2' l_2}^{(\mathrm{im})}(\lambda_1+\iu y_1, \lambda_2+\iu y_2)d\lambda_1d\lambda_2, \\
    \rho^{(\eps)}_l(d\lambda)&=\frac{\lambda_l^2}{\pi\lambda^2}\underset{y\to 0_+}{\lim} R_{ll}^{(\mathrm{im})}(\lambda+\iu y) d\lambda,
\end{align}
where $R_{ll'}^{(\mathrm{im}))}$ and $R_{l_1 l_1'l_2' l_2}^{(\mathrm{im})}$ are the moments of the imaginary part of the resolvent projections
\begin{align}
    \label{eq:resolvent_first_moment}
    R_{ll'}^{(\mathrm{im})}(z)& = \mathbb{E}_{\DC_N}\left[\Im\big\{\widehat{R}_{ll'}(z)\big\}\right], \\
    \label{eq:resolvent_second_moment}
    R_{l_1 l_1'l_2' l_2}^{(\mathrm{im})}(z_1,z_2)& = \mathbb{E}_{\DC_N}\left[\Im\big\{\widehat{R}_{l_1 l_1'}(z_1)\big\}\Im\big\{\widehat{R}_{l_2' l_2}(z_2)\big\}\right].
\end{align}
Then, the learning measures defining the loss functional~\eqref{eq:loss_functional} are obtained by summation over population indices as in~\eqref{eq:expected_pop_loss_joint_persp}.

\subsection{Resolvent}
Given the connection between resolvent and learning measures $\rho^{(2)},\rho^{(1)},\rho^{(\eps)}$ described above, we see that the resolvent moments~\eqref{eq:resolvent_first_moment2} and~\eqref{eq:resolvent_second_moment2} are fundamental building blocks for the loss functional~\eqref{eq:loss_functional}. In this section, we try to calculate these moments and simplify the result as much as possible.

To start, recall that the $\tfrac{1}{N}\Tr[\widehat{\Rv}(z)]$ gives the Stieltjes transform of the empirical spectral measure $\widehat{\mu}=\tfrac{1}{N}\sum_{k=1}^N\delta_{\widehat{\lambda}_{k}}$. A central quantity in our calculations will be Stieltjes transform of the average spectral measure $\mu=\EEarg{\widehat{\mu}}$
\begin{equation}
    r(z) \equiv \int_{-\infty}^\infty \frac{\mu(dt)}{z-t}=\frac{1}{N}\EEarg{\Tr[\widehat{\Rv}(z)]}.
\end{equation}

Next, denote resolvents of kernel matrices with one or two spectral components removed as
\begin{align}
    \widehat{\Rv}_{-l}(z) &= \left(\widehat{\Rv}^{-1}(z) - \frac{1}{N}\lambda_l \phiv_l\phiv_l^T\right)^{-1} = \left(\frac{1}{N}\sum_{m\ne l} \lambda_m \phiv_m\phiv_m^T-z\Iv\right)^{-1}, \\
    \widehat{\Rv}_{-l-l'}(z) &= \left(\widehat{\Rv}^{-1}_{-l}(z) - \frac{1}{N}\lambda_{l'} \phiv_{l'}\phiv_{l'}^T\right)^{-1} = \left(\frac{1}{N}\sum_{m\ne l,l'} \lambda_m \phiv_m\phiv_m^T-z\Iv\right)^{-1},
\end{align}
and the respective Stieltjes transforms as $r_{-l}(z), \; r_{-l-l'}(z)$.

In our calculations, we will use two assumptions 
\begin{assumption}\label{th:ass_1}
    \textbf{(Self-averaging property)}. For a random Gaussian vector $\phiv\sim\NC(0,\Iv)$ independent from $\Phiv$
    $$\frac{1}{N}\phiv^T \widehat{\Rv}(z)\phiv \approx \frac{1}{N}\EEarg{\phiv^T \widehat{\Rv}(z)\phiv}=r(z).$$ 
\end{assumption}
\begin{assumption}\label{th:ass_2}
    \textbf{(Stability under eigenvalue removal)}
    $$r_{-l-l'}(z)\approx r_{-l}(z)\approx r(z).$$
\end{assumption}
In classical RMT settings, e.g. that of Marchenko-Pastur law, both of these assumptions can be rigorously shown. Specifically, both the statistical fluctuations of $\frac{1}{N}\phiv^T \widehat{\Rv}(z)\phiv$ and the change of $r(z)$ after eigenvalue removal give no contribution to the limiting spectral measure as $N\to\infty$. We leave for the future work the validity of these assumptions in our settings.

As a final preparation step, let us write down a quick way of deriving the self-consistent (fixed-point) equation for $r(z)$ using the above assumptions. Starting with the trivial relation $1=\tfrac{1}{N}\Tr[\Iv]$, we get
\begin{equation}\label{eq:trace_conservation_law}
    1 = \frac{1}{N}\EEarg{\Tr\Big[\widehat{\Rv}(z)\Big(\tfrac{1}{N}\sum_l \lambda_l \phiv_l\phiv_l^T - z\Iv\Big)\Big]} = \frac{1}{N}\sum_l \lambda_l \EEarg{\tfrac{1}{N}\phiv_l^T \widehat{\Rv}(z) \phiv_l} - zr(z).
\end{equation}
Next, we relate $\phiv_l^T \widehat{\Rv}(z)\phiv_l$ to $\phiv_l^T \widehat{\Rv}_{-l}(z)\phiv_l$ via Sherman-Morrison formula in order to break the dependence between $\phiv_l$ and the matrix inside.
\begin{equation}\label{eq:resolvent_first_moment_comp}
\begin{split}
        \tfrac{1}{N}\phiv_l^T \widehat{\Rv}(z) \phiv_l &= \tfrac{1}{N}\phiv_l^T \widehat{\Rv}_{-l}(z) \phiv_l - \frac{\lambda_l\left(\tfrac{1}{N}\phiv_l^T \widehat{\Rv}_{-l}(z) \phiv_l\right)^2}{1+\lambda_l\tfrac{1}{N}\phiv_l^T \widehat{\Rv}_{-l}(z) \phiv_l} = \frac{\tfrac{1}{N}\phiv_l^T \widehat{\Rv}_{-l}(z) \phiv_l}{1+\lambda_l\tfrac{1}{N}\phiv_l^T \widehat{\Rv}_{-l}(z) \phiv_l}\\
        &\stackrel{(1)}{=}\frac{r_{-l}(z)}{1+\lambda_l r_{-l}(z)} \stackrel{(2)}{=}\frac{r(z)}{1+\lambda_l r(z)},
\end{split}
\end{equation}
where in $(1)$ and $(2)$ we used the assumptions~\ref{th:ass_1} and~\ref{th:ass_2} respectively. Substituting this into~\eqref{eq:trace_conservation_law} gives the self-consistent equation
\begin{equation}\label{eq:stieltjes_selfconst}
    1 = -zr(z) + \frac{1}{N} \sum_l \frac{r(z)\lambda_l}{1+r(z)\lambda_l},
\end{equation}
which is often written in a fixed-point form as
\begin{equation}\label{eq:stieltjes_fixedpoint}
    r(z) = 1 \bigg/ \left(-z+\frac{1}{N}\sum_l \frac{\lambda_l}{1+r(z)\lambda_l}\right).
\end{equation}
For $z=-\eta<0$, we have $r(z)=\eta_{\mathrm{eff},N}^{-1}$ for effective regularization $\eta_{\mathrm{eff},N}$ defined in~\eqref{eq:omni_estimate}.

\subsubsection{Computing the resolvent moments}\label{sec:comp_resolvent_moments}
\paragraph{Reflection symmetry.} Here, we mainly repeat~\cite{Simon_2022} to establish useful exact relations for resolvent expectations. Recall that distribution of a Gaussian random vector $\zv\sim \NC(0, \Iv)$ remains invariant under orthogonal transformations: $\Uv\zv\sim\NC(0,\Iv)$, where $\Uv$ is an arbitrary orthogonal matrix. Now, we notice that our averages of interest~\eqref{eq:resolvent_first_moment} and~\eqref{eq:resolvent_second_moment} have a form $\EE_{\Phiv} \big[f(\Kv,\Phiv)\big]$ for some function $f(\cdot,\cdot)$ of empirical kernel matrix $\Kv=\Phiv\Lambdav\Phiv^T$ and ``features'' $\Phiv$. Applying transformation $\Phiv\to\Uv\Phiv$ to perturbed kernel matrix $\Phiv^T \Uv \Lambdav \Uv^T \Phiv$ gives
\begin{equation}
    \EEarg{f(\Phiv^T \Uv \Lambdav \Uv^T \Phiv, \Phiv)} = \EEarg{f(\Phiv^T \Lambdav \Phiv, \Uv\Phiv)}.
\end{equation}
If we take $\Uv$ to be a reflection along one of the basis axes, e.g. $(\Uv^{(m)})_{ll'}=\delta_{ll'}(1-2\delta_{lm})$ for reflection along axis $m$, then $\Uv^{(m)} \Lambdav (\Uv^{(m)})^T=\Lambdav$. This implies 
\begin{equation}\label{eq:reflection_sym}
    \EEarg{f(\Phiv^T \Lambdav \Phiv, \Phiv)} = \EEarg{f(\Phiv^T \Lambdav \Phiv, \Uv^{(m)}\Phiv)}.
\end{equation}
Applying~\eqref{eq:reflection_sym} to~\eqref{eq:resolvent_first_moment} and~\eqref{eq:resolvent_second_moment} gives their non-zero elements are only those with paired indices: $R_{ll}(z)$ for the first moment and $R_{lll'l'}(z_1,z_2), \; R_{ll'll'}(z_1,z_2), \; R_{ll'l'l}(z_1,z_2)$ for the second moment. Moreover, note that we are interested only in those components of the resolvent second moment that contribute to the loss~\eqref{eq:expected_pop_loss_joint_persp}. This leaves $R_{ll'l'l}(z_1,z_2)$ to be the only relevant components.

\paragraph{First moment.} Here, all the necessary computations were already done in~\eqref{eq:resolvent_first_moment_comp}, leading to
\begin{equation}
    R_{ll}(z) = \frac{r(z)}{1+\lambda_l r(z)}.
\end{equation}

\paragraph{Second moment at $z_1=z_2$.} In this case, the second moment is connected to the derivative of the first moment, which was utilized by~\cite{Simon_2022} and~\cite{Bordelon2020} to obtain the generalization error of KRR. Indeed,
\begin{equation}
    \begin{split}
        R_{ll'l'l}(z,z)&=\EEarg{\frac{1}{N}\phiv_l^T \widehat{\Rv}(z)\philp{l'} \widehat{\Rv}(z)\phiv_l}\\
        &=-\EEarg{\frac{1}{N}\phiv_l^T \frac{\partial \left(\sum_m \lambda_m\philp{m} -z\Iv\right)^{-1}}{\partial \lambda_{l'}}\phiv_l} = -\partial_{\lambda_{l'}} R_{ll}(z).
    \end{split}
\end{equation}
As prerequisite for computing the derivative $\partial_{\lambda_{l'}} R_{ll}(z)$, we compute similar derivative of inverse Stieltjes transform $r^{-1}(z)$. Differentiating the fixed point equation~\eqref{eq:stieltjes_selfconst} in the form $r^{-1}+z=\tfrac{1}{N}\sum_l \tfrac{\lambda_l r^{-1}}{\lambda_l+r^{-1}}$ w.r.t. to either $z$ or $\lambda_{l'}$ gives
\begin{align}
    \frac{\partial r^{-1}}{\partial z}\left[1-\frac{1}{N}\sum_l\frac{\lambda_l}{\lambda_l+r^{-1}}+\frac{1}{N}\sum_l\frac{\lambda_l r^{-1}}{(\lambda_l+r^{-1})^2}\right]&=-1, \\
    \frac{\partial r^{-1}}{\partial \lambda_{l'}}\left[1-\frac{1}{N}\sum_l\frac{\lambda_l}{\lambda_l+r^{-1}}+\frac{1}{N}\sum_l\frac{\lambda_l r^{-1}}{(\lambda_l+r^{-1})^2}\right]&=\frac{1}{N}\frac{r^{-2}}{(\lambda_{l'}+r^{-1})^2},
\end{align}
leading to
\begin{align}
    \frac{\partial r^{-1}}{\partial \lambda_{l'}}&=-\frac{\partial r^{-1}}{\partial z}\frac{1}{N}\frac{r^{-2}}{(\lambda_{l'}+r^{-1})^2}, \\
    \label{eq:Stieltzes_z_derivative}
    \frac{\partial r^{-1}}{\partial z}&=\frac{r^{-1}}{z-\frac{1}{N}\sum_l\frac{\lambda_l r^{-2}}{(\lambda_l+r^{-1})^2}}.
\end{align}
Using this, we second moment becomes
\begin{equation}\label{eq:resolvent_second_momemnt_same_z}
    \begin{split}
        R_{ll'l'l}(z,z)=-\partial_{\lambda_{l'}} R_{ll}(z)=\frac{\delta_{ll'}}{(\lambda_l+r^{-1})^2}-\frac{1}{N}\frac{r^{-2}\partial_z r^{-1}}{(\lambda_l+r^{-1})^2(\lambda_{l'}+r^{-1})^2}.
    \end{split}
\end{equation}

\paragraph{Second moment at $z_1\ne z_2$ with $l=l'$.} 
Here and in the next case $l\ne l'$ we will again apply Sherman-Morrison formula, including two subsequent applications to break dependence with $\phiv_l, \phiv_{l'}$. Denoting $\widehat{\Rv}_{\#}(z)$ the resolvent with, possibly, removed eigenvalue, removing an extra eigenvalue can be written in a simplified form under assumptions~\ref{th:ass_1} and~\ref{th:ass_2}

\begin{equation}\label{eq:simplified_Sherman_Morisson}
    \widehat{\Rv}_{\#}(z) = \widehat{\Rv}_{\#-l}(z) - \frac{\lambda_l}{1+\lambda_l r(z)}\widehat{\Rv}_{\#-l}(z)\philp{l}\widehat{\Rv}_{\#-l}(z)=q\Big(\widehat{\Rv}_{\#-l}(z), \philp{l},a_l(z)\Big),
\end{equation}
where $q(\Xv, \Yv, a)=\Xv - a\Xv\Yv\Xv$ is a polynomial in two matrices $\Xv,\Yv$ and a scalar variable $a$, and $a_l(z)$ is a shorthand notation for $\lambda_l / (1+\lambda_l r(z))$.

Using representation~\eqref{eq:simplified_Sherman_Morisson} we can write our second moment element as
\begin{equation}\label{eq:resolvent_second_moment_polynomial1}
    R_{llll}(z_1,z_2) = \EEarg{\Tr\left[\Yv q(\Xv_1, \Yv, a_1)\Yv q(\Xv_2, \Yv, a_2)\right]},
\end{equation}
where we have denoted 
\begin{equation}\label{eq:shorthand_for_resolvent_moments_stuff}
    \Xv_1=\widehat{\Rv}_{-l}(z_1), \quad \Xv_2=\widehat{\Rv}_{-l}(z_2), \quad \Yv=\philp{l},  \quad a_1=a_l(z_1), \quad a_2=a_l(z_2).
\end{equation}
Now we exploit the independence between $\Xv_1,\Xv_2$ and $\Yv$ by first taking the expectations w.r.t. $\Yv$. Since $\Yv$ is product of two Gaussian random variables and~\eqref{eq:resolvent_second_moment_polynomial1} is polynomial in $\Yv$ containing monomials of degree from 2 to 4, we need to compute Gaussian moments of the order from 4 to 8. This can be conveniently done using Wick's theorem for computing moments of Gaussian random variables, which equates $2m$-th moment to the sum over all pairings of $2m$ variables of the products of $m$ intra-pair covariances. Specifically, for normal vector $\xv\sim\NC(0,\Iv)$, it reduces to the products of Dirac deltas 
\begin{equation}
    \EEarg{x_{i_1}x_{i_2}\ldots x_{i_{2m-1}}x_{i_{2m}}} = \frac{1}{2^m}\sum\limits_{\sigma\in S_{2m}} \prod_{j=1}^m \EEarg{x_{i_{\sigma(2j-1)}}x_{i_{\sigma(2j)}}} = \frac{1}{2^m}\sum\limits_{\sigma\in S_{2m}} \prod_{j=1}^m \delta_{i_{\sigma(2j-1)}i_{\sigma(2j)}},
\end{equation}
where $S_{2m}$ denotes the group of permutations of the index set $\{1,2,\ldots,2m-1,2m\}$.
Now we apply Wick's theorem separately to each monomial from~\eqref{eq:resolvent_second_moment_polynomial1}. For the simplest order-two monomial, we have 
\begin{equation}
\begin{split}
    \EE_{\Yv}\left[\Tr[\Yv\Xv_1\Yv\Xv_2]\right]&=N^{-2}\sum_{i_+i_-j_+j_-}\EE_{\phiv\sim\NC(0,\Iv)}\left[\phi_{i_+}\phi_{i_-}(\Xv_1)_{i_-j_+}\phi_{j_+}\phi_{j_-}(\Xv_2)_{j_-i_+}\right]\\
    &= N^{-2}\left(Tr[\Xv_1]\Tr[\Xv_2]+2\Tr[\Xv_1\Xv_2]\right) \\
    &= N^{-2}\left(Tr[\Xv_1]\Tr[\Xv_2]+2\frac{Tr[\Xv_1]-Tr[\Xv_2]}{z_1-z_2}\right)\\
    &= r(z_1)r(z_2) + \frac{2}{N}\frac{r(z_1)-r(z_2)}{z_1-z_2} = r_1r_2 + O(N^{-1}),
\end{split}
\end{equation}
where $r_1,r_2$ is a shorthand for $r(z_1),r(z_2)$. Here we first applied Wick's theorem, then transformed the product $\Xv_1\Xv_2=(\Xv_1-\Xv_2) / (z_1-z_2)$. Then, we wrote the resolvent traces in terms of Stieltjes transform $\Tr[\Xv_{1,2}]=r(z_{1,2
})$ using assumption~\eqref{th:ass_1}, thus removing the need to take expectation with respect $\Xv_1,\Xv_2$ in the remaining calculation of the second moment $R_{llll}(z_1,z_2)$. An important observation is that when computing the moment of order $n$, the leading term in $\tfrac{1}{N}$ has to be the product of traces $Tr[\Xv_1],\Tr[\Xv_2]$,  while all the other terms (containing at least one trace of a product) will give $O(N^{-1})$ contribution. Using the above observation, we can easily compute leading terms for all the other averages:
\begin{equation}
\begin{split}
\EE_{\Yv}\left[\Tr[\Yv\Xv_1\Yv\Xv_1\Yv\Xv_2]\right]&=N^{-3}\Tr[\Xv_1]^2\Tr[\Xv_2]+O(N^{-1})=r_1^2r_2 + O(N^{-1}), \\
\EE_{\Yv}\left[\Tr[\Yv\Xv_1\Yv\Xv_2\Yv\Xv_2]\right]&=N^{-3}\Tr[\Xv_1]\Tr[\Xv_2]^2+O(N^{-1})=r_1r_2^2 + O(N^{-1}), \\
\EE_{\Yv}\left[\Tr[\Yv\Xv_1\Yv\Xv_1\Yv\Xv_2\Yv\Xv_2]\right]&=N^{-4}\Tr[\Xv_1]^2\Tr[\Xv_2]^2+O(N^{-1})=r_1^2r_2^2 + O(N^{-1}).
\end{split}
\end{equation}
Combining all the monomials, we can summarize the whole computation as follows
\begin{equation}\label{eq:resolvent_second_moment_same_indices}
        R_{llll}(z_1,z_2) = q\big(r_1, 1, a_1\big)q\big(r_2, 1, a_2\big) + O(N^{-1})=\frac{1}{(\lambda_l+r_1^{-1})(\lambda_l+r_2^{-1})}+ O(N^{-1}),
\end{equation}
where in the last equality we used $q(r(z), 1, a_l(z))=\tfrac{1}{\lambda_l + r^{-1}(z)}$. Note that in the limit $z_1\to z_2$, the expressions above coincide with previously derived~\eqref{eq:resolvent_second_momemnt_same_z} up to subleading $O(N^{-1})$ terms.

\paragraph{Second moment at $z_1\ne z_2$ with $l\ne l'$.}
Here we need to remove two eigenvalues $\lambda_l$ and $\lambda_{l'}$. The respective resolvent expression is
\begin{equation}\label{eq:two_leave_out_resolvent}
    \begin{split}
        \widehat{\Rv}(z) =& q(\widehat{\Rv}_{-l}(z), \philp{l}, a_l(z)) = q(q(\widehat{\Rv}_{-l-l'}(z), \philp{l'}, a_{l'}(z)), \philp{l}, a_l(z)))\\
        =&\widetilde{q}(\widehat{\Rv}_{-l-l'}(z), \philp{l}, \philp{l'}, a_l(z), a_{l'}(z)),
    \end{split}
\end{equation}
where
\begin{equation}
\begin{split}
    \widetilde{q}(\Xv, \Yv, \Yv', a, a')&=q(q(\Xv,\Yv',a'),\Yv,a)\\
    &=\Xv - a\Xv\Yv\Xv - a'\Xv\Yv'\Xv + aa'\big(\Xv\Yv\Xv\Yv'\Xv+\Xv\Yv'\Xv\Yv\Xv\big). 
\end{split}
\end{equation}
Substituting~\eqref{eq:two_leave_out_resolvent} into~\eqref{eq:resolvent_second_moment2} will produce an expression of the form
\begin{equation}\label{eq:resolvent_second_moment_polynomial}
    R_{ll'l'l}(z_1,z_2) = \EEarg{\Tr\left[\Yv'\widetilde{q}(\Xv_1, \Yv, \Yv', a_1, a_1')\Yv\widetilde{q}(\Xv_2, \Yv, \Yv', a_2, a_2')\right]},
\end{equation}
where in addition to~\eqref{eq:shorthand_for_resolvent_moments_stuff} we have denoted $\Yv'=\philp{l'}$, $a_1'=a_{l'}(z_1)$, and $a_2'=a_{l'}(z_2)$. 

Now, let us again calculate the expectations over $\Yv,\Yv'$ using Wick's theorem. The difference with the $l=l'$ case is that the pairings between $\phiv_l$ and $\phiv_{l'}$ produce zeros due to their independence. For example, the expectation for the simplest monomial is 
\begin{equation}
\begin{split}
        \EE_{\Yv,\Yv'}\left[\Tr[\Yv'\Xv_1\Yv\Xv_2]\right] &=\frac{1}{N^2}\hspace{-2mm}\sum_{i_+i_-j_+j_-}\hspace{-2mm}\EE_{\phiv,\phiv'\sim\NC(0,\Iv)}\left[\phi'_{i_+}\phi'_{i_-}(\Xv_1)_{i_-j_+}\phi_{j_+}\phi_{j_-}(\Xv_2)_{j_-i_+}\right]\\
        &= N^{-2}\Tr[\Xv_1\Xv_2] = -\frac{1}{N} \frac{r(z_1)-r(z_2)}{z_1-z_2}.
\end{split}
\end{equation}
Observe that the monomial above has $\frac{1}{N}$ magnitude. This is the consequence that $N^{0}$ terms from the $l=l'$ case arose from pairings between $\Yv'$ and $\Yv$, which are zero in the $l\ne l'$ case. Taking into account the symmetry $R_{ll'll'}(z_1,z_2)=R_{ll'll'}(z_2,z_1)$, we proceed similarly and calculate the leading terms for all the other independent monomials.

Terms with $\Xv_2$:
\begin{equation}
    \begin{split}
        \EE_{\Yv,\Yv'}\left[\Tr[\Yv'\Xv_1\Yv'\Xv_1\Yv\Xv_2]\right]=&N^{-3}\Tr[\Xv_1\Xv_2]\Tr[\Xv_1]+O(N^{-2}), \\
        \EE_{\Yv,\Yv'}\left[\Tr[\Yv'\Xv_1\Yv\Xv_1\Yv\Xv_2]\right]=&N^{-3}\Tr[\Xv_1\Xv_2]\Tr[\Xv_1]+O(N^{-2}), \\
        \EE_{\Yv,\Yv'}\left[\Tr[\Yv'\Xv_1\Yv'\Xv_1\Yv\Xv_1\Yv\Xv_2]\right]=&N^{-4}\Tr[\Xv_1\Xv_2]\Tr[\Xv_1]\Tr[\Xv_2]+O(N^{-2}), \\
    \EE_{\Yv,\Yv'}\left[\Tr[\Yv'\Xv_1\Yv\Xv_1\Yv'\Xv_1\Yv\Xv_2]\right]=&3N^{-4}3\Tr[\Xv_1\Xv_2]\Tr[\Xv_1^2]+O(N^{-3})=O(N^{-2}).
    \end{split}
\end{equation}
New terms with $\Xv_2\Yv\Xv_2$:
\begin{equation}
    \begin{split}
        \EE_{\Yv,\Yv'}[\Tr[\Yv'\Xv_1\Yv'\Xv_1&\Yv\Xv_2\Yv\Xv_2]]=N^{-4}\Tr[\Xv_1\Xv_2]\Tr[\Xv_1]\Tr[\Xv_2]+O(N^{-2}), \\
        \EE_{\Yv,\Yv'}[\Tr[\Yv'\Xv_1\Yv\Xv_1&\Yv\Xv_2\Yv\Xv_2]]=N^{-4}\Tr[\Xv_1\Xv_2]\Tr[\Xv_1]\Tr[\Xv_2]+O(N^{-2}), \\
        \EE_{\Yv,\Yv'}[\Tr[\Yv'\Xv_1\Yv'\Xv_1\Yv\Xv_1&\Yv\Xv_2\Yv\Xv_2]]=N^{-5}\Tr[\Xv_1\Xv_2]\Tr[\Xv_1]^2\Tr[\Xv_2]+O(N^{-2}), \\
        \EE_{\Yv,\Yv'}[\Tr[\Yv'\Xv_1\Yv\Xv_1\Yv'\Xv_1&\Yv\Xv_2\Yv\Xv_2]]=\\
        &=3N^{-5}\Tr[\Xv_1\Xv_2]\Tr[\Xv_1^2]\Tr[\Xv_2]+O(N^{-3})=O(N^{-2}).
    \end{split}
\end{equation}
New terms with $\Xv_2\Yv'\Xv_2$:
\begin{equation}
    \begin{split}
        \EE_{\Yv,\Yv'}[\Tr[\Yv'\Xv_1\Yv\Xv_1&\Yv\Xv_2\Yv'\Xv_2]]=N^{-4}\Tr[\Xv_1\Xv_2]\Tr[\Xv_1]\Tr[\Xv_2]+O(N^{-2}), \\
        \EE_{\Yv,\Yv'}[\Tr[\Yv'\Xv_1\Yv'\Xv_1\Yv\Xv_1&\Yv\Xv_2\Yv'\Xv_2]]=N^{-5}\Tr[\Xv_1\Xv_2]\Tr[\Xv_1]^2\Tr[\Xv_2]+O(N^{-2}), \\
        \EE_{\Yv,\Yv'}[\Tr[\Yv'\Xv_1\Yv\Xv_1\Yv'\Xv_1&\Yv\Xv_2\Yv'\Xv_2]]=\\
        &=3N^{-5}\Tr[\Xv_1\Xv_2]\Tr[\Xv_1^2]\Tr[\Xv_2]+O(N^{-3})=O(N^{-2}).
    \end{split}
\end{equation}
New terms with $\Xv_2\Yv\Xv_2\Yv'\Xv_2$:
\begin{equation}
    \begin{split}
    \EE_{\Yv,\Yv'}[\Tr[\Yv'\Xv_1\Yv'\Xv_1\Yv&\Xv_1\Yv\Xv_2\Yv\Xv_2\Yv'\Xv_2]]=\\
    &=N^{-6}\Tr[\Xv_1\Xv_2]\Tr[\Xv_1]^2\Tr[\Xv_2]^2+O(N^{-2}), \\
    \EE_{\Yv,\Yv'}[\Tr[\Yv'\Xv_1\Yv\Xv_1\Yv'&\Xv_1\Yv\Xv_2\Yv\Xv_2\Yv'\Xv_2]]=\\
    &=3N^{-6}\Tr[\Xv_1\Xv_2]\Tr[\Xv_1]^2\Tr[\Xv_2]^2+O(N^{-3})=O(N^{-2}).
    \end{split}
\end{equation}
A new term with $\Xv_2\Yv'\Xv_2\Yv\Xv_2$:
\begin{equation} 
\begin{split}
\EE_{\Yv,\Yv'}&[\Tr[\Yv'\Xv_1\Yv\Xv_1\Yv'\Xv_1\Yv\Xv_2\Yv'\Xv_2\Yv\Xv_2]]=\\
&=N^{-6}\left(6\Tr[\Xv_1\Xv_2]^3+9\Tr[\Xv_1\Xv_2]\Tr[\Xv_1]^2\Tr[\Xv_2]^2\right)+O(N^{-4})=O(N^{-3}).
\end{split}
\end{equation}

To summarize, we observe two types of monomials. The first type has the order $O(N^{-2})$ and is composed of those monomials which take $\Xv_1\Yv\Xv_1\Yv'\Xv_1$ from $\widetilde{q}(\Xv_1,\Yv,\Yv',a_1,a_1')$ and/or $\Xv_2\Yv'\Xv_2\Yv\Xv_2$ from $\widetilde{q}(\Xv_2,\Yv,\Yv',a_2,a_2')$. The rest of the monomials $g(\cdot,\cdot,\cdot,\cdot)$, contain exactly one factor $\Tr[\Xv_1\Xv_2]=N\tfrac{r_1-r_2}{z_1-z_2}$, have the order $O(N^{-1})$, and satisfy 
\begin{equation}
\begin{split}
    \EE_{\Yv,\Yv'}\left[g\big(\Xv_1,\Xv_2,\Yv,\Yv'\big)\right] &= \frac{1}{N}\frac{r_1-r_2}{z_1-z_2} \frac{g\big(r_1,r_2,1,1\big)}{r_1r_2} + O(N^{-2})\\
    &=-\frac{1}{N}\frac{r_1^{-1}-r_2^{-1}}{z_1-z_2} g\big(r_1,r_2,1,1\big) + O(N^{-2}).
\end{split}
\end{equation}
Thus, the leading $O(N^{-1})$ term for the second moment with $l\ne l'$ is given by
\begin{equation}\label{eq:resolvent_second_moment_different_indices}
    \begin{split}
        R_{ll'l'l}&(z_1,z_2)=-\frac{1}{N}\frac{r_1^{-1}-r_2^{-1}}{z_1-z_2}\prod_{s=1,2} \big[r_s-(a_s+a_s')r^2_s+a_sa_s'r^3_s\big]+O(N^{-2})\\
        &=-\frac{1}{N}\frac{r_1^{-1}-r_2^{-1}}{z_1-z_2}\frac{r_1^{-1}r_2^{-1}}{(\lambda_{l}+r_1^{-1})(\lambda_{l'}+r_1^{-1})(\lambda_{l}+r_2^{-1})(\lambda_{l'}+r_2^{-1})}+O(N^{-2}).
    \end{split}
\end{equation}

Again, in the limit $z_1\to z_2$, the result above coincides with previously derived~\eqref{eq:resolvent_second_momemnt_same_z} up to subleading $O(N^{-2})$ terms.

\paragraph{Summations over $l'$ and $l$ in~\eqref{eq:expected_pop_loss_joint_persp}.} For the loss functional we will need not the bare second moment $R_{ll'l'l}(z_1,z_2)$ but the sum $\sum_{l'} \lambda_{l'}^2R_{ll'l'l}(z_1,z_2)$. First, let us start with the case $z_1=z_2$ where the second moment is given by~\eqref{eq:resolvent_second_momemnt_same_z}. The essential sum to compute is $\sum_{l'}\frac{\lambda_{l'}^2}{(\lambda_{l'}+r^{-1})^2}$. Using the fixed-point equation~\eqref{eq:stieltjes_selfconst} gives
\begin{equation}
    \frac{1}{N}\sum_{l'}\frac{\partial_z r^{-1} \lambda_{l'}^2}{(\lambda_{l'}+r^{-1})^2}=\frac{1}{N}\sum_{l'}\frac{\partial}{\partial z} \frac{r^{-1}\lambda_{l'}}{\lambda_{l'}+r^{-1}}=\frac{\partial}{\partial z}(r^{-1}+z)=1+\partial_z r^{-1}.
\end{equation}
We can substitute this result into the second moment to get the desired second moment sum. 
\begin{equation}
    \sum_{l'} \lambda_{l'}^2R_{ll'l'l}(z,z) = \frac{\lambda_l^2-(1+\partial_z r^{-1})r^{-2}}{(\lambda_l+r^{-1})^2}.
\end{equation}

Now, we proceed to the case $z_1\ne z_2$. Similarly, from~\eqref{eq:resolvent_second_moment_different_indices} we see that the essential sum to compute is 
\begin{equation}\label{eq:lprime_sum_1}
\begin{split}
        \frac{1}{N}\frac{r_1^{-1}-r_2^{-1}}{z_1-z_2}\sum_{l'}\frac{\lambda_{l'}^2}{(\lambda_{l'}+r_1^{-1})(\lambda_{l'}+r_1^{-1})}&=\frac{1}{N}\frac{1}{z_1-z_2}\sum_{l'}\bigg(\frac{\lambda_{l'}r_1^{-1}}{\lambda_{l'}+r_1^{-1}}-\frac{\lambda_{l'}r_2^{-1}}{\lambda_{l'}+r_2^{-1}}\bigg)\\
        &=1+\frac{r_1^{-1}-r_2^{-1}}{z_1-z_2},
\end{split}
\end{equation}
which is basically a finite difference version of the sum for $z_1=z_2$. Again, we substitute this result into second moment sum to get 
\begin{equation}
        \sum_{l'}\lambda_{l'}^2R_{ll'l'l}(z_1,z_2) = \frac{\lambda_l^2-r_1^{-1}r_2^{-1}\left(1+\tfrac{r_1^{-1}-r_2^{-1}}{z_1-z_2}\right)}{(\lambda_l + r_1^{-1})(\lambda_l +r_2^{-1})} + O(N^{-1}).
\end{equation}
Now let us consider the sum over $l$ in~\eqref{eq:expected_pop_loss_joint_persp}. First, observe that the first-moment term from~\eqref{eq:expected_pop_loss_joint_persp} enters in the form 
\begin{equation}\label{eq:resolvent_first_moment_full_populaiton_sum}
    \sum_l c_l^2 \lambda_l R_{ll}(z)= \sum_l \frac{\lambda_l c_l^2}{\lambda_l + r^{-1}(z)} = u(z).
\end{equation}
Here we encountered the first auxiliary function $u(z)$ introduced~\eqref{eq:wishart_model_vuw_functions}. 
However, directly performing the respective sum for the second-moment term does not automatically reduce it to an expression with ``decoupled'' factors (depending only on $z_1$ or $z_2$ but not jointly on $z_1,z_2$). Yet, we can perform an additional transformation to express the result in terms of decoupled factors. Representation of fractions product as a difference, similar to~\eqref{eq:lprime_sum_1}, gives for two parts of $\sum_{l,l'}c_l^2 \lambda_{l'}^2R_{ll'l'l}(z_1,z_2)$
\begin{align}
    \sum_l \frac{c_l^2(\lambda_l^2-r_1^{-1}r_2^{-1})}{(\lambda_l + r_1^{-1})(\lambda_l +r_2^{-1})}&=\sum_l \Big(\frac{\lambda_lc_l^2}{\lambda_l+r_1^{-1}}+\frac{\lambda_lc_l^2}{\lambda_l+r_2^{-1}}-c_l^2\Big)=u_1+u_2-\sum_l c_l^2, \\
    \sum_l \frac{-c_l^2r_1^{-1}r_2^{-1}\tfrac{r_1^{-1}-r_2^{-1}}{z_1-z_2}}{(\lambda_l + r_1^{-1})(\lambda_l +r_2^{-1})}&=r_1^{-1}r_2^{-1} \frac{\sum_l\tfrac{c_l^2}{\lambda_l+r_1^{-1}}-\sum_l\tfrac{c_l^2}{\lambda_l+r_2^{-1}}}{z_1-z_2}=r_1^{-1}r_2^{-1}\frac{v_1-v_2}{z_1-z_2},
\end{align}
where we have encountered second auxiliary function $v(z)$ defined in~\eqref{eq:wishart_model_vuw_functions}.
Summarizing our computation of the second moment, we have obtained 
\begin{equation}\label{eq:resolvent_second_moment_full_populaiton_sum}
\sum_{l,l'}c_l^2 \lambda_{l'}^2R_{ll'l'l}(z_1,z_2) = u(z_1)+u(z_2) + r^{-1}(z_1)r^{-1}(z_2)\frac{v(z_1)-v(z_2)}{z_1-z_2} - \sum_{l}c_l^2    .
\end{equation}

Thus, we fully expressed the population sums of resolvent moments in terms of two auxiliary functions $v(z)$ and $u(z)$, which would later define our final result for the loss functional. 

\subsection{Loss functional}\label{sec:loss_funcitonal_Wishart_model}
In this section, we using the resolvent moments derived in Section~\ref{sec:comp_resolvent_moments} to compute the loss functional~\eqref{eq:expected_pop_loss_joint_persp}. 

\paragraph{KRR case.} As a sanity check, let's first compute the loss for the case of KRR learning algorithm $h_\eta(z)=\frac{z}{z+\eta}$, with the expectation to recover the original expression~\eqref{eq:omni_estimate}. In the absence of target noise $\sigma^2=0$, we have
\begin{equation}\label{eq:llheta}
\begin{split}
    L_l[h_\eta]&\stackrel{\sigma^2=0}{=}\sum_{l'}\lambda_{l'}^2R_{ll'l'l}(-\eta,-\eta)-2\lambda_l R_{ll}(-\eta)+1\\
    &=\frac{\lambda_l^2-\etaeff^2(1-\partial_{\eta}\etaeff)}{(\lambda_l+\etaeff)^2}-2\frac{\lambda_l}{\lambda_l+\etaeff}+1=\frac{\etaeff^2\partial_{\eta}\etaeff}{(\lambda_l+\etaeff)^2}.
\end{split}
\end{equation} 
For the contribution of noise term we have 
\begin{equation}
\begin{split}
    \eps^{(2)}_l &= \lambda_l^2 \frac{1}{N} \EEarg{\phiv_l^T\widehat{\Rv}^2(-\eta)\phiv_l}=-\lambda_l^2 \partial_\eta\frac{1}{N} \EEarg{\phiv_l^T\widehat{\Rv}(-\eta)\phiv_l}\\
    &=-\lambda_l^2 \partial_\eta R_{ll}(-\eta)= \frac{\lambda_l^2\partial_{\eta}\etaeff}{(\lambda_l+\etaeff)^2}.
\end{split}
\end{equation}
Combining noiseless and noise contributions, we obtain 
\begin{equation}
    L_\mathrm{KRR}(\eta) = \frac{1}{2}\frac{\etaeff^2\partial_{\eta}\etaeff}{(\lambda_l+\etaeff)^2} + \frac{\sigma^2}{2N}\frac{\lambda_l^2\partial_{\eta}\etaeff}{(\lambda_l+\etaeff)^2},
\end{equation}
which is the same as~\eqref{eq:omni_estimate}.

\paragraph{General case.} Now we turn to our main goal of describing learning measures $\rho^{(2)},\rho^{(1)},\rho^{(\eps)}$. Denote, $u(\lambda)=\lim_{y\to 0_+}u(\lambda+\iu y)$ and $v(\lambda)=\lim_{y\to 0_+}v(\lambda+\iu y)$. Then, for the first moment of the learning measure, we have 
\begin{equation}\label{eq:Wishart_learning_measure_first_moment}
    \rho^{(1)}(d\lambda)=\sum_{l,l'} c_lc_{l'}\rho^{(1)}_{ll'}(d\lambda) = \frac{1}{\pi\lambda}\underset{y\to 0_+}{\lim} \sum_{l}c_l^2\lambda_l \Im R_{ll}(\lambda + \iu y)d\lambda=\frac{\Im u(\lambda)}{\pi\lambda}d\lambda.
\end{equation}
For noise measure $\rho^{(\eps)}_l(d\lambda)$ we similarly obtain
\begin{equation}\label{eq:Wishart_noise_learning_measure_moment}
    \rho^{(\eps)}(d\lambda)= \frac{1}{\pi\lambda^2}\underset{y\to 0_+}{\lim} \sum_{l}\lambda_l^2 \Im R_{ll}(\lambda + \iu y)d\lambda=\frac{\Im w(\lambda)}{\pi\lambda^2}d\lambda,
\end{equation}
where $w(\lambda)=\lim_{y\to 0_+}w(\lambda+\iu y)$ for the last auxiliary function $w(z)$ defined in~\eqref{eq:wishart_model_vuw_functions}. 

Now we proceed to the computation for the second moment of the learning measure $\rho^{(2)}(d\lambda_1,d\lambda_2)$ using the relation~\eqref{eq:transfer_measure_second_moment_through_resolvent}. We have
\begin{equation}
    \rho^{(2)}(d\lambda_1,d\lambda_2) = \frac{1}{\pi^2\lambda_1\lambda_2} \underset{\substack{y_1\to 0_+\\ y_2\to 0_+}}{\lim} \sum_{l,l'}c_l^2\lambda_{l'}^2 R_{l_1 l_1'l_2' l_2}^{(\mathrm{im})}(\lambda_1+\iu y_1, \lambda_2+\iu y_2) d\lambda_1 d\lambda_2.
\end{equation}

Note that while for the first moment we simply have $R_{ll'}^{(\mathrm{im})}(z)=\Im R_{ll'}(z)$, the relation between $R_{l_1 l_1'l_2' l_2}^{(\mathrm{im})}(z_1,z_2)$, and previously computed $R_{l_1 l_1'l_2' l_2}(z_1,z_2)$ is less straightforward since the product of two imaginary parts has to be taken out of expectation. This can be done with the following trick: for two random variables $w_1,w_2$ we have
\begin{equation}
    \EEarg{\Im w_1 \Im w_2}=\EEarg{\Re\frac{w_1\overline{w_2}-w_1w_2}{2}} = \Re \frac{\EEarg{w_1\overline{w_2}}-\EEarg{w_1w_2}}{2}.
\end{equation}
Taking $w_1=\widehat{R}_{ll'}(z_1), w_2=\widehat{R}_{l'l}(z_2)$, and noting that $\overline{\widehat{\Rv}(z)}=\widehat{\Rv}(\overline{z})$, we get the desired second moment of imaginary parts 
\begin{equation}\label{eq:impart_resolvent_second_moment}
    R_{ll'l'l}^{(\mathrm{im})}(z_1,z_2)=\EEarg{\Im \widehat{R}_{ll'}(z_1)\Im \widehat{R}_{l'l}(z_2)} = \Re\left[\frac{R_{ll'l'l}(z_1,\overline{z_2})-R_{ll'l'l}(z_1,z_2)}{2}\right].
\end{equation}
Observe that the part $u(z_1)+u(z_2)- \sum_{l}c_l^2$ of $\sum_{l,l'}c_l^2 \lambda_{l'}^2R_{l'll'l}(z_1,z_2)$ gives no contribution when substituted into~\eqref{eq:impart_resolvent_second_moment}:
\begin{equation}
    \Re\left[\big(u(z_1)+u(\overline{z_2})-\sum_l c_l^2\big) - \big(u(z_1)+u(z_2)-\sum_l c_l^2\big)\right]=\Re\big[\overline{u(z_2)}-u(z_2) \big]=0.
\end{equation}
Indeed, the remaining part has a non-trivial contribution that we will compute below in the limit $z_1\to\lambda_1+\iu 0, z_2\to \lambda_2 +\iu 0$. 
\begin{equation}\label{eq:resolvent_imparts_second_moment_sum}
\begin{split}
    \underset{\substack{y_1\to 0_+\\ y_2\to 0_+}}{\lim}& \sum_{l,l'}c_l^2\lambda_{l'}^2 R_{l_1 l_1'l_2' l_2}^{(\mathrm{im})}(\lambda_1+\iu y_1, \lambda_2+\iu y_2) \\
    =&-\frac{1}{2}\Re\underset{\substack{y_1\to 0_+\\ y_2\to 0_+}}{\lim}r^{-1}(\lambda_1+\iu y_1)r^{-1}(\lambda_2+\iu y_2)\frac{v(\lambda_1+\iu y_1)-v(\lambda_2+\iu y_2)}{\lambda_1-\lambda_2 + \iu(y_1-y_2)}\\
    &+\frac{1}{2}\Re\underset{\substack{y_1\to 0_+\\ y_2\to 0_+}}{\lim}r^{-1}(\lambda_1+\iu y_1)r^{-1}(\lambda_2-\iu y_2)\frac{v(\lambda_1+\iu y_1)-v(\lambda_2-\iu y_2)}{\lambda_1-\lambda_2 + \iu(y_1+y_2)}.
\end{split}
\end{equation}
The first limit here can be easily taken in joint way $y_1=y_2=y\to 0_+$, leading to
\begin{equation}\label{eq:rimsms_t1}
    \underset{\substack{y_1\to 0_+\\ y_2\to 0_+}}{\lim}r^{-1}(\lambda_1+\iu y_1)r^{-1}(\lambda_2+\iu y_2)\frac{v(\lambda_1+\iu y_1)-v(\lambda_2+\iu y_2)}{\lambda_1-\lambda_2 + \iu(y_1-y_2)}=\frac{v(\lambda_1)-v(\lambda_2)}{r(\lambda_1)r(\lambda_2)(\lambda_1-\lambda_2)}.
\end{equation}
Note that this expression is regular at $\lambda_1=\lambda_2$ since we assume $v(\lambda)$ to be differentiable. 

The second limit might have a non-vanishing singularity at $\lambda_1=\lambda_2$, for which we will need to use Sokhotski–Plemelj formula $\lim_{\varepsilon\to 0_+}\tfrac{1}{x-\iu \varepsilon}=\iu \pi \delta(x)+\mathcal{P}(\tfrac{1}{x})$, where $\mathcal{P}$ denotes the Cauchy principal value. 
\begin{equation}\label{eq:rimsms_t2}
\begin{split}
    \underset{\substack{y_1\to 0_+\\ y_2\to 0_+}}{\lim}r^{-1}(\lambda_1&+\iu y_1)r^{-1}(\lambda_2-\iu y_2)\frac{v(\lambda_1+\iu y_1)-v(\lambda_2-\iu y_2)}{\lambda_1-\lambda_2 + \iu(y_1+y_2)}\\
    =&\lim_{y_1\to0_+}r^{-1}(\lambda_1+\iu y_1)\overline{r^{-1}(\lambda_2)}\frac{v(\lambda_1+\iu y_1)-\overline{v(\lambda_2)}}{\lambda_1-\lambda_2 + \iu y_1}\\
    =&|r^{-1}(\lambda_1)|^2 2\pi\Im\{v(\lambda_1)\}\delta(\lambda_1-\lambda_2) + r^{-1}(\lambda_1)\overline{r^{-1}(\lambda_2)}\mathcal{P}\left(\frac{v(\lambda_1)-\overline{v(\lambda_2)}}{\lambda_1-\lambda_2}\right).
\end{split}
\end{equation}
Note that the singularity at $\lambda_1=\lambda_2$ under the Cauchy principal value is purely imaginary and, therefore, will disappear after taking the real part in~\eqref{eq:resolvent_imparts_second_moment_sum}. Next, let us combine~\eqref{eq:rimsms_t1} with the second term from~\eqref{eq:rimsms_t2}
\begin{equation}
    \begin{split}
        \frac{1}{2}\Re\Big\{&\frac{r^{-1}(\lambda_1)\overline{r^{-1}(\lambda_2)}\big(v(\lambda_1)-\overline{v(\lambda_2)}\big)-r^{-1}(\lambda_1)r^{-1}(\lambda_2)\big(v(\lambda_1)-v(\lambda_2)\big)}{\lambda_1-\lambda_2}\Big\}\\
        &=\frac{\Im\big\{r^{-1}(\lambda_2)\big\}\Im\big\{r^{-1}(\lambda_1)v(\lambda_1)\big\}-\Im\big\{r^{-1}(\lambda_1)\big\}\Im\big\{r^{-1}(\lambda_2)v(\lambda_2)\big\}}{\lambda_1-\lambda_2}\\
        &=\frac{\Im\big\{u(\lambda_2)\big\}\Im\big\{r^{-1}(\lambda_1)\big\}-\Im\big\{u(\lambda_1)\big\}\Im\big\{r^{-1}(\lambda_2)\big\}}{\lambda_1-\lambda_2},
    \end{split}
\end{equation}
where in the last line we have used the relation $r^{-1}v=\sum_l c_l^2 - u$. Finally, we combine all the terms into the learning measure second moment  
\begin{equation}
    \begin{split}
        \frac{\rho^{(2)}(d\lambda_1,d\lambda_2)}{d\lambda_1d\lambda_2} &= \frac{|r^{-1}(\lambda_1)|^2}{\pi\lambda_1^2} \Im\{v(\lambda_1)\}\delta(\lambda_1-\lambda_2) \\
        &+ \frac{1}{\pi^2\lambda_1\lambda_2}\frac{\Im\big\{u(\lambda_2)\big\}\Im\big\{r^{-1}(\lambda_1)\big\}-\Im\big\{u(\lambda_1)\big\}\Im\big\{r^{-1}(\lambda_2)\big\}}{\lambda_1-\lambda_2}.
    \end{split}
\end{equation}
Thus, we have derived the expressions~\eqref{eq:Wishart_learning_measures_first_moment_and_noise} and~\eqref{eq:Wishart_learning_measure_second_moment} of the main paper.

\subsection{Power-law ansatz}
The analysis of the loss functional given by~\eqref{eq:Wishart_learning_measures_first_moment_and_noise} and~\eqref{eq:Wishart_learning_measure_second_moment} is much more tractable for continuous approximation of our basic power distributions~\eqref{eq:basic_powerlaws} 
\begin{equation}\label{eq:continuous_population_spectrum_approx}
    \sum_l \delta_{\lambda_l}(d\lambda) \to \frac{1}{\nu}\lambda^{-\frac{1}{\nu}-1}d\lambda=\mu_\lambda(d\lambda), \qquad \sum_l c_l^2\delta_{\lambda_l}(d\lambda) \to\frac{1}{\nu}\lambda^{\frac{\kappa}{\nu}-1}d\lambda=\mu_c(d\lambda).
\end{equation}
In particular, the fixed-point equation~\eqref{eq:stieltjes_selfconst} for Stieltjes transform $r(z)$ becomes   
\begin{equation}\label{eq:stieltjes_selfconst_cont}
    z = -r^{-1}(z)+\frac{1}{N}\int_0^1 \frac{r^{-1}(z)\tfrac{1}{\nu}\lambda^{-\frac{1}{\nu}}d\lambda}{\lambda+r^{-1}(z)}.
\end{equation}
We will encounter many integrals similar to the one above, with the general form 
\begin{equation}
    F_a(x)\equiv\int_0^1 \frac{\lambda^a d\lambda}{\lambda+x}, \quad a>-1,\quad x\notin [-1,0].
\end{equation}
Such integral can be expressed in terms of Hypergeometric function $\hyperg$. This can be immediately used to get asymptotic $x\to0$ expansion, which will be very useful later
\begin{equation}\label{eq:power_law_integral}
    F_a(x) = \frac{\hyperg\big(1,1+a,2+a,-\tfrac{1}{x}\big)}{(1+a)x} = -\frac{\pi}{\sin(\pi a)} x^{a} + \frac{1}{a} + \frac{x}{1-a}+O(x^2).
\end{equation}
For this asymptotic expansion, we assume that $x$ has a cut along $\RR_-$ and $a\notin \mathbb{Z}$.

Below we describe the essential steps in computing the loss functional for the Wishart model.

\subsubsection{Solving fixed point equation}\label{sec:solving_fixed_point}
In this section, we will analyze asymptotic $N\to\infty$ solutions of fixed-point equation~\eqref{eq:stieltjes_selfconst_cont}. Note that we are interested in the solution of this equation when $z$ approaches the real line from above: 
\begin{equation}
    r(\lambda)=\lim_{\eps\to0_+}r(\lambda+\iu \eps).
\end{equation}

First, let us find values of $\lambda$ when $\Im r(\lambda)=\pi\mu(\lambda)>0$, which corresponds to support of the empirical spectral density $\mu(\lambda)$ of $\Kv$. For this, let us write $r^{-1}(\lambda)=-\tau-\iu\upsilon$, and rewrite~\eqref{eq:stieltjes_selfconst_cont} in the limit $\eps\to0$ as a pair of real equations
\begin{equation}\label{eq:Stieltjes_selfconst_pair_form}
\begin{split}
    \lambda&=\tau + \frac{1}{N} \int\frac{\lambda'(\tau^2+\upsilon^2)-(\lambda')^2\tau}{(\lambda'-\tau)^2+\upsilon^2}\mu_\lambda(d\lambda')\\
    0&=\upsilon-\frac{1}{N}\int\frac{(\lambda')^2 \upsilon}{(\lambda'-\tau)^2+\upsilon^2}\mu_\lambda(d\lambda')\\
    &=\upsilon\left(1-\frac{1}{N}\int\frac{(\lambda')^2}{(\lambda'-\tau)^2+\upsilon^2}\mu_\lambda(d\lambda')\right).
\end{split}
\end{equation}
Now, let us fix value of $r$ corresponding to the point outside of the support of $\mu$, where $\upsilon=0$, and therefore $\tau \notin\operatorname{supp}(\mu_\lambda)=[0,1]$ to ensure convergence of the integral. Since the solution of the fixed point equation for $z\in\mathbb{C}_+$ is unique, there should be no value of $\lambda$ and $\upsilon$ satisfying equations~\eqref{eq:Stieltjes_selfconst_pair_form}. But since $\lambda$ can be defined for any value of $\tau,\upsilon$, the second equation should have no solutions with $\upsilon\ne0$. Due to the monotonicity of the expression in the brackets, this gives a necessary condition for $\tau$ corresponding to the point outside the support:
\begin{equation}\label{eq:support_condition}
    \frac{1}{N}\int\frac{(\lambda')^2}{(\lambda'-\tau)^2}\mu_\lambda(d\lambda') < 1.
\end{equation}
Additionally, it is easy to see that for the values of $\tau$ not satisfying the inequality above, there is a solution of the second equation in~\eqref{eq:Stieltjes_selfconst_pair_form} with $\upsilon<0$, which induces the solution of the first equation with some $\lambda$, meaning that the triple $(\lambda,\tau,\upsilon)$ corresponds the point inside of the support of $\mu$. 

The argument above fully characterizes the support of $\mu$: there two support edges $\lambda_{-},\lambda_{+}$ with the respective values $\tau_{-}<0,\tau_{+}>1$ given by the equality version of~\eqref{eq:support_condition}. The right edge $\tau_+$ should be at a distance $\sim N^{-1}$ from $\lambda=1$, where we have   
\begin{equation}
\begin{split}
    \frac{1}{N}\int\frac{(\lambda')^2}{(\lambda'-\tau_+)^2}\mu_\lambda(d\lambda')&=\frac{1+o(1)}{N}\int_0^1\frac{\mu_\lambda(1)}{(\lambda'-\tau_+)^2}d\lambda' \\
    &= \frac{\mu_\lambda(1)}{N} \Big(\frac{1}{\tau_+-1}-\frac{1}{\tau_+}\Big)(1+o(1))=1.
\end{split}
\end{equation}
From the calculation above and the first equation in~\eqref{eq:Stieltjes_selfconst_pair_form}, the right edge is given by 
\begin{equation}
\begin{split}
    \tau_+ &= 1+ \frac{\mu_\lambda(1)}{N} (1+o(1))\\
    \lambda_+ &=\tau_+ - \frac{\mu_\lambda(1)}{N}\log\big(\tau_+-1\big)(1+o(1)) = 1+\mu_\lambda(1)\frac{\log N}{N}(1+o(1)).
\end{split}
\end{equation}

Turning to the left edge of the support, we note that it has the order $\tau_- \sim N^{-\nu}$. Thus, we can use asymptotic expansion of Hypergeometric function~\eqref{eq:power_law_integral} with $a=-\tfrac{1}{\nu}$. leading to asymptotic form of fixed point equation
\begin{equation}\label{eq:Stieltjes_selfconst_limitform}
    z=-r^{-1}(z) + \frac{C_\nu}{N} r^{-1+\frac{1}{\nu}}(z), \quad C_\nu=\frac{\pi/\nu}{\sin(\pi/\nu)},
\end{equation}
where we, for simplicity, do not write $(1+o(1))$ correction factors in the asymptotic.

Then, for the left edge of the support we have 
\begin{equation}
    1 = -\frac{1}{N}\frac{1}{\nu}\frac{\partial F_{-\frac{1}{\nu}}(-\tau_{-}-\iu\upsilon)}{\iu \partial \upsilon}\bigg|_{\upsilon=0} = -\frac{C_\nu}{N} \frac{\partial (-\tau_{-}-\iu\upsilon)^{1-\frac{1}{\nu}}}{\iu \partial \upsilon}\bigg|_{\upsilon=0} = \frac{C_\nu}{N} \frac{\nu-1}{\nu} (-\tau_{-})^{-\frac{1}{\nu}},
\end{equation}
which gives the respective left edge values 
\begin{equation}
\begin{split}
    \tau_- &= -\left(\tfrac{\nu-1}{\nu}C_\nu\right)^{\nu} N^{-\nu}, \\
    \lambda_{-} &= \frac{1}{\nu-1}(-\tau_-)= \frac{(\nu-1)^{\nu-1}\big(C_\nu\big)^\nu}{\nu^\nu} N^{-\nu}.
\end{split}
\end{equation}

Now, let us give more explicit solutions of fixed-point equation in different scenarios.

\paragraph{To the left of the support ($\lambda<\lambda_{-}$).} In this region the solution of fixed point equation is real, and will in fact has a lot of parallels with KRR applied to the Wishart model. Thus, we translate $\lambda$ and $r(\lambda)$ to their KRR notations: $\eta=-\lambda$ is the KRR regularization and $\etaeff=r^{-1}$ is the effective implicit regularization, appearing in~\eqref{eq:omni_estimate}. In these notations, the asymptotic form of fixed point equation becomes
\begin{equation}\label{eq:effreg_selfconst}
    1=\frac{\eta}{\etaeff}+\frac{C_\nu}{N}\etaeff^{-\frac{1}{\nu}}.
\end{equation}
When $\etaeff$ has the scaling $s\geq\nu$, we can write $\etaeff=\etaeffs N^{-\nu}$ and $\eta=\widetilde{\eta}N^{-\nu}$, which satisfies $N$-independent equation
\begin{equation}
    1=\frac{\widetilde{\eta}}{\etaeffs}+C_\nu\etaeffs^{-\frac{1}{\nu}}.
\end{equation}
The equation above gives a nontrivial relations between $\widetilde{\eta}$ and $\etaeffs$. However, when $N^{-\nu}\ll\etaeff\ll1$, or in other words it has the scaling $0<s<\nu$, the relation simplify to an explicit one:
\begin{equation}
    \etaeff=\eta+\frac{C_\nu}{N}\eta^{1-\frac{1}{\nu}}.
\end{equation}

\paragraph{Inside the support ($\lambda_-<\lambda<\lambda_+$).} In this region it is convenient to write $r(\lambda)=r_0(\lambda)e^{i\phi(\lambda)}$, with the phase taking values in the upper half-circle: $0<\phi<\pi$. Substituting $r_0(\lambda)e^{i\phi(\lambda)}$ into the limit form~\eqref{eq:Stieltjes_selfconst_limitform} of fixed point equation we get a pair of real equations
\begin{align}
    \label{eq:Stieltjes_selfconst_insidesupp_real}
    \lambda &= r_0^{-1}\left(-\cos\phi+\frac{C_{\nu}}{N}(r_0)^{\tfrac{1}{\nu}}\cos\big((1-\tfrac{1}{\nu})\phi\big)\right)\\
    0 &= r_0^{-1}\sin \phi - \frac{C_\nu}{N}r_0^{-1+\frac{1}{\nu}}\sin\big((1-\tfrac{1}{\nu})\phi\big).
\end{align}
Let us rewrite the second equation here using the left edge value $r_{-}=(-\tau_{-})^{-1}$
\begin{equation}\label{eq:Stieltjes_selfconst_insidesupp_imag}
    r_0^{-1} = r_{-}^{-1} \left(\frac{(1-\tfrac{1}{\nu})\sin \phi}{\sin\big((1-\tfrac{1}{\nu})\phi\big)}\right)^{-\nu}=N^{-\nu}\left(\frac{\sin \phi}{C_\nu \sin\big((1-\tfrac{1}{\nu})\phi\big)}\right)^{-\nu}.
\end{equation}
Thus, we see that the solution of fixed point inside the support is mostly conveniently described by using the phase $\phi$ as a free variable, and then specify the rest of the variables by the mappings $\phi\mapsto r_0$ and $(r_0,\phi)\mapsto\lambda$ given by equations~\eqref{eq:Stieltjes_selfconst_insidesupp_imag} and~\eqref{eq:Stieltjes_selfconst_insidesupp_real} respectively. In this representation, $\phi=0$ corresponds to the left edge $\lambda_{-}$ of the support, while $\phi\to\pi$ corresponds to the right edge of the support.

As for outside of the support case, we see that on the scale $N^{-\nu}$ the fixed-point equation admits $N$-independent form. Specifically, set $\lambda=\widetilde{\lambda}N^{-\nu}$ and $r_0=\widetilde{r}_0 N^{\nu}$. Then, the triplet $\widetilde{r}_0,\phi,\widetilde{\lambda}$ satisfies
\begin{equation}
    \begin{split}
        \widetilde{\lambda} &= -(\widetilde{r}_0)^{-1}\cos\phi+C_{\nu}(\widetilde{r}_0)^{-1+\tfrac{1}{\nu}}\cos\big((1-\tfrac{1}{\nu})\phi\big), \\
        \widetilde{r}_0^{-1} &= \left(\frac{\sin \phi}{C_\nu\sin\big((1-\tfrac{1}{\nu})\phi\big)}\right)^{-\nu}.
    \end{split}
\end{equation}

Finally, we turn to the values $N^{-\nu}\ll\lambda\ll1$, corresponding to the scaling $0<s<\nu$. In this case, the pair of equations~\eqref{eq:Stieltjes_selfconst_insidesupp_real}, \eqref{eq:Stieltjes_selfconst_insidesupp_imag} becomes
\begin{equation}
    \begin{split}
        \lambda&=r_0^{-1}-\frac{C_\nu}{N}r_0^{-1+\frac{1}{\nu}}\cos(\tfrac{\pi}{\nu})=\left(\frac{\pi/\nu}{N(\pi-\phi)}\right)^{\nu}\left(1+(\pi-\phi)\cot(\tfrac{\pi}{\nu})\right), \\
        r_0^{-1}&=\left(\frac{\pi/\nu}{N(\pi-\phi)}\right)^{\nu}.
    \end{split}
\end{equation}
Noting that in the leading asymptotic order $r_0=\lambda^{-1}$, the second equation can be rewritten as 
\begin{equation}
    \pi-\phi= \frac{\lambda^{-\frac{1}{\nu}}}{N}\frac{\pi}{\nu}.
\end{equation}
We can summarize the above equations in a single complex equation
\begin{equation}\label{eq:Stieltjes_intermdeiate_scales_complexform}
    r(\lambda) = -\lambda^{-1} \left(1-C_\nu\cos(\pi/\nu)\frac{\lambda^{-\frac{1}{\nu}}}{N}\right) +\iu \frac{\pi}{\nu}\frac{\lambda^{-1-\frac{1}{\nu}}}{N}.
\end{equation}
In particular, taking the imaginary part of Stieltjes transform above gives $\Im r(\lambda) = \frac{1}{N} \pi \mu_\lambda(\lambda)$. This implies that for $N^{-\nu}\ll\lambda\ll1$ the (normalized) empirical eigenvalue density $N\mu(\lambda)$ coincides with population density $\mu_\lambda(\lambda)$ as expected.

\subsubsection{Learning measures}
In this section, we specify the form of empirical learning  measures $\rho^{(2)}(d\lambda_1,d\lambda_2)$, $\rho^{(1)}(d\lambda)$ and $\rho^{(\eps)}(d\lambda)$, derived in~\eqref{eq:Wishart_learning_measure_first_moment},\eqref{eq:Wishart_noise_learning_measure_moment} and~\eqref{eq:Wishart_learning_measure_second_moment} respectively, for the case of power-law distributions~\eqref{eq:continuous_population_spectrum_approx}. Note that all three learning measures  are expressed in terms of $r(\lambda)$ and imaginary parts of 3 auxiliary functions $v(\lambda),u(\lambda),w(\lambda)$, whose asymptotic form we will now analyze. In the continuous approximation~\eqref{eq:continuous_population_spectrum_approx} these functions are given by
\begin{equation}
    v(\lambda) = \tfrac{1}{\nu}\int_0^1 \frac{(\lambda')^{\frac{\kappa}{\nu}-1}d\lambda'}{\lambda' + r^{-1}(\lambda)}, \quad u(\lambda) = \tfrac{1}{\nu}\int_0^1 \frac{(\lambda')^{\frac{\kappa}{\nu}}d\lambda'}{\lambda' + r^{-1}(\lambda)}, \quad w(\lambda) = \tfrac{1}{\nu}\int_0^1 \frac{(\lambda')^{1-\frac{1}{\nu}}d\lambda'}{\lambda' + r^{-1}(\lambda)}.
\end{equation}
Since all of these functions have the same functional form $\int \frac{(\lambda')^{a}d\lambda'}{\lambda'+r^{-1}(\lambda)}$, let us write an asymptotic expansion of the imaginary part of this integral, using~\eqref{eq:power_law_integral} as a basis. First, we consider $\lambda \ll 1$ and use the leading term of asymptotic expansion~\eqref{eq:power_law_integral} together with asymptotic formulas~\eqref{eq:Stieltjes_selfconst_insidesupp_real} and~\eqref{eq:Stieltjes_selfconst_insidesupp_imag}.
\begin{equation}\label{eq:power_law_r_integral_leftedge}
\begin{split}
    \Im \int_0^1 \frac{(\lambda')^{a}d\lambda'}{\lambda'+r^{-1}(\lambda)} &= \Gamma(1+a)\Gamma(-a)\Im \big\{r_0^{-a} e^{-\iu a\phi}\big\}\\
    &=\pi \frac{\sin(a\phi)}{\sin(a \pi)} \left(\frac{N\sin \phi}{C_\nu\sin\big((1-\tfrac{1}{\nu})\phi\big)}\right)^{-a\nu}.
\end{split}
\end{equation}
Importantly, away from left edge $\lambda\gg\lambda_{-}$, or equivalently $\pi-\phi \ll 1$, the above expression significantly simplifies
\begin{equation}
    \begin{split}
        \Im \int_0^1 \frac{(\lambda')^{a}d\lambda'}{\lambda'+r^{-1}(\lambda)} &= \pi \left(\frac{N(\pi-\phi)}{\pi/\nu}\right)^{-a\nu} \Big(1+O(\pi-\phi)\Big)\\
        &=\pi \lambda^a \Big(1+O\big(\tfrac{\lambda^{-\frac{1}{\nu}}}{N}\big)\Big).
    \end{split}
\end{equation}
The the leading term above is expected: since $\Im r^{-1}(\lambda)\ll \lambda$ for $\lambda\gg\lambda_{-}$, the fraction can be approximated with Sokhotski formula $\Im \frac{1}{\lambda' + r^{-1}(\lambda)}\approx \pi\delta_\lambda(\lambda')$. However, our approach with asymptotic expansion of Hypergeometric function in~\eqref{eq:power_law_integral} also provides an estimation of correction term to the Sokhotski formula, which would be difficult to obtain directly. 

Let us now take into account the omitted subleading terms in the asymptotic expansion~\eqref{eq:power_law_integral}. These terms make a regular power series in $x$, and therefore their imaginary part can be estimated $O(\Im x)=O(r^{-1}(\lambda))=O\big(\tfrac{\lambda^{1-\frac{1}{\nu}}}{N}\big)$. Thus, we can summarize our computation for $\lambda_{-}\ll\lambda\ll1$ as 
\begin{equation}\label{eq:power_law_r_integral_awayfromedge}
    \Im \int_0^1 \frac{(\lambda')^{a}d\lambda'}{\lambda'+r^{-1}(\lambda)} = \pi \lambda^a +O\big(\tfrac{\lambda^{a-\frac{1}{\nu}}}{N}\big)+O\big(\tfrac{\lambda^{1-\frac{1}{\nu}}}{N}\big).
\end{equation}
Note that while the asymptotic form above is mostly meaningful for $\lambda\gg\lambda_{-}$ when the corrections as small, we can formally use at for $\lambda \sim \lambda_{-}$ but the corrections become of the same order as the leading terms.

The values of functions $\Im v(\lambda)$, $\Im u(\lambda)$, $\Im w(\lambda)$, can be obtained by using either~\eqref{eq:power_law_r_integral_leftedge} or~\eqref{eq:power_law_r_integral_awayfromedge} depending on the scale of $\lambda$. In particular, when $\lambda_{-}\ll\lambda\ll 1$ we have

\begin{equation}\label{eq:Wishart_vuw_functions_intermidiate_scales}
\begin{split}
    \Im v(\lambda)&=\pi\mu_c(\lambda) + O\Big(\frac{\lambda^{1\wedge(\frac{\kappa}{\nu}-1)-\frac{1}{\nu}}}{N}\Big), \\
    \Im u(\lambda)&=\pi\lambda\mu_c(\lambda) + O\Big(\frac{\lambda^{1\wedge\frac{\kappa}{\nu}-\frac{1}{\nu}}}{N}\Big), \\
    \Im w(\lambda)&=\pi\lambda^2\mu_\lambda(\lambda) + O\Big(\frac{\lambda^{1-\frac{2}{\nu}}}{N}\Big).
\end{split} 
\end{equation}

As an important application of the expressions above, let us estimate the scale of the off-diagonal part $\rho^{(2)}_{\mathrm{off}}(\lambda_1,\lambda_2)$ of the learning measure $\rho^{(2)}(\lambda_1,\lambda_2)$ given in~\eqref{eq:Wishart_learning_measure_second_moment}. Using~\eqref{eq:Wishart_vuw_functions_intermidiate_scales} and~\eqref{eq:Stieltjes_intermdeiate_scales_complexform} we get
\begin{equation}
     \rho^{(2)}_{\mathrm{off}}(\lambda_1,\lambda_2) = \frac{\lambda_2^{1-\frac{1}{\nu}}\lambda_1^{\frac{\kappa}{\nu}}\big(1+O(\tfrac{\lambda_2^{-\frac{1}{\nu}}}{N})+O(\tfrac{\lambda_1^{-\frac{1}{\nu}}}{N})\big)-\lambda_1^{1-\frac{1}{\nu}}\lambda_2^{\frac{\kappa}{\nu}}\big(1+O(\tfrac{\lambda_2^{-\frac{1}{\nu}}}{N})+O(\tfrac{\lambda_1^{-\frac{1}{\nu}}}{N})\big)}{\pi^2\lambda_1\lambda_2N(\lambda_1-\lambda_2)}.
\end{equation}
Now we will estimate the scale $S[\rho^{(2)}_{\mathrm{off}}]$ of $\rho^{(2)}_{\mathrm{off}}(\lambda_1,\lambda_2)$ assuming that $\lambda_1$ and $\lambda_2$ have the scales $s_1$ and $s_2$ respectively. First, assume that $\lambda_1-\lambda_2$ has the same scale as their maximum $\lambda_1\vee\lambda_2$. Then, the corrections to the leading terms in the numerator can be neglected, and the scales of both denominator and numerator are given by minimal scale of two subtracted terms. Specifically,
\begin{equation}\label{eq:Wishart_offdiag_part_scale}
\begin{split}
        S[\rho^{(2)}_{\mathrm{off}}](s_1,s_2) &= \left((\tfrac{\nu-1}{\nu}s_1+\tfrac{\kappa}{\nu}s_2)\wedge(\tfrac{\kappa}{\nu}s_1+\tfrac{\nu-1}{\nu}s_2)\right) - \left(s_1+s_2-1+s_1\wedge s_2\right)\\
        &\geq1-s_1-s_2.
\end{split}
\end{equation}
Yet, the scale derived above may not be valid when $\lambda_1$ and $\lambda_2$ are to close to each other. This does not happen, which can be seen, for example, by writing the difference in the enumerator of $\rho^{(2)}_{\mathrm{off}}(\lambda_1,\lambda_2)$ and observing that the whole expression behaves regularly as $\lambda_1\to\lambda_2$. Then, similar argumentation together with~\eqref{eq:power_law_r_integral_leftedge} shows that the scale derived in~\eqref{eq:Wishart_learning_measure_first_moment} holds when both $s_1=s_2=\nu$. Thus, for all $\lambda_1,\lambda_2$ inside the support of $\mu$ we have $S[\rho^{(2)}_{\mathrm{off}}](s_1,s_2)\geq1-s_1-s_2$.

Finally, we estimate the contribution of off-diagonal part of the second moment to the loss $L_\mathrm{off}[h]=\tfrac{1}{2}\int h(\lambda_1)h(\lambda_2)\rho^{(2)}_{\mathrm{off}}(d\lambda_1,d\lambda_2)$. For that, we can use a two-dimensional analog of proposition~\ref{prop:scaling1}, which can be proven similarly.
\begin{proposition} Let $g_N(\lambda_1,\lambda_2)$ be a sequence of functions with a scaling profile $S^{(g)}(s_1,s_2)$, and let $a_N>0$ be a sequence of scale $a>0$. Then, the sequence of integrals $\int_{a_N}^1\int_{a_N}^1|g_N(\lambda_1,\lambda_2)|d\lambda_1d\lambda_2$ has the scale 
\begin{equation}
    S\Big[\int_{a_N}^1\int_{a_N}^1|g_N(\lambda_1,\lambda_2)|d\lambda_1d\lambda_2\Big]=\min_{0\le s_1,s_2\le a}\big(S^{(g)}(s_1,s_2)+s_1+s_2\big).
\end{equation}
\end{proposition}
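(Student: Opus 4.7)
The plan is to mirror the proof of Proposition~\ref{prop:scaling1} and carry out the same two-sided scale comparison in two dimensions. As a preliminary, I would record a two-dimensional analog of Lemma~\ref{lemma:scalingcont}: the scaling profile $S^{(g)}(s_1,s_2)$, if it exists, is jointly continuous on $[0,\infty)^2$. The argument is the same diagonal-subsequence construction as in the 1D case, applied to pairs of eigenvalue sequences rather than single ones; the only extra step is simultaneously controlling two coordinates, which poses no new difficulty.

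For the upper bound, showing $\int_{a_N}^1\int_{a_N}^1 |g_N(\lambda_1,\lambda_2)| d\lambda_1 d\lambda_2 = o(N^{-s_*+\epsilon})$, I would argue by contradiction: assume there is a subsequence $N_m$ along which the integral exceeds $N_m^{-s_*+\epsilon}$. Partition $[0,a]^2$ into finitely many cells $I_r\times I_{r'}=[b_r,b_{r+1}]\times[b_{r'},b_{r'+1}]$ of side less than $\epsilon/2$. On each cell, let $(\lambda_{m,r,r'}^{(1)},\lambda_{m,r,r'}^{(2)})$ be the pair maximizing $|g_{N_m}|$ subject to $-\log_{N_m}\lambda_j$ lying in the respective subinterval. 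By compactness of $I_r\times I_{r'}$, extract a joint limit point $(s_{r,r'}^{*,1},s_{r,r'}^{*,2})$, pass to a subsequence to make it unique, and complete to full sequences of the prescribed scale. The scaling-profile hypothesis then gives $|g_{N_m}(\lambda_{m,r,r'}^{(1)},\lambda_{m,r,r'}^{(2)})|=o(N_m^{-S^{(g)}(s_{r,r'}^{*,1},s_{r,r'}^{*,2})+\epsilon/2})$. Bounding the integral on each cell by this value times $N_m^{-b_r-b_{r'}}$ (the cell area in eigenvalue coordinates) and summing the finitely many cells yields $o(N_m^{-s_*+\epsilon})$, contradicting the assumption.

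For the lower bound, $\int_{a_N}^1\int_{a_N}^1|g_N(\lambda_1,\lambda_2)|d\lambda_1 d\lambda_2=\omega(N^{-s_*-\epsilon})$, I would fix a minimizer $(q_{1*},q_{2*})\in\argmin_{[0,a]^2}(S^{(g)}(s_1,s_2)+s_1+s_2)$ and use the joint continuity of $S^{(g)}$ to obtain a small square $[q_{1*}-\delta,q_{1*}+\delta]\times[q_{2*}-\delta,q_{2*}+\delta]$ on which $S^{(g)}(s_1,s_2)<S^{(g)}(q_{1*},q_{2*})+\epsilon/2$. The scaling-profile hypothesis, applied as in Part~2 of the 1D proof, forces $\min|g_N|=\omega(N^{-S^{(g)}(q_{1*},q_{2*})-\epsilon})$ on the corresponding eigenvalue rectangle, whose area is of order $N^{-q_{1*}-q_{2*}}$ up to constants. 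Multiplying these two scales produces the desired $\omega(N^{-s_*-\epsilon})$ lower bound.

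The main obstacle I anticipate is purely bookkeeping: one must be careful in the subsequence extractions to ensure \emph{joint} convergence of pairs $(s_{r,r'}^{*,1},s_{r,r'}^{*,2})$ rather than only coordinatewise, and to construct completion sequences $(\lambda_N^{(1)},\lambda_N^{(2)})$ realizing a prescribed joint scale (independently on each coordinate, this is straightforward). Everything else — the finite partition, the localization of the integral near the minimizer, and the final scale multiplication — is a direct two-dimensional transcription of the argument in Section~\ref{sec:scaling}.
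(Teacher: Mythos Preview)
Your proposal is correct and is precisely the approach the paper has in mind: the paper does not actually write out a proof of this two-dimensional statement, it merely remarks that it ``can be proven similarly'' to Proposition~\ref{prop:scaling1}, and your outline is exactly that similar proof. The partition of $[0,a]^2$ into cells of side $<\epsilon/2$, the cellwise maximization and subsequence extraction for the upper bound, and the localization near a minimizer using joint continuity of $S^{(g)}$ for the lower bound are the natural 2D transcriptions of Parts~1 and~2 in Section~\ref{sec:scaling}, and the bookkeeping issues you flag (joint limit extraction, constructing pairs of sequences of prescribed scales) are indeed the only new wrinkles and are handled just as you describe.
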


Applying this statement to $L_\mathrm{off}[h]=\tfrac{1}{2}\int h(\lambda_1)h(\lambda_2)\rho^{(2)}_{\mathrm{off}}(d\lambda_1,d\lambda_2)$, we get
\begin{equation}
    S\left[L_\mathrm{off}\right] \geq \underset{0\leq s\leq\nu}{\min} \left[1-s_1-s_2+S^{(h)}(s_1)+S^{(h)}(s_2)+s_1+s_2\right] \geq 1.
\end{equation}

In other words, we have shown that
\begin{equation}\label{eq:Wishart_offdiag_part_estimate}
    \int h(\lambda_1)h(\lambda_2)\rho^{(2)}_{\mathrm{off}}(d\lambda_1,d\lambda_2) = O(N^{-1}).
\end{equation}
Actually, the estimation above is tight, which can be shown, for example, by taking $h(\lambda)=1$ computing the leading order term above, which will be exactly $\sim N^{-1}$. Note the presence of $N^{-1}$ term in the loss, regardless of the value of $\kappa,\nu$ is unnatural for our problem because we expect the rate of the optimal algorithm to be $N^{-\kappa\wedge2\nu}$, which can be smaller than $N^{-1}$. The reason for this unnatural term can be traced back to our calculation of the resolvent second moment in~\eqref{eq:resolvent_second_moment_different_indices} and~\eqref{eq:resolvent_second_moment_same_indices}, where in the application of the Wick theorem we took into account only the pairings producing leading order terms in $N$. Thus, we might expect that taking into account subleading order pairings in Wick theorem would lift the~\eqref{eq:Wishart_offdiag_part_estimate} from $O(N^{-1})$ to $O(N^{-2})$. What would be the role of the off-diagonal $\rho^{(2)}_{\mathrm{off}}$ if we took into account all pairings (i.e. performed non-perturbative computation) is not clear at the moment and is an interesting direction for future work.

\subsubsection{Noisy observations}\label{sec:Wishart_noisy}
For noisy observations, our goal is to show the equivalence between full loss functional and the NMNO functional (Theorem~\ref{th:equiv}). In terms of the population densities $\mu_\lambda(\lambda)$ and $\mu_c(\lambda)$, the NMNO is written as
\begin{equation}
    L^{(\mathrm{nmno})}[h]=\frac{1}{2}\int\limits_{N^{-\nu}}^1 \left[\frac{\sigma^2}{N}h^2(\lambda) \mu_\lambda(\lambda)+\big(1-h(\lambda)\big)^2 \mu_c(\lambda)\right]d\lambda. 
\end{equation}
Let us decompose the difference between two functionals as
\begin{equation}
\begin{split}
    L[h]-L^{(\mathrm{nmno})}[h] = \frac{1}{2}\int\limits_{\lambda_{-}}^1 \bigg[&\frac{\sigma^2}{N}\Big(\frac{\Im w(\lambda)}{\pi\lambda^2}-\mu_\lambda(\lambda)\Big)h^2(\lambda) \\
    &+ \Big(\frac{|r^{-1}(\lambda)|^2\Im v(\lambda)}{\pi\lambda^2}-\mu_c(\lambda)\Big)h^2(\lambda) \\
    &- 2 \Big(\frac{\Im u(\lambda)}{\pi\lambda}-\mu_c(\lambda)\Big)h(\lambda)\bigg]d\lambda\\
    +\frac{1}{2}\int\limits_{\lambda_{-}}^{N^{-\nu}}&\bigg[\frac{\sigma^2}{N}h^2(\lambda) \mu_\lambda(\lambda)+\big(1-h(\lambda)\big)^2 \mu_c(\lambda)\bigg]d\lambda + O(N^{-1}).
\end{split}
\end{equation}
Now, similarly to the translation-invariant model on a circle, we write down the scales of all terms in the difference between two functionals, assuming that $\lambda$ has the scale $s$. For that, we use asymptotic expressions~\eqref{eq:Wishart_vuw_functions_intermidiate_scales} for functions $v,u,w$, and also expression~\eqref{eq:Stieltjes_intermdeiate_scales_complexform} for $r^{-1}(\lambda)$. 

\begin{align}
    \label{eq:Wishart_corr_scale_1}
    S\left[\frac{\sigma^2}{N}\Big(\frac{\Im w(\lambda)}{\pi\lambda^2}-\mu_\lambda(\lambda)\Big)h^2(\lambda)\right]&\geq1-s-\tfrac{s}{\nu}+(1-\tfrac{s}{\nu})+2S^{(h)}(s), \\
    \label{eq:Wishart_corr_scale_2}
    S\left[\Big(\frac{|r^{-1}(\lambda)|^2\Im v(\lambda)}{\pi\lambda^2}-\mu_c(\lambda)\Big)h^2(\lambda)\right]&\geq\tfrac{\kappa}{\nu}s-s +(1-\tfrac{s}{\nu})+0\wedge(\tfrac{2\nu-\kappa}{\nu}s) + 2S^{(h)}(s), \\
    \label{eq:Wishart_corr_scale_3}
    S\left[\Big(\frac{\Im u(\lambda)}{\pi\lambda}-\mu_c(\lambda)\Big)h(\lambda)\right]&\geq\tfrac{\kappa}{\nu}s-s+(1-\tfrac{s}{\nu})+0\wedge(\tfrac{\nu-\kappa}{\nu}s)+S^{(h)}(s), \\
    \label{eq:Wishart_NMNO_noise_scale}
    S\left[\frac{\sigma^2}{N}h^2(\lambda) \mu_\lambda(\lambda)\right]&=1-s-\tfrac{s}{\nu}+2S^{(h)}(s), \\
    \label{eq:Wishart_NMNO_signal_scale}
    S\left[\big(1-h(\lambda)\big)^2 \mu_c(\lambda)\right]&=\tfrac{\kappa}{\nu}s-s + 2S^{(1-h)}(s).
\end{align}
For the last two terms, we do it on the level of the whole integral, which is taken over a single scale $s=\nu$.
\begin{align}
    \label{eq:Wishart_corr_scale_4}
    S\left[\int\limits_{\lambda_{-}}^{N^{-\nu}}\frac{\sigma^2}{N}h^2(\lambda) \mu_\lambda(\lambda)d\lambda\right]&=2S^{(h)}(\nu), \\
    \label{eq:Wishart_corr_scale_5}
    S\left[\int\limits_{\lambda_{-}}^{N^{-\nu}}\big(1-h(\lambda)\big)^2 \mu_c(\lambda)d\lambda\right]&=\kappa+2S^{(1-h)}(\nu).
\end{align}
Using the scales derived above, we need to obtain the conditions on the learning algorithm scales $S^{(1-h)},S^{(h)}$ for which the scale of all corrections to the loss is greater than the scale of NMNO loss given, as usual, by
\begin{equation}
S\left[ L^{(\mathrm{nmno})}[h]\right] = \underset{0\leq s\leq\nu}{\min}\left[\big(\tfrac{\kappa}{\nu}s+2S^{(1-h)}\big)\wedge\big(1-\tfrac{s}{\nu}+2S^{(1-h)}(s)\big)\right] \leq \frac{\kappa}{\kappa+1}.
\end{equation}
Again, this means that we can neglect all corrections whose scale (after integration with $d\lambda$) is greater than $\tfrac{\kappa}{\kappa+1}$. In particular, we can neglect $O(N^{-1})$ correction coming from off-diagonal part of the second moment~\eqref{eq:Wishart_offdiag_part_estimate}. 

First, note that the terms~\eqref{eq:Wishart_corr_scale_4} and~\eqref{eq:Wishart_corr_scale_5} exactly equal the contribution of noise~\eqref{eq:Wishart_NMNO_noise_scale} and signal~\eqref{eq:Wishart_NMNO_signal_scale} parts at scale $s=\nu$. Thus, if these terms are to be neglected, $s=\nu$ should not be a localization scale of $L^{(\mathrm{nmno})}[h]$.

Second, observe that the right parts of $0\wedge(\frac{2\nu-\kappa}{\nu}s)$ and $0\wedge(\frac{\nu-\kappa}{\nu}s)$ in~\eqref{eq:Wishart_corr_scale_2} and~\eqref{eq:Wishart_corr_scale_3}, when activated, give at most $O(N^{-1})$ contribution to the total loss, and therefore can also be neglected. Then, after choosing the left $0$ option, we see that~\eqref{eq:Wishart_corr_scale_3} is never less than~\eqref{eq:Wishart_corr_scale_2}. Thus, the total contribution of these two terms can be effectively described by the scale $\tfrac{\kappa}{\nu}s-s+(1-\tfrac{s}{\nu})+S^{(h)}(s)$.

Lastly, we can see that~\eqref{eq:Wishart_corr_scale_1} is always larger than corresponding NMNO noise scale~\eqref{eq:Wishart_NMNO_noise_scale}, except for $s=\nu$ where they are equal. But since we already require $s=\nu$ not being a localization scale of $L^{(\mathrm{nmno})}[h]$, the term~\eqref{eq:Wishart_corr_scale_1} can be neglected.

Thus, all requirements in addition to $s=\nu$ not being a localization scale of $L^{(\mathrm{nmno})}[h]$ can be summarized as

\begin{equation}\label{eq:Wishart_NMNO_equivalence_req2}
    \underset{0\leq s\leq\nu}{\min}\left[\tfrac{\kappa}{\nu}s+(1-\tfrac{s}{\nu})+S^{(h)}(s)\right] > \underset{0\leq s\leq\nu}{\min}\left[\big(\tfrac{\kappa}{\nu}s+2S^{(1-h)}\big)\wedge\big(1-\tfrac{s}{\nu}+2S^{(1-h)}(s)\big)\right].
\end{equation}
However, we have 
\begin{equation}
    \underset{0\leq s\leq\nu}{\min}\left[\tfrac{\kappa}{\nu}s+(1-\tfrac{s}{\nu})+S^{(h)}(s)\right] \geq 1\wedge \kappa > \frac{\kappa}{\kappa+1},
\end{equation}
therefore, the requirement~\eqref{eq:Wishart_NMNO_equivalence_req2} is satisfied automatically. Summarizing, the corrections to the NMNO functional are asymptotically vanishing when $s=\nu$ is not a localization scale of $L^{(\mathrm{nmno})}[h]$.

\subsubsection{Noiseless observations}\label{sec:Wishart_noiseless}

Now, we take $\sigma^2=0$ and analyze the respective loss functional. Recall that our current calculations are accurate up to $O(N^{-1})$, as we discussed before. Therefore, we will neglect all the terms that give contributions of the order $O(N^{-1})$, since they are beyond our current level of accuracy. This means several things:
\begin{enumerate}
    \item We ignore the contribution to the loss of the off-diagonal part $\int h(\lambda_1)h(\lambda_2)\rho^{(2)}_{\mathrm{off}}(d\lambda_1,d\lambda_2)$ of the functional.
    \item We ignore the $O(\tfrac{\lambda^{1-\frac{1}{\nu}}}{N})$ correction to $\Im v(\lambda)$ and $\Im u(\lambda)$ in~\eqref{eq:Wishart_vuw_functions_intermidiate_scales}: in the previous section we have shown that they have at most $O(N^{-1})$ contribution to the loss.
    \item We restrict ourselves with the values of signal exponent $\kappa<1$. Since the optimal scaling of the loss in the noiseless case is expected to be $N^{-\kappa}$, we will not be able to capture it for $\kappa\geq1$.
    \item The previous point also implies that we cannot access values $\kappa>2\nu>2$ corresponding to the noiseless saturation phase since eigenvalue exponent values are confined to the physical region $\nu>1$.
\end{enumerate}

With the above remarks taken into account, we start deriving the loss functional and, for compactness, ignore all the $O(N^{-1})$ terms. First, due to the diagonality of the remaining part of the learning measure $\rho^{(2)}(d\lambda_1,d\lambda_2)$, we can immediately specify the optimal algorithm and the respective decomposition of the functional
\begin{align}
\label{eq:Wishart_functional_optcentered}
    L[h]&=\frac{1}{2}\int\limits_{\lambda_{-}}^1 \left[\frac{|r^{-1}(\lambda)|^2\Im v(\lambda)}{\pi\lambda^2} \big(h(\lambda)-h^*(\lambda)\big)^2 + \Big(\mu_c(\lambda)-\frac{\big(\Im u(\lambda)\big)^2}{\pi|r^{-1}(\lambda)|^2\Im v(\lambda)}\Big)\right]d\lambda, \\
    h^*(\lambda) &= \frac{\lambda\Im u(\lambda)}{|r^{-1}(\lambda)|^2\Im v(\lambda)}=1+O(\tfrac{\lambda^{-\frac{1}{\nu}}}{N}),
\end{align}
where we have used asymptotics~\eqref{eq:Wishart_vuw_functions_intermidiate_scales} to estimate the deviation of the optimal algorithm from $h(\lambda)=1$. Again using~\eqref{eq:Wishart_vuw_functions_intermidiate_scales}, we see that the free term in~\eqref{eq:Wishart_functional_optcentered} has the order $O(\tfrac{\lambda^{\frac{\kappa-1}{\nu}-1}}{N})$, which, given $\kappa<1$, always localizes on the maximal scale $s=\nu$. If we additionally assume that the learning algorithms fits higher eigenvalues well enough: $|1-h(\lambda)|=O(\sqrt{\tfrac{\lambda^{-\frac{1}{\nu}}}{N}})$, the first term in~\eqref{eq:Wishart_functional_optcentered} will also always localize on the maximal scale $s=\nu$.

Now, relying on the fact that the loss localizes on $s=\nu$, we can use expressions of all the terms in the functional through the phase $\phi$. 
Eigenvalue $\lambda$ is defined by an explicit form of~\eqref{eq:Stieltjes_selfconst_insidesupp_real}
\begin{equation}\label{eq:Wishart_eigenvalue_from_phase}
    \lambda = N^{-\nu} \left(\frac{\sin\phi}{C_\nu\sin(\phi-\tfrac{\phi}{\nu})}\right)^{-\nu} \sin\phi \big(\cot(\tfrac{\nu-1}{\nu}\phi)-\cot\phi\big).
\end{equation}
Using~\eqref{eq:power_law_r_integral_leftedge} and~\eqref{eq:Stieltjes_selfconst_insidesupp_imag}, the terms of the functional become
\begin{align}
    \frac{|r^{-1}(\lambda)|^2\Im v(\lambda)}{\pi\lambda^2}&=\frac{1}{\nu}\left(\frac{\sin\phi}{C_\nu\sin(\phi-\tfrac{\phi}{\nu})}\right)^{-\kappa+\nu} \frac{\sin(\phi-\tfrac{\kappa}{\nu}\phi)}{\sin(\tfrac{\kappa}{\nu}\pi)\sin^2\phi\big(\cot(\tfrac{\nu-1}{\nu}\phi)-\cot\phi\big)^2}\\
    \mu_c(\lambda) &= \frac{1}{\nu}\left(\frac{\sin\phi}{C_\nu\sin(\phi-\tfrac{\phi}{\nu})}\right)^{-\kappa+\nu}\Big(\sin\phi\big(\cot(\tfrac{\nu-1}{\nu}\phi)-\cot\phi\big)\Big)^{\tfrac{\kappa}{\nu}-1}\\
    \frac{\big(\Im u(\lambda)\big)^2}{\pi|r^{-1}(\lambda)|^2\Im v(\lambda)} &= \frac{1}{\nu}\left(\frac{\sin\phi}{C_\nu\sin(\phi-\tfrac{\phi}{\nu})}\right)^{-\kappa+\nu}\frac{\sin^2(\tfrac{\kappa}{\nu}\phi)}{\sin(\phi-\tfrac{\kappa}{\nu}\phi)\sin(\tfrac{\kappa}{\nu}\pi)},
\end{align}
The optimal algorithm is
\begin{equation}\label{eq:Wishart_opt_alg}
    h^*(\lambda) = \frac{\cot(\tfrac{\nu-1}{\nu}\phi)-\cot\phi}{\cot(\tfrac{\kappa}{\nu}\phi)-\cot\phi}.
\end{equation}
Substituting everything in the loss functional gives 
\begin{equation}\label{eq:Wishart_loss_fnuctional_through_phi}
\begin{split}
    L[h]=&\frac{N^{-\kappa}}{2}\int\limits_{0}^{\pi-\frac{\pi}{\nu N}}\left(\frac{\sin\phi}{C_\nu\sin(\phi-\tfrac{\phi}{\nu})}\right)^{-\kappa+\nu}\Bigg[\frac{\sin(\phi-\tfrac{\kappa}{\nu}\phi)\big(h(\lambda)-h^*(\lambda)\big)^2}{\nu\sin(\tfrac{\kappa}{\nu}\pi)\sin^2\phi\big(\cot(\tfrac{\nu-1}{\nu}\phi)-\cot\phi\big)^2}\\
    &+\frac{1}{\nu}\Big(\sin\phi\big(\cot(\tfrac{\nu-1}{\nu}\phi)-\cot\phi\big)\Big)^{\tfrac{\kappa}{\nu}-1}-\frac{\sin^2(\tfrac{\kappa}{\nu}\phi)}{\nu\sin(\phi-\tfrac{\kappa}{\nu}\phi)\sin(\tfrac{\kappa}{\nu}\pi)}\Bigg]\frac{d(N^\nu\lambda)}{d\phi}d\phi,
\end{split}
\end{equation}

Finally, we note that the $\phi\to\pi$ asymptotic of the expressions under the integral above can be inferred from the respective $\tfrac{\lambda^{-\frac{1}{\nu}}}{N}\to 0$ asymptotic of the original functional, since \\ $\pi-\phi=\tfrac{\pi}{\nu}\tfrac{\lambda^{-\frac{1}{\nu}}}{N}(1+o(\pi-\phi))$. Thus, we can write
\begin{equation}
    L[h]=\frac{N^{-\kappa}}{2}\int\limits_{0}^{\pi-\frac{\pi}{\nu N}}\left[O\Big((\pi-\phi)^{-\kappa+\nu}\Big)\big(h(\lambda_\phi)-h^*(\lambda_\phi)\big)^2+O\Big((\pi-\phi)^{-\kappa+\nu+1}\Big)\right]d\phi.
\end{equation}
From the above expression we see that, if $|h(\lambda_\phi)-h^*(\lambda_\phi)|=O(|h(\lambda_\phi)-1|)=O(\sqrt{\pi-\phi})$, the expression under the integral is integrable at $\phi=\pi$. Therefore, the upper integration limit can be shifted to $\pi$ without changing leading order asymptotic of the loss.

\paragraph{Overlearning transition.} As we discussed above, our analysis of Wishart model in the noiseless case is restricted to the values of target exponent $\kappa<1$ due to $O(N^{-1})$ accuracy of calculations. Thus, if $\nu<2$, we can access overlearning transition point $\kappa=\nu-1<1$ and verify whether the transition holds for the Wishart model. Proceeding similarly to respective Lemma~\ref{lemma:overlearning_circle} for the circle model, we have 

\begin{lemma}
    Consider the optimal algorithm~\eqref{eq:Wishart_opt_alg}: 
    \begin{equation}\label{eq:Wishart_opt_alg_lemma}
    h^*(\lambda) = \frac{\cot(\tfrac{\nu-1}{\nu}\phi)-\cot\phi}{\cot(\tfrac{\kappa}{\nu}\phi)-\cot\phi}
    \end{equation}
with eigenvalue $\lambda=\lambda(\phi), \phi\in(0,\pi)$ parameterized according to~\eqref{eq:Wishart_eigenvalue_from_phase}. Then, assuming $\kappa<1$, for any $\phi\in(0,\pi)$
\begin{equation}
    h^*(\lambda)\begin{cases}
        <1,& \kappa+1<\nu,\\
        =1,&\kappa+1=\nu,\\
        >1,&\kappa+1>\nu.
    \end{cases}
\end{equation}
\end{lemma}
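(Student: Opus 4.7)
The plan is to exploit the fact that the optimal algorithm~\eqref{eq:Wishart_opt_alg_lemma} has the form $h^*(\lambda)=g(\nu-1)/g(\kappa)$, where $g\colon a\mapsto \cot(\tfrac{a}{\nu}\phi)-\cot\phi$ is a single function evaluated at the two exponents. Monotonicity of $g$ in $a$ would then reduce the three sign cases for $h^*(\lambda)-1$ to the three sign cases for $\kappa-(\nu-1)$, which is exactly the statement of the lemma. So the whole argument boils down to: (i) verify that both $g(\nu-1)$ and $g(\kappa)$ have the same sign (so the ratio is well-defined and comparisons with $1$ are meaningful), and (ii) show that $g$ is strictly monotone on the relevant interval.

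For step (i), I would observe that $\nu>1$ and $0<\kappa<1<\nu$ imply $\tfrac{(\nu-1)\phi}{\nu}\in(0,\phi)$ and $\tfrac{\kappa\phi}{\nu}\in(0,\phi)$ for every $\phi\in(0,\pi)$. Since $\cot$ is strictly decreasing on $(0,\pi)$, both arguments give values larger than $\cot\phi$, so $g(\nu-1)>0$ and $g(\kappa)>0$, and in particular $g(\kappa)$ never vanishes in the denominator.

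For step (ii), the derivative is
\begin{equation}
    \frac{\partial}{\partial a} g(a) \;=\; -\frac{\phi}{\nu}\,\csc^2\!\Bigl(\tfrac{a}{\nu}\phi\Bigr),
\end{equation}
which is strictly negative for every $\phi\in(0,\pi)$ and every $a\in(0,\nu)$. Hence $g$ is strictly decreasing on $(0,\nu)$. Putting the two steps together: when $\kappa+1<\nu$, i.e.\ $\kappa<\nu-1$, we have $g(\kappa)>g(\nu-1)>0$ and therefore $h^*(\lambda)<1$; when $\kappa+1=\nu$, $g(\kappa)=g(\nu-1)$ and $h^*(\lambda)=1$; when $\kappa+1>\nu$, $0<g(\kappa)<g(\nu-1)$ and $h^*(\lambda)>1$.

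There is no real obstacle here; the only thing to be careful about is the range of admissibility of $a$. Within the stated hypothesis $\kappa<1$ (together with $\nu>1$ from the paper's setting) both $a=\kappa$ and $a=\nu-1$ automatically lie in $(0,\nu)$, so the cotangent arguments stay in $(0,\pi)$ and the monotonicity and positivity arguments apply verbatim. If one wanted to extend the statement to $\kappa\in[1,\nu)$ (outside the $O(N^{-1})$-accuracy regime where~\eqref{eq:Wishart_opt_alg_lemma} was derived), the same derivative computation still works as long as $\kappa<\nu$, which is the only potentially delicate point worth flagging.
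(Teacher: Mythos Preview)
Your proof is correct and follows essentially the same approach as the paper: both rewrite $h^*(\lambda)$ as a ratio $g(\nu-1)/g(\kappa)$ of a single positive function $g(a)=\cot(\tfrac{a}{\nu}\phi)-\cot\phi$, observe positivity from the monotonicity of $\cot$ on $(0,\pi)$, and deduce the three cases from strict monotonicity of $g$ in $a$. The only cosmetic difference is that the paper invokes monotonicity of $\cot$ directly to get that $g$ is decreasing, whereas you compute the derivative explicitly.
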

\begin{proof}
Observe that both the nominator and denominator of~\eqref{eq:Wishart_opt_alg_lemma} have the form $g(a,\phi)\equiv\cot(a\phi)-\cot(\phi)$ with $a\in(0,1), \; \phi\in(0,\pi)$. Since $\cot(\phi)$ is strictly decreasing on $(0,\pi)$, the function $g(a,\phi)$ is 1) positive 2) at fixed $\phi$ is strictly decreasing in $a$. Thus, for the ratio of such functions, we have
\begin{equation}
    \frac{g(a,\phi)}{g(b,\phi)}\begin{cases}
        <1,& a>b,\\
        =1,& a=b,\\
        >1,& a<b.
    \end{cases}
\end{equation}
The above is equivalent to the statement of the lemma once $a=\tfrac{\nu-1}{\nu}, \; b=\tfrac{\kappa}{\nu}$.
\end{proof}

The above result shows that the behavior of the sign of $h(\lambda)-1$, which indicates overlearning/underlearning, is exactly the same in the Wishart and Circle models. This makes us believe that overlearning transition can be a quite general phenomenon going beyond two data models considered in the current work.

\section{Experiments}\label{sec:experiments}
\subsection{Figure~\ref{fig:model_equivalence}: details and discussion.}\label{sec:figure_1_details} 
Let us start by describing the experiment setting and details. Both KRR and GF plots use optimally scaled regularization $\eta$ and time $t$, as derived in Section~\ref{sec:scaling}. For all three data models, we consider ideal power-law population spectrum: $\lambda_l=l^{-\nu},\; c_l^2=l^{-\kappa-1}$ (truncated at $P=4\times10^4$ due to computational limitations), and an adapted version $\lambda_l=(2(|l|+1))^{-\nu}, |c_l|^2=(2(|l|+1))^{-\kappa-1}, \; l\in\mathbb{Z}$ for Circle model. The extra factor of $2$ here is needed to asymptotically align population spectrum at small $\lambda$: for all three models we have $\mu_c([0,\lambda])\to\tfrac{1}{\kappa}\lambda^{\frac{\kappa}{\nu}}$ and $\mu_\lambda([\lambda,\infty))\to\lambda^{-\frac{1}{\nu}}$ as $\lambda\to0$.  

The figure contains 3 types of data that are computed in different ways. The first type is scatter plot markers and corresponds to the estimation of generalization loss via direct simulation. For Wishart and Cosine Wishart (see Section~\ref{sec:Cosine_Wishart}) models, this amounts to sampling empirical kernel matrix $\Kv$ and observation vector $\yv$, calculating the generalization error for the resulting sampled realization, and finally averaging the result over $n=100$ repetitions of the above procedure to estimate the expectation over training dataset $\mathcal{D}_N$ in~\eqref{eq:expected_pop_loss}. Due to computational limitations, we were able to execute this procedure for sizes of empirical kernel matrix only up to $N=4\times10^3$. For Circle model, Theorem~\ref{th:circle_loss_functional} gives an exact value of the expected loss~\eqref{eq:expected_pop_loss} (i.e. no approximations are made during the derivation). Since for the considered type of population spectra the N-perturbations~\eqref{eq:circle_N_deformation} can be expressed in terms of Hurwitz zeta function (see~\eqref{eq:circle_pop_spec_perturbation2}), we compute analytically all the terms in~\eqref{eq:circle_loss_functional}. This way, we are able to reach much larger values of $N$ for the circle model. 

Let us also comment on the estimation of the generalization error~\eqref{eq:expected_pop_loss} of a given realization of Wishart or Cosine Wishart models. The expectation over $\xv$ requires scalar products
\begin{equation}
    \langle K(\xv_i,\xv),K(\xv,\xv_j)\rangle = \sum_l \lambda_l^2 \phi_l(\xv_i)\phi_l(\xv_j), \qquad \langle f^*(\xv),K(\xv,\xv_j)\rangle = \sum_l \lambda_l c_l \phi_l(\xv_j)
\end{equation}
for points $\xv_i,\xv_j$ from sampled dataset $\mathcal{D}_N$. The expressions above can be used to calculate these scalar products experimentally since we have access to $\lambda_l,c_l$ and already sampled feature values $\phi_l(\xv_i)$.  

The second type of data is depicted with solid lines and corresponds to the direct calculation of NMNO loss~\eqref{eq:nmno}, which becomes a sum for discrete population spectrum. Assuming $\lambda_l$ are sorted in descending order, NMNO loss becomes
\begin{equation}
    L^{(\mathrm{nmno})}[h] = \frac{1}{2} \sum_{l=1}^N \left[c_l^2 (1-h(\lambda_l))^2 + \frac{\sigma^2}{N}h(\lambda_l)^2\right],
\end{equation}
which we can easily compute for quite large values of $N$.

Finally, the third type of data is $N\to\infty$ loss asymptotic $L=C N^{-\#}(1+o(1))$ depicted with dashed lines. The constant $C$ we compute analytically, similarly to limiting expressions~\eqref{eq:circle_loss_nonsaturated}, \eqref{eq:circle_saturated_full_error_final} for noiseless Circle model and~\eqref{eq:Wishart_loss_fnuctional_through_phi} for Wishart model. For that, we 1) recall (see Section~\ref{sec:scaling}) the loss localization scale $s_\mathrm{loc}$ for the considered algorithms: $s_\mathrm{loc}=\tfrac{\nu}{\kappa+1}$ for GD and non-saturated KRR, and $s_\mathrm{loc}\in\{0,\tfrac{\nu}{2\nu+1}\}$ for saturated KRR 2) and then get the limiting expressions of NMNO loss functional~\eqref{eq:nmno}.

For GF, we have algorithm profile $h_t(\lambda)=1-e^{-t\lambda}$ and loss localization scale is $s_\mathrm{loc}=\tfrac{\nu}{\kappa+1}$. Thus, we make a change of variables $\lambda=\lambda'N^{-\frac{\nu}{\kappa+1}}$ and $t=t' N^{\frac{\nu}{\kappa+1}}$, leading to the following limiting loss
\begin{equation}\label{eq:NMNO_GF_limit_form}
    L^{(\mathrm{nmno})}[h_t] = \frac{1}{2}N^{-\frac{\kappa}{\kappa+1}}\frac{1}{\nu}\int_{0}^\infty\left[e^{-2t'\lambda'}(\lambda')^{\frac{\kappa}{\nu}}+\sigma^2(1-e^{-t'\lambda'})^2(\lambda')^{-\frac{1}{\nu}}\right]\frac{d\lambda'}{\lambda'}.
\end{equation}
Note that the integral above converges both at $\lambda'\to0$ and $\lambda'\to\infty$, which reflects that our change of variables has used the correct loss localization scale. The integral in~\eqref{eq:NMNO_GF_limit_form} can be computed either numerically or analytically by reducing it to Gamma functions.

For KRR algorithm profile is $h_\eta(\lambda)=\frac{\lambda}{\lambda+\eta}$ and loss localization changes between saturated and non-saturated phases. In the non-saturated phase, we make a change of variables $\lambda=\lambda'N^{-\frac{\nu}{\kappa+1}}$ and $\eta=\eta'N^{-\frac{\nu}{\kappa+1}}$, leading to
\begin{equation}\label{eq:NMNO_non-saturated_KRR_limit_form}
    L^{(\mathrm{nmno})}[h_\eta] = \frac{1}{2}N^{-\frac{\kappa}{\kappa+1}}\frac{1}{\nu}\int_0^\infty\frac{(\eta')^2(\lambda')^{\frac{\kappa}{\nu}}+\sigma^2(\lambda')^{2-\frac{1}{\nu}}}{(\lambda'+\eta')^2}\frac{d\lambda'}{\lambda'}, \quad \kappa<2\nu.
\end{equation}
In the saturated phase, the noise part localize at $s_\mathrm{loc}=\frac{\nu}{2\nu+1}$, which coincides with the scale of regularization $\eta=\eta'N^{-\frac{\nu}{2\nu+1}}$. The signal part localizes at $s_\mathrm{loc}=0$ where the loss functional remains discrete, and we can use a perturbative expression for the learning algorithm $h_\eta(\lambda)\approx\frac{\lambda}{\eta}$. This leads to 
\begin{equation}\label{eq:NMNO_saturated_KRR_limit_form}
    L^{(\mathrm{nmno})}[h_\eta] = \frac{1}{2}N^{-\frac{2\nu}{2\nu+1}}\left[\frac{1}{\nu}\int_0^\infty\sigma^2\frac{(\lambda')^{2-\frac{1}{\nu}}}{(\lambda'+\eta')^2}\frac{d\lambda'}{\lambda'} + (\eta')^2\sum_l \frac{|c_l|^2}{\lambda_l^2}\right], \quad \kappa>2\nu.
\end{equation}
As for the GF, the integrals in~\eqref{eq:NMNO_non-saturated_KRR_limit_form} and~\eqref{eq:NMNO_saturated_KRR_limit_form} can be computed either numerically or analytically by reducing them to Beta function integrals with a change of integration variable $z=\tfrac{\eta'}{\lambda'+\eta'}$.

\paragraph{Discussion.} The main conclusion of the figure is, indeed, the match between the actual values of the loss for a given model (scatter markers) and NMNO values (solid lines) for large enough $N$. This validates experimentally the statement of Theorem~\ref{th:equiv}. Importantly, the Cosine Wishart model is not covered by our theory but still demonstrates the equivalence. We interpret it as an indication that the equivalence holds for a broader class of models. At the moment, we are not ready to give any potential candidates for such class of models since it requires a different set of tools than the one used in this work. 

Observe that the match between the actual loss of a data model and its NMNO analog happens at quite small $N$ for big value of target exponent $\kappa=5$ (two right subfigures) whereas much larger $N$ are required for small $\kappa=0.5$. This is a general manifestation of the fact that slower power-laws require much larger sizes $N$ for asymptotic $N\to\infty$ behavior to start working. For instance, NMNO loss~\eqref{eq:nmno} ignores the error associated with unlearned signal at eigenvalues $\lambda<\lambda_\mathrm{min}$. The contribution of this part can be roughly estimated as $\mu_c([0,\lambda_N])/\mu_c([0,1])\approx N^{-\kappa}=e^{-\log{N}\kappa}$ which becomes negligible at exponentially large values of dataset size $N\gg e^{\frac{1}{\kappa}}$.

Finally, we note that only the third subplot of the figure (corresponding to saturated KRR) has different loss asymptotics for the Circle model on one side and Wishart and Cosine Wishart on the other side. This difference is due to the respective population spectra which, although matching asymptotically at $\lambda\to0$, are different at the head of the spectrum $\lambda\sim1$. Then, since the loss for saturated KRR localizes at the scale $s=0$ (i.e. $\lambda\sim 1$) as demonstrated in Figure~\ref{fig:NMNO_scalings}, the difference between population spectra at $\lambda\sim1$ start to affect the loss values. On the contrary, when there is no saturation (the rest of the plots on the figure), the loss localizes on some scale $s>0$ where the two population spectra become (asymptotically) the same. This is the reason why 1,2,4 subplots have a single common loss asymptotic colored in grey.    

\subsection{Figure~\ref{fig:optimal_algorithms}: details and discussion.}\label{sec:figure_3_details}
This figure's primary focus is the comparison of different algorithms. Profiles of each of 4 considered algorithms were obtained as follows: interpolation is simply $h(\lambda)=1$, the optimal algorithm was calculated from~\eqref{eq:circle_noiseless_optalg}, optimally stopped GF was obtained by first evaluating the asymptotic loss~\eqref{eq:circle_loss_nonsaturated} on a wide and dense grid of $t$ values subsequently picking $t^*$ as the optimal among those values, and for optimally regularized KRR the same procedure (including both negative and positive regularization values) was used.

On the first plot, we validate the asymptotic loss value $L^{(\mathrm{asym})} = C N^{-\kappa}$ (dashed lines) computed from~\eqref{eq:circle_loss_nonsaturated} with exact loss value at finite $N$ given by~\eqref{eq:circle_loss_functional} (scatter markers) computed similar to the respective values from Figure~\ref{fig:model_equivalence}. On the remaining 3 plots we already exclusively use asymptotic expression~\eqref{eq:circle_loss_nonsaturated}.    

\paragraph{Discussion.} The first plot indeed confirms that asymptotic expression $L^{(\mathrm{asym})} = C N^{-\kappa}$ given by~\eqref{eq:circle_loss_nonsaturated} can accurately describe the loss values with the correct constant $C$. 

The second plot examines the behavior of the constant $C$ across the range of target exponent values $\kappa\in(0,2\nu)$, while saturated values $\kappa>2\nu$ have different rate $N^{-2\nu}$ and thus not considered on the figure. In particular, in the vicinity of the saturation transition the constant starts to blow up $C\to \infty$ as $\kappa\to2\nu-0$. The overlearning transition at $\kappa=\nu-1$ is also quite visible: for $\kappa<\nu-1$ all the considered algorithms have very close loss values, while for $\kappa>\nu-1$ a significant gap appears between optimal KRR and the optimal algorithm on one side and interpolation and optimal GF on the other side. This demonstrates that significant overlearning $h(\lambda)>1$ is required for strong performance. Note that GF profile $h_t(\lambda)=1-e^{-t\lambda}\leq1$ which forces $t^*=\infty$ for $\kappa>\nu-1$, thus explaining exact match between green and grey lines for $\kappa<\nu-1$. 

Actually, it is quite difficult to distinguish loss curves of different algorithms (except for the overlearning gap discussed above) on the two left plots. However, there is a well-defined (i.e. not due to numerical errors) difference between the algorithms (high zoom is required to see it!). Such a small difference seems to be an intrinsic property of the Circle model: we have tried wide ranges of $\kappa$ and $\nu$ and observed that in all of them, the relative difference between different algorithms was of the order of $1\%$ or smaller.       

\subsection{Cosine Wishart model}\label{sec:Cosine_Wishart}
The model is constructed as follows. Given a population spectrum $\lambda_l,c_l\in \mathbb{R}, \; l=0,1,2,3,\ldots$, the features $\phi_l(x)$ with $x\in\mathcal{S}$ are defined as
\begin{equation}\label{eq:cosine_model_features}
    \phi_l(x)= 
    \begin{cases}
    1, \quad &l=0, \\
    \sqrt{2}\cos(lx), \quad &l\geq1, 
    \end{cases}
\end{equation}
and the training dataset inputs $x_i\in\mathcal{D}_N$ are sampled i.i.d. from a uniform distribution on the circle $\mathcal{S}$. In particular, \eqref{eq:cosine_model_features} ensures that $\mathbb{E}_x[\phi_l(x)\phi_{l'}(x)]=\delta_{ll'}$.

The Cosine Wishart model can be thought as something intermediate between our main Wishart and Circle models. On the one side, the population quantities of Cosine Wishart models are the same as those for the Circle given in~\eqref{eq:circle_population_eigendecomposition}, except for the presence of $e^{\iu l(x+x')}$ terms in the kernel $K(x,x')$ making it non-translation invariant. This last aspect does not move Cosine Wishart model too much from Circle, as the respective empirical kernel matrix $\Kv$ would still be almost diagonalized by discrete Fourier harmonics $e^{\iu \frac{2\pi ki}{N}}$ if the inputs were forming a regular lattice. The more important difference is that inputs $x_i$ of the Cosine Wishart model are sampled i.i.d., which completely eliminates the possibility to analytically diagonalize $\Kv$, which is a core step of the Circle model solution.

On the other side, for a given population spectrum $\lambda_l,c_l$, the structure of Cosine Wishart model and Wishart model are similar in the sense that in both cases, random realizations of empirical kernel matrix $\Kv$ are obtained by sampling components of the feature matrix $\Phiv$ with first to moments being $\mathbb{E} \Phi_{li}=0$ and $\mathbb{E} \Phi_{li}^2=1$, except for $\mathbb{E} \Phi_{0i}=1$ for Cosine Wishart which only amounts to a single spike in $\Kv$. However, the most important difference is that for Wishart model, all entries of $\Phiv$ are independent, while for Cosine Wishart there is a very strong correlation of entries $\Phi_{li} = \sqrt{2}\cos(lx_i)$ with different $l$ but the same $i$. This correlation turns out to be crucial and makes the Stieltjes transform $r(z)=\Tr \Big[\big(\Kv-z\Iv\big)^{-1}\Big]$ no longer deterministic, which was the core assumption for our analysis of Wishart model based on the fixed-point equation~\eqref{eq:fixed_point_main}. 

\end{document}